\documentclass[10pt]{article} % For LaTeX2e

\usepackage[preprint]{rlj} % Should be uncommented for preprint versions

\usepackage{amssymb}            % Defines common symbols like \mathbb R
\usepackage{amsthm}
\usepackage{mathtools}          % Extends amsmath, providing common math tools
\usepackage{mathrsfs}           % Enables \mathscr, which can work in cases that \mathcal does not
\usepackage{graphicx}           % For including images
\usepackage{subcaption}         % Allows for the use of subfigures and subcaptions
\usepackage[space]{grffile}     % For spaces in image names
\usepackage{url}                % For displaying URLs
\usepackage{lipsum}             % For placeholder text
\usepackage{algorithm}
\usepackage{algpseudocode}
\usepackage{comment}
\usepackage{float}
\newtheorem{assumption}{Assumption}[section]
\newtheorem{theorem}{Theorem}[section] 
\newtheorem{Lemma}{Lemma}[section]

\newtheorem{corollary}{Corollary}[section]
\allowdisplaybreaks
\title{Policy Gradient Primal-Dual Method for Safe Reinforcement Learning from Human Feedback}

\setrunningtitle{Policy Gradient Primal-Dual Method for Safe Reinforcement Learning from Human Feedback}

\author{Qiang Liu\textsuperscript{1}, Adrienne Kline \textsuperscript{1,2,3,4}, Ermin Wei\textsuperscript{1,5}}

    \emails{qiang.liu@u.northwestern.edu, \ \{adrienne.kline, ermin.wei\}@northwestern.edu}

\affiliations{
$^{1}$\textbf{Department of Electrical and Computer Engineering, Northwestern University, Evanston, IL, USA}\\
$^{2}$\textbf{Center for Artificial Intelligence, Bluhm Cardiovascular Institute, Northwestern Medicine, Chicago, IL, USA}\\
$^{3}$\textbf{Department of Surgery, Northwestern University, Chicago, IL, USA}\\
$^{4}$\textbf{Department of Radiology, Northwestern University, Chicago, IL, USA; and Xtasis Inc., Chicago, IL, USA}\\
$^{5}$\textbf{Department of Industrial Engineering and Management Sciences
Department, Northwestern University, Evanston, IL, USA}
\par % If including additional comments like below, use \par to add some whitespace. 
}

\contribution{
    We propose algorithms for safe reinforcement learning from human feedback (RLHF) with global rate guarantee, one that estimates the advantage function from human feedback within a primal dual framework and another that uses a zeroth order approach to approximate policy gradients. 
    }
    {
    Prior work \citep{dai2023safe} introduced Safe Reinforcement Learning from Human Feedback (Safe RLHF) and proposed an algorithm without rate guarantee. 
    }

\contribution{
   Our work investigates safe RLHF in the discounted infinite horizon constrained Markov decision process (CMDP) setting without relying on reward inference and establishes global convergence guarantees.
    }
    {
    Prior work \citep{zhang2024zeroth} investigates RLHF  in the episodic MDP setting without reward inference, and established convergence to a stationary policy.
    }

\keywords{reinforcement learning from human feedback; constrained Markov decision process; policy gradient method; primal-dual method; zeroth-order method} % Your keywords

\summary{\emph{Safe Reinforcement Learning from Human Feedback} (Safe RLHF) has recently achieved empirical success in developing helpful and harmless large language models by decoupling human preferences regarding helpfulness and harmlessness. Existing approaches typically rely on fitting fixed horizon reward models from human feedback and have only been validated empirically. In this paper, we formulate safe RLHF  as an infinite horizon discounted Constrained Markov Decision Process (CMDP), since humans may interact with the model over a continuing sequence of interactions rather than within a single finite episode. We propose  two Safe RLHF algorithms that do not require reward model fitting and, in contrast to prior work assuming fixed-length trajectories, support flexible trajectory lengths for training. Both algorithms are based on the primal-dual method and achieve global convergence guarantees with polynomial rates in terms of policy gradient iterations, trajectory sample lengths, and human preference queries. To the best of our knowledge, this is the first work to study infinite horizon discounted CMDP under human feedback and establish global, non-asymptotic convergence.
}

\begin{document}

\makeCover  % Create the cover page
\maketitle  % Make the title section

\begin{abstract}
\emph{Safe Reinforcement Learning from Human Feedback} (Safe RLHF) has recently achieved empirical success in developing helpful and harmless large language models by decoupling human preferences regarding helpfulness and harmlessness. Existing approaches typically rely on fitting fixed horizon reward models from human feedback and have only been validated empirically. In this paper, we formulate safe RLHF  as an infinite horizon discounted Constrained Markov Decision Process (CMDP), since humans may interact with the model over a continuing sequence of interactions rather than within a single finite episode. We propose  two Safe RLHF algorithms that do not require reward model fitting and, in contrast to prior work assuming fixed-length trajectories, support flexible trajectory lengths for training. Both algorithms are based on the primal-dual method and achieve global convergence guarantees with polynomial rates in terms of policy gradient iterations, trajectory sample lengths, and human preference queries. To the best of our knowledge, this is the first work to study infinite horizon discounted CMDP under human feedback and establish global, non-asymptotic convergence.
\end{abstract}

%%%%%%%%%%%%%%%%%%%%%%%%%%%%%%%%%%%%%%%%%%%%%%%%%%%%%%%%%%%%%%%%
%% Section: Submission of papers to RLJ/RLC
%%%%%%%%%%%%%%%%%%%%%%%%%%%%%%%%%%%%%%%%%%%%%%%%%%%%%%%%%%%%%%%%
\section{Introduction}
Large language models (LLMs) have achieved remarkable success across a wide range of natural language processing tasks \citep{guo2025deepseek,achiam2023gpt}, but they face the challenge of generating inaccurate information and producing biased, misleading, or harmful responses from a human perspective \citep{bai2022training,sun2023principle}. To address these challenges, reinforcement learning from human feedback (RLHF), which learns from human preference signals collected over sampled trajectories of model interaction, has emerged as a widely adopted approach for aligning model outputs with human preferences for helpfulness and harmlessness \citep{askell2021general,bai2022training,ouyang2022training}. However, the objectives of improving helpfulness and ensuring harmlessness often come into conflict in practice \citep{ganguli2022red,bai2022training}. To address this issue, \citet{dai2023safe} introduced a novel framework called {\it Safe Reinforcement Learning from Human Feedback} (Safe RLHF), which effectively tackles the problem.

A key contribution of Safe RLHF lies in decoupling human preferences during data annotation by creating two parallel datasets from the same set of Question and Answer (QA) pairs: a {\it helpfulness}-related dataset, where responses are compared by how well they address the prompt, and a {\it harmlessness}-related dataset, where responses are compared by relative harmlessness. In addition, a binary harm label is encoded for each prompt, designating harmful responses as $``+1"$ and harmless responses as $``-1"$. 

However, their approach relies on fitting both a reward model, which evaluates how helpful or preferred a response is, and a cost model, which evaluates how unsafe or undesirable a response is; these learned models serve as surrogate signals for policy optimization but are subject to issues such as double misspecification, evaluation without ground-truth references, distributional shift, and overfitting during joint reward–policy training \citep{casper2023open}. 

 In addition to addressing the tradeoff between helpfulness and harmlessness, another challenge in RLHF is the potentially variable trajectory length. RLHF can be naturally formulated as an infinite horizon discounted MDP, where the state encodes the prompt and the evolving interaction history, and the action corresponds to the model’s next generation decision, so the resulting interaction trajectory may be arbitrarily long. Discounting further captures the intuition that immediate outcomes typically matter more than distant ones.
 However, most of the RLHF literature focuses on episodic Markov Decision Process (MDP), where the agent interacts with the environment in episodes of fixed length \citep{zhan2023provableoff,zhan2023provable,zhang2024reinforcement,chen2022human}.
For discounted infinite horizon MDP, \citep{du2024exploration} samples trajectories with a specified termination probability, whereas \citep{dai2023safe} samples trajectories of fixed length.  How to effectively incorporate trajectories of arbitrary length remains an open problem.

In this paper, we investigate safe RLHF in the discounted infinite horizon constrained Markov
decision process (CMDP) setting without assuming a fixed horizon reward model. We propose two algorithms for safe reinforcement learning from human feedback (RLHF) with global convergence guarantees. Both algorithms employ the same two-stage human annotation strategy as the empirical work of \citep{dai2023safe} in each policy gradient iteration, but do not rely on a reward reference.

\textbf{Summary of contribution:}

The main contributions of this work are two algorithms for safe reinforcement learning from human feedback (RLHF). Both algorithms establish global convergence without relying on reward references or additional structural assumptions.

The proposed two algorithms are both built on the primal--dual algorithms of \citet{ding2020natural} and \citet{ding2022ACC}, but leverage human feedback in different ways: the first uses human feedback to estimate the advantage function for the primal update, while the second, following the inverse-preference idea of \citet{zhang2024zeroth}, adopts a zeroth-order approach to approximate gradients directly from human feedback. Our methods for both advantage estimation and gradient approximation are inspired by \citet{zhang2024zeroth}, who introduced an approach for estimating value differences through the inverse of preference functions and established convergence to stationary points in episodic MDPs. In contrast, our framework establishes global convergence in infinite-horizon discounted MDPs without requiring additional structural assumptions.

 To the best of our knowledge, this is the first work to investigate RLHF in discounted infinite horizon CMDP. The proposed algorithms allow human evaluators to compare trajectories of varying lengths across policy gradient iterations, whereas most existing studies assume fixed-length trajectories. Moreover, the global convergence results are established without assuming linear reward parameterization, a common requirement in much of the RLHF literature \citep{zhan2023provable,du2024exploration}.

\section{Preliminary}

\textbf{Infinite Horizon Discounted CMDP: }

An infinite-horizon discounted constrained Markov decision process (CMDP) is defined by the tuple
\[
\mathrm{CMDP}(S,A,P,r,g,b,\gamma,\rho),
\]
where \(S\) is a finite state space, \(A\) is a finite action space, \(P(s' \mid s,a)\) denotes the transition probability from state \(s\) to state \(s'\) under action \(a\), and \(\rho\) is the initial state distribution. In our RLHF formulation, \(r: S \times A \to [0,1]\) is the per-step reward function measuring helpfulness, and \(g: S \times A \to [0,1]\) is the per-step utility function measuring harmlessness or safety. The constant \(b\) specifies the minimum acceptable level of discounted harmlessness that a policy must achieve, and \(\gamma \in [0,1]\) is the discount factor.

A  stochastic policy is a mapping $\pi: S \to \Delta_A$, assigning each state a probability distribution over actions, so that at time $t$ the agent selects $a_t \sim \pi(\cdot \mid s_t)$. Let $\Pi$ denote the set of all such policies. When a policy $\pi \in \Pi$ is combined with the initial distribution $\rho$, it induces a distribution over infinite trajectories
\(
\tau = \{(s_t, a_t)\}_{t=0}^\infty,
\)
where the process evolves according to $s_0 \sim \rho$, $a_t \sim \pi(\cdot \mid s_t)$, and $s_{t+1} \sim P(\cdot \mid s_t,a_t)$ for all $t \ge 0$.

The discounted visitation distribution $d^{\pi}_{s_0}$ of a policy $\pi$ with 
initial state $s_0$ and its expectation with respect to an initial state 
distribution $\rho$ are defined as
\begin{equation}
d^{\pi}_{s_0}(s) 
\;=\; (1-\gamma) \sum_{t=0}^{\infty} \gamma^t 
\, P^{\pi}(s_t = s \mid s_0),
\qquad
d^{\pi}_{\rho}(s) 
\;=\; \mathbb{E}_{s_0 \sim \rho} \!\left[ d^{\pi}_{s_0}(s) \right],
\label{eq:discounted-visitation}
\end{equation}
where $P^{\pi}(s_t = s \mid s_0)$ denotes the probability that the process 
occupies state $s$ at time $t$ when following policy $\pi$ starting from 
state $s_0$.

\textbf{Finite Horizon Trajectory Reward and Utility:}
Given a trajectory of length \(H\), \(\tau := \{(s_t, a_t)\}_{t=0}^{H}\), we define the cumulative reward and utility as the discounted sums of the corresponding per-step quantities. Specifically, \(R_r^H(\tau) = \sum_{t=0}^{H} \gamma^t r(s_t, a_t)\), \(R_g^H(\tau) = \sum_{t=0}^{H} \gamma^t g(s_t, a_t)\), where \(r(s_t, a_t)\) and \(g(s_t, a_t)\) denote the per-step reward and utility at state--action pair \((s_t, a_t)\).

Restricting the trajectory length to \(H\), starting from $s\in \mathcal{S}$ and evolve under \(\pi\), the finite horizon value functions of reward and utility, denoted \(V_r^{\pi,H}(s)\) and \(V_g^{\pi,H}(s)\), are defined as the expected cumulative reward and utility of trajectories: \(V_r^{\pi,H}(s):=\mathbb{E}\!\left[\sum_{t=0}^H \gamma^t r(s_t,a_t)\,\middle|\,\pi,\,s_0=s\right]\), \(V_g^{\pi,H}(s):=\mathbb{E}\!\left[\sum_{t=0}^H \gamma^t g(s_t,a_t)\,\middle|\,\pi,\,s_0=s\right]\).
If the initial state is drawn from the distribution $\rho$, i.e., $s_0\sim\rho$, we define: \(V_r^{\pi,H}(\rho):=\mathbb{E}_{s_0\sim \rho}\!\left[\mathbb{E}\!\left[\sum_{t=0}^H \gamma^t r(s_t,a_t)\,\middle|\,\pi,\,s_0\right]\right]\), \(V_g^{\pi,H}(\rho):=\mathbb{E}_{s_0\sim \rho}\!\left[\mathbb{E}\!\left[\sum_{t=0}^H \gamma^t g(s_t,a_t)\,\middle|\,\pi,\,s_0\right]\right]\).

Let \(G(H)=\frac{1-\gamma^{H+1}}{1-\gamma}\). Since \(r(s,a)\in[0,1]\) and \(g(s,a)\in[0,1]\), it follows that \(R_r^H(\tau)\), \(R_g^H(\tau)\), \(V_r^{\pi,H}(\rho)\), and \(V_g^{\pi,H}(\rho)\) are all bounded within \([0,G(H)]\).

\textbf{Link function:} In discrete choice theory, a link function translates differences in latent utility between alternatives into observable choice probabilities. Within the random utility framework, each option’s utility is modeled as a systematic component plus an unobserved random error, and the assumed distribution of this error determines the functional form of the link. If the error is normally distributed, the probability of choosing an option is given by the normal cumulative distribution, producing the probit model. If the error follows a Gumbel (Type I extreme value) distribution, the resulting link is logistic, yielding the logit model, closely related to the classical Bradley–Terry paired comparison model \cite{bradley1952rank}. More generally, alternative error distributions (e.g., Weibull, Cauchy, or generalized extreme value families) imply different link functions, each mapping systematic utility differences into choice probabilities \cite{ben1985discrete}, \cite{koppelman2006self}, \cite{wang2011generalized}.

When a human evaluator compares two trajectories, \(\tau^0\) and \(\tau^1\), with associated latent rewards \(R(\tau^0)\) and \(R(\tau^1)\), the most commonly used model in the RLHF literature for the probability that \(\tau^1\) is preferred to \(\tau^0\) is the Bradley--Terry model:
\[
\mathbb{P}(o = 1) = \mathbb{P}(\tau^1 \succ \tau^0)
= \sigma\!\big(R(\tau^1) - R(\tau^0)\big)
= \frac{\exp(R(\tau^1))}{\exp(R(\tau^0)) + \exp(R(\tau^1))},
\]

\textbf{Human Feedback:}
We rely on human evaluators to provide feedback rather than directly computing trajectory-level rewards. Human evaluators will be provided pairs of trajectories or individual trajectories. 

For a pair of trajectories,  human evaluators will compare them to identify which trajectory is more {\it helpful} (i.e., associated with a higher reward value) and which is more {\it harmless} (i.e., associated with a higher utility value).
In each episode, the agent can choose two trajectories $\tau_0$ and $\tau_1$ to query human preference about helpfulness and harmlessness and get two one-bit feedback $o_r \in \{0,1\}$,$o_g \in \{0,1\}$. We assume the preference $o_r$ and $o_g$ are generated according to a known preference model where the probability of the outcome between two trajectories is determined by the difference in their rewards and harmlessness.

The model uses a link function $\sigma_r : \mathbb{R} \to [0,1]$ to map the differences of rewards to actual probabilities, i.e.,
\begin{equation*}
\mathbb{P}(o_r=1)=\mathbb{P}(R_r^H(\tau_1) > R_r^H(\tau_0)) = \sigma_r\big(R_r^H(\tau_1) - R_r^H(\tau_0)\big),
\end{equation*}
where \( R_r^H(\tau_1) > R_r^H(\tau_0) \) denotes the event in which the human feedback indicates a preference for \(\tau_1\) over \(\tau_0\) in terms of helpfulness, i.e., $o_r=1$.

The model uses another link function $\sigma_g : \mathbb{R} \to [0,1]$ to map the differences of harmlessness to actual probabilities, i.e.,
\begin{equation*}
\mathbb{P}(o_g=1)=\mathbb{P}(R_g^H(\tau_1) > R_g^H(\tau_0)) = \sigma_g\big(R_g^H(\tau_1) - R_g^H(\tau_0)\big),
\end{equation*}
where \( R_g^H(\tau_1) > R_g^H(\tau_0) \) denotes the event in which the human feedback indicates a preference for \(\tau_1\) over \(\tau_0\) in terms of harmlessness, i.e., $o_g=1$.

Link functions are typically used to quantify human preferences between two trajectories. However, in this paper, we further require human evaluators to directly assess a given trajectory harmful or not,  which is important for the dual update of our algorithms. 
Therefore, we introduce the third link function $\sigma_c : \mathbb{R} \to [0,1]$ to estimate the absolute utility of a single trajectory, as opposed to utility differences across a pair of trajectories. This type of evaluation is reasonable as it closely resembles the setting of Reinforcement Learning with Once-per-Episode Feedback \citep{chatterji2021theory}, where the learner receives binary feedback at the end of an episode.  Equivalently, it can be viewed as if there exists a virtual trajectory $\tau_v$ that lies on the boundary of the constraint, i.e., $R_g^H(\tau_v) - b = 0$, and human evaluators compare each observed trajectory against this virtual trajectory.

In each episode, the agent chooses one trajectory $\tau_0$ to query human preference: one-bit feedback $o_c \in \{0,1\}$. We assume the preference $o_c$ is generated according to a known preference model where the probability of the outcome is determined by the harmlessness value. 

The model uses a third link function $\sigma_c : \mathbb{R} \to [0,1]$ to map the harmlessness value to actual probabilities, i.e.,
\begin{equation*}
\mathbb{P}(o_c=1)= \mathbb{P}(R_g^H(\tau_0) - b>0) = \sigma_c\big(R_g^H(\tau_0) - b\big),
\end{equation*}
where $R_g^H(\tau_0) - b>0$ is the event that the human feedback thinks the trajectory is harmless, i.e., $o_c=1$.

Similar to \citep{zhang2024zeroth}, instead of using a specific link function, we consider a general class of link functions satisfying the following condition:

\begin{assumption}
\label{assum:link}
The link functions \(\sigma_c(\cdot)\), \(\sigma_r(\cdot)\), and \(\sigma_g(\cdot)\) in the preference model are assumed to be continuous, bounded in \([0,1]\), and strictly monotonically increasing on \( [-G(1/(1-\gamma)),\, G(1/(1-\gamma))] \), with
\[
\sigma_c(0)=\sigma_r(0)=\sigma_g(0)=\tfrac{1}{2}.
\]
We further assume that the link functions are Lipschitz continuous on \( [-G(1/(1-\gamma)),\, G(1/(1-\gamma))] \), and that their inverses are Lipschitz continuous on
\(
[\sigma_c(-G(1/(1-\gamma))),\, \sigma_c(G(1/(1-\gamma)))],\quad
[\sigma_r(-G(1/(1-\gamma))),\, \sigma_r(G(1/(1-\gamma)))],\quad
[\sigma_g(-G(1/(1-\gamma))),\, \sigma_g(G(1/(1-\gamma)))] ,
\)
respectively. For simplicity, we additionally assume that the inverse link functions \(\sigma_c^{-1}(\cdot)\), \(\sigma_r^{-1}(\cdot)\), and \(\sigma_g^{-1}(\cdot)\) share a common Lipschitz constant \(L\) on their respective domains.
\end{assumption}

For a scalar $a$, we define
\(
\operatorname{clip}[a,\sigma(-G(H)),\sigma(G(H))] := \min \{ \max(a, \sigma(-G(H))), \, \sigma(G(H)) \}.
\)

In this paper, we consider two types of complete parametrizations:

\textbf{Direct Parametrization:}
A direct way to parameterize a policy is by specifying its probability distribution,
\[
\pi_\theta^{direct}(a \mid s) = \theta_{s,a}, \quad \forall \, \theta \in \Delta_A^{|S|},
\]
where $\theta_s \in \Delta_A$ for every $s \in S$. That is, for each $s$, $\theta_{s,a} \geq 0$ for all $a \in A$, and $\sum_{a \in A} \theta_{s,a} = 1$. 
This class of policies is complete since it can represent any stochastic policy directly.

\textbf{Softmax Parametrization:}
A natural class of policies is parametrized by the softmax function,
\begin{equation}
\pi_{\theta}^{softmax}(a \mid s) = 
\frac{\exp(\theta_{s,a})}{\sum_{a' \in A} \exp(\theta_{s,a'})}, 
\quad \text{for all } \theta \in \mathbb{R}^{|S||A|}.
\end{equation}

\textbf{Problem Formulation:}

Let the symbol $\diamond$ represent $r$ or $g$.
 When the parameterized policy \(\pi_{\boldsymbol{\theta}}\) is clear from context, we write $\boldsymbol{\theta}$ to denote $\pi_{\boldsymbol{\theta}}$ and
\(
  V_\diamond^{\boldsymbol{\theta}}(\rho)\) to denote \( V_\diamond^{\pi_{\boldsymbol{\theta}}}(\rho).
\)

Given a policy $\pi_{\boldsymbol{\theta}}$, starting from any state $s\in \mathcal{S}$, the value functions $V_r^{\boldsymbol{\theta}}(s), V_g^{\boldsymbol{\theta}}(s)$ associated with the reward function $r$ and the utility function $g$ are defined as the expected total discounted reward and utility 
\(
V_r^{\boldsymbol{\theta}}(s)\;:=\mathbb{E}\!\Bigl[\sum_{t=0}^\infty \gamma^t\,r(s_t,a_t)\,\bigm|\,\pi_{\boldsymbol{\theta}},\,s_0=s\Bigr],
V_g^{\boldsymbol{\theta}}(s)\;:=\mathbb{E}\!\Bigl[\sum_{t=0}^\infty \gamma^t\,g(s_t,a_t)\,\bigm|\,\pi_{\boldsymbol{\theta}},\,s_0=s\Bigr].
\)

If the initial state is drawn from the distribution $\rho$, i.e., $s_0 \sim \rho$, the value functions are defined as:
\(
V_r^{\boldsymbol{\theta}}(\rho)\;:=\mathbb{E}_{s_0\sim\rho}\left[\mathbb{E}\!\Bigl[\sum_{t=0}^\infty \gamma^t\,r(s_t,a_t)\,\bigm|\,\pi_{\boldsymbol{\theta}},\,s_0\Bigr]\right],
V_g^{\boldsymbol{\theta}}(\rho)\;:=\mathbb{E}_{s_0\sim\rho}\left[\;\mathbb{E}\!\Bigl[\sum_{t=0}^\infty \gamma^t\,g(s_t,a_t)\,\bigm|\,\pi_{\boldsymbol{\theta}},\,s_0\Bigr]\right].
\)

The goal of our RL problem is to use only human feedback to identify the parameter 
$\boldsymbol{\theta}$ that maximizes the expected infinite horizon reward value, subject to a constraint on the expected infinite horizon utility value:
\begin{equation}\label{eq:primal}
\begin{aligned}
  \max_{{\boldsymbol{\theta}}\in{\Theta}}\quad & V_r^{{\boldsymbol{\theta}}}(\rho),\\
  \text{subject to}\quad & V_g^{{\boldsymbol{\theta}}}(\rho)\ge b.
\end{aligned}
\end{equation}
We define the Lagrangian as
\[
V_{L}^{\boldsymbol{\theta},\lambda}(\rho) := V_r^{\boldsymbol{\theta}}(\rho) + \lambda \big( V_g^{\boldsymbol{\theta}}(\rho) - b \big), 
\] where $\lambda$ is the Lagrange multiplier.  

\begin{assumption}[Slater condition]
\label{assum:slater}
There exists $\xi>0$ and $\bar\pi\in\Pi$ such that
\[
  V^{\bar\pi}_{g}(\rho) \;-\; b \;\ge\; \xi.
\]
\end{assumption}

As both parametrizations considered in this paper are complete, with Theorem 1 of \citep{paternain2019constrained} and the Slater condition, we can guarantee strong duality for the problem (\ref{eq:primal}).
Then problem (\ref{eq:primal}) can be reformulated as the following max–min problem:
\[
\operatorname*{maximize}_{\boldsymbol{\theta} \in \Theta} \; 
\operatorname*{minimize}_{\lambda \geq 0} 
\; V_{L}^{\boldsymbol{\theta},\lambda}(\rho),
\]
where $\boldsymbol{\theta}$ serves as the primal variable and $\lambda \geq 0$ is the Lagrange multiplier.

The associated dual function is defined as 
\[
V_{D}^{\lambda}(\rho) := \max_{\boldsymbol{\theta}} V_{L}^{\boldsymbol{\theta},\lambda}(\rho).
\]
Since the two types of parametrization considered in this paper are complete,
the optimal solution $\pi^\star$ to problem (\ref{eq:primal}) can be denoted by 
$\boldsymbol{\theta}^\star$ , and let the optimal dual variable be $\lambda^\star = \arg\min_{\lambda \ge 0} V_D^\lambda(\rho)$. We use the shorthand notation \( V_r^{\pi^\star}(\rho)=V_r^{\boldsymbol{\theta}^\star}(\rho) = V_r^\star(\rho) \) and \( V_D^{\lambda^\star}(\rho) = V_D^\star(\rho) \) whenever the meaning is clear from the context. The following lemma from  \citet{ding2020natural} holds true. 
\begin{Lemma}
\label{Lemma:strong_duality}
Let Assumption~\ref{assum:slater} hold. Then
 $V_r^{\star}(\rho) = V_D^{\star}(\rho);$ $0 \le \lambda^\star \le \frac{V_r^{\star}(\rho)  - V_r^{\bar{\pi}}(\rho)}{\xi}.$
\end{Lemma}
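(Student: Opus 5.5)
The first claim, $V_r^\star(\rho)=V_D^\star(\rho)$, is strong duality, and I will take it as already available. Under the Slater condition (Assumption~\ref{assum:slater}) and a complete parametrization, Theorem~1 of \citep{paternain2019constrained} gives zero duality gap for problem~(\ref{eq:primal}), as noted just before the statement. So the optimal primal value $V_r^\star(\rho)$ equals $\min_{\lambda\ge0}V_D^\lambda(\rho)=V_D^{\lambda^\star}(\rho)$. The only additional point is that the minimum of the dual function is attained. This holds because $V_D^\lambda(\rho)$ is a pointwise maximum of affine functions of $\lambda$, so it is convex and lower semicontinuous. It is also coercive on $[0,\infty)$: evaluating at $\bar\pi$ gives $V_D^\lambda(\rho)\ge V_r^{\bar\pi}(\rho)+\lambda\xi\to\infty$. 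Existence of $\lambda^\star$ will in any case follow from the bound in the second claim.

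For the bound on $\lambda^\star$, I will evaluate the dual function at the Slater policy. Since the parametrization is complete, $\bar\pi$ is representable by some $\bar{\boldsymbol\theta}$, and the definition of the dual function as a maximum over $\boldsymbol\theta$ gives
\[
V_D^{\lambda^\star}(\rho)\;\ge\;V_r^{\bar\pi}(\rho)+\lambda^\star\bigl(V_g^{\bar\pi}(\rho)-b\bigr)\;\ge\;V_r^{\bar\pi}(\rho)+\lambda^\star\xi .
\]
The second inequality uses $\lambda^\star\ge0$ and $V_g^{\bar\pi}(\rho)-b\ge\xi$. Substituting $V_D^{\lambda^\star}(\rho)=V_r^\star(\rho)$ from the first part gives
\[
\lambda^\star\xi\le V_r^\star(\rho)-V_r^{\bar\pi}(\rho).
\]
Dividing by $\xi>0$ yields the upper bound, and the lower bound $\lambda^\star\ge0$ holds by definition of $\lambda^\star$. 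The same inequality shows that every $\lambda$ with $V_D^\lambda(\rho)\le V_D^0(\rho)$ lies in a bounded interval, so the infimum of the convex continuous function $V_D^\lambda(\rho)$ is attained on a compact set. This justifies that $\lambda^\star$ is well defined.

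The main obstacle is the strong duality step, since the Lagrangian is nonconvex in $\boldsymbol\theta$. The plan is to invoke \citep{paternain2019constrained} rather than reprove it. If a self-contained argument is wanted, I would pass to occupancy measures $q(s,a)$. There, $V_r$ and $V_g$ are linear in $q$, the feasible set of occupancy measures is a convex polytope, and every valid $q$ is realized by some stationary policy, which completeness guarantees can be represented by some $\boldsymbol\theta$. Problem~(\ref{eq:primal}) then becomes a linear program satisfying Slater's condition, so LP duality gives a zero duality gap, and this transfers back to the $\boldsymbol\theta$ formulation.
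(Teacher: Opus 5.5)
Your proposal is correct and follows the standard argument for this lemma, which the paper imports from \citet{ding2020natural} without proof: you take strong duality from \citet{paternain2019constrained}, lower-bound the dual function at the Slater policy as $V_D^{\lambda}(\rho)\ge V_r^{\bar\pi}(\rho)+\lambda\xi$, and then set $\lambda=\lambda^\star$ using $V_D^{\lambda^\star}(\rho)=V_r^\star(\rho)$. One small caveat: under the softmax parametrization a deterministic $\bar\pi$ is not exactly representable by any $\bar{\boldsymbol\theta}$, but the inequality $V_D^\lambda(\rho)\ge V_L^{\bar\pi,\lambda}(\rho)$ still holds because $V_D^\lambda(\rho)$ is a supremum of a function that is continuous in the policy, so $\bar\pi$ can be approximated by full-support policies.
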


Let $\Lambda = [0, \bar{\sigma}_\lambda]$ denote the set onto which the dual variable is projected in both algorithms proposed in this work. Because of the boundedness of the value function $V_r$, it follows that if  
\(
\bar{\sigma}_\lambda = \frac{2}{(1-\gamma)\xi},
\)
then $\lambda^\star \in \Lambda$.

\section{NPGPD-HF: Natural Policy Gradient Primal-Dual Method from Human Feedback}

Natural policy gradient has a fast convergence rate that is independent of the sizes of state space and action space. We base our first algorithm on the Natural Policy Gradient Primal-Dual Method \citep{ding2020natural} as its convergence is independent of the size of the state-action space, extending it to incorporate finite horizon sampling and human feedback.
The algorithm in \citep{ding2020natural} takes the form:
\begin{equation}
\label{prime1}
\theta^{(t+1)} = \theta^{(t)} + \eta_{1} F_{\rho}(\theta^{(t)})^{\dagger} \cdot 
\nabla_{\theta} V^{\theta^{(t)}, \lambda^{(t)}}_{L}(\rho), 
\end{equation}
\begin{equation}
\label{dual1}
\lambda^{(t+1)} = \mathcal{P}_{\Lambda}\!\left(\lambda^{(t)} - \eta_{2}\big(V^{\theta^{(t)}}_{g}(\rho) - b\big)\right)
\end{equation}
where
\begin{equation}
F_{\rho}(\theta) \;:=\;
\mathbb{E}_{s \sim d_{\rho}^{\pi_\theta}} 
\mathbb{E}_{a \sim \pi_\theta(\cdot \mid s)}
\left[
\nabla_\theta \log \pi_\theta(a \mid s)\,
\big(\nabla_\theta \log \pi_\theta(a \mid s)\big)^\top
\right]
\end{equation}
and $\dagger$ denotes the Moore--Penrose inverse. 

We notice that the primal update is the result of solving the following optimization problem using the KL divergence as the proximal term \citep{kakade2001natural}.
\[
\max_{\boldsymbol{\theta}^{(t+1)}} \; \underbrace{V^{\boldsymbol{\theta}^{(t)}}(\rho) 
+ \nabla_{\theta} V^{\boldsymbol{\theta}^{(t)}}(\rho)(\boldsymbol{\theta}^{(t+1)}-\boldsymbol{\theta}^{(t)})}_{\approx V^{\boldsymbol{\theta}^{(t+1)}}(\rho)}
- \beta \; \mathbb{E}_{x \sim \rho_{w}}
   \Big[ \mathrm{KL}\!\left( \pi_{\boldsymbol{\theta}^{(t+1)}}(\cdot \mid x) \,\|\, \pi_{\boldsymbol{\theta}^{(t)}}(\cdot \mid x) \right) \Big].
\]
With the following lemma we can get the policy update in closed form under softmax parametrization :
\begin{Lemma}[\cite{ding2020natural}]
\label{Lemma:derive advantage}
Let $A_{L}^{(t)}(s,a) := A_{r}^{(t)}(s,a) + \lambda^{(t)} A_{g}^{(t)}(s,a)$ 
and 
\(
Z^{(t)}(s) := \sum_{a \in A} \pi^{(t)}(a \mid s) 
\exp\!\left(\tfrac{\eta_{1}}{1-\gamma} A_{L}^{(t)}(s,a)\right).
\)
Under Softmax Parametrization, the primal-dual update (\ref{prime1}) and (\ref{dual1}) is equivalent to
\[
\theta_{s,a}^{(t+1)} 
= \theta_{s,a}^{(t)} + \frac{\eta_{1}}{1-\gamma} A_{L}^{(t)}(s,a)
\quad \text{or} \quad
\pi^{(t+1)}(a \mid s) 
= \pi^{(t)}(a \mid s) 
\frac{\exp\!\left(\tfrac{\eta_{1}}{1-\gamma} A_{L}^{(t)}(s,a)\right)}{Z^{(t)}(s)}
\]
\[
\text{and} \quad 
\lambda^{(t+1)} = \mathcal{P}_{\Lambda}\!\left(\lambda^{(t)} - \eta_{2}\big(V_{g}^{(t)}(\rho) - b\big)\right).
\]
\end{Lemma}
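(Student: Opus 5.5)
The natural policy gradient step can be computed explicitly under the softmax parametrization; once it is, the two update formulas follow directly. The dual update is unchanged from (\ref{dual1}), so all the work is in the primal step (\ref{prime1}). The plan follows Kakade's argument (as in \citet{ding2020natural}): show that the Fisher-preconditioned gradient of the Lagrangian equals, up to a state-dependent constant in each block, the vector $\frac{1}{1-\gamma}A_L^{(t)}$. Adding such constants does not change the softmax policy.

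\textbf{Gradient.} By linearity, $\nabla_\theta V_L^{\theta,\lambda}(\rho)=\nabla_\theta V_r^\theta(\rho)+\lambda\nabla_\theta V_g^\theta(\rho)$, since $b$ is constant. Applying the policy gradient theorem to each term gives
\[
\nabla_\theta V_L^{\theta,\lambda}(\rho)=\frac{1}{1-\gamma}\mathbb{E}_{s\sim d_\rho^{\pi_\theta}}\mathbb{E}_{a\sim\pi_\theta(\cdot\mid s)}\big[\nabla_\theta\log\pi_\theta(a\mid s)\,A_L(s,a)\big].
\]
Here $A_L=A_r+\lambda A_g$, which is legitimate because the advantage is linear in the per-step signal $r+\lambda g$.

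\textbf{Fisher inverse via least squares.} Define $w^\star$ as a minimizer of
\[
\mathbb{E}_{s\sim d_\rho^{\pi_\theta}}\mathbb{E}_{a\sim\pi_\theta}\big[(w^\top\nabla_\theta\log\pi_\theta(a\mid s)-A_L(s,a))^2\big].
\]
Its normal equations read $F_\rho(\theta)w^\star=(1-\gamma)\nabla_\theta V_L$, so the minimum-norm solution is $w^\star=(1-\gamma)F_\rho(\theta)^\dagger\nabla_\theta V_L$.

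For softmax, $\partial_{\theta_{s',a'}}\log\pi_\theta(a\mid s)=\mathbb{1}[s=s'](\mathbb{1}[a=a']-\pi_\theta(a'\mid s))$. Hence $w^\top\nabla\log\pi_\theta(a\mid s)=w_{s,a}-\sum_{a'}\pi_\theta(a'\mid s)w_{s,a'}$. Since $\sum_a\pi_\theta(a\mid s)A_L(s,a)=0$, the choice $w=A_L$ makes the residual identically zero, so $A_L$ attains the minimum value $0$. Every minimizer differs from $A_L$ only by a vector in the null space of $F_\rho$. For states with $d_\rho^{\pi_\theta}(s)>0$, that null space consists of vectors constant in $a$ within each state block.

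\textbf{Main obstacle.} The delicate point is the null space and the Moore--Penrose choice. Off-support states, where $d_\rho^{\pi_\theta}(s)=0$, must also be handled. The clean route is to assume $\rho$ has full support, which gives $d_\rho^{\pi_\theta}(s)\ge(1-\gamma)\rho(s)>0$ for every $s$. Then
\[
F_\rho^\dagger\nabla_\theta V_L=\frac{1}{1-\gamma}\big(A_L+c\big),
\]
where $c_{s,a}=c_s$ is constant across actions in each state.

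\textbf{Conclusion.} The update is therefore $\theta^{(t+1)}_{s,a}=\theta^{(t)}_{s,a}+\frac{\eta_1}{1-\gamma}A_L^{(t)}(s,a)+\eta_1 c_s$. The shift $c_s$ cancels in the softmax normalization, so the policy is identical to the one generated by $\theta_{s,a}^{(t+1)}=\theta^{(t)}_{s,a}+\frac{\eta_1}{1-\gamma}A_L^{(t)}(s,a)$; this is the sense in which the lemma's "equivalent" should be read. Exponentiating and normalizing gives the multiplicative form with $Z^{(t)}(s)$: the factor $\exp(\theta^{(t)}_{s,a})/\sum_{a'}\exp(\theta^{(t)}_{s,a'})$ becomes $\pi^{(t)}(a\mid s)$, and the remaining denominator is exactly $Z^{(t)}(s)$.
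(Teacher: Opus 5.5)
Your proof is correct and follows essentially the same route as the source: the paper gives no proof of its own and defers to \citet{ding2020natural}, whose argument is this Kakade-style compatible-function-approximation computation showing that $F_\rho(\theta)^\dagger\nabla_\theta V_L^{\theta,\lambda}(\rho)$ equals $\frac{1}{1-\gamma}A_L$ up to a state-wise offset that the softmax normalization removes. Assuming $\rho$ has full support, to handle states outside the support of $d_\rho^{\pi_\theta}$, is a legitimate caveat that the statement leaves implicit; also, the offset in your final update should read $\frac{\eta_1}{1-\gamma}c_s$ rather than $\eta_1 c_s$, which does not matter since it cancels.
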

\textbf{Relation to Direct Preference Optimization (DPO)}
In \citep{rafailov2023direct}, the DPO take the form of \begin{equation}
\pi_{r}(y \mid x) 
= \frac{1}{Z(x)} \, \pi_{\text{ref}}(y \mid x) 
\exp\!\left(\tfrac{1}{\beta} r(x,y)\right),
\end{equation}

where \(
Z(x) = \sum_{y} \pi_{\text{ref}}(y \mid x) 
\exp\!\left(\tfrac{1}{\beta} \left(r(x,y)\right)\right)
\;\)
and $r(x,a)$ is the reward in state $x$ with action $a$, $\pi_{ref}
$ is a reference policy and $\pi_r$ is the optimal policy, which is the result of the following optimization problem. 
\[
\max_{\pi_\theta} \; 
\mathbb{E}_{x \sim \mathcal{D},\, y \sim \pi_\theta(y \mid x)}
\left[ r_\phi(x,y) \right]
- \beta \, \mathbb{D}_{\mathrm{KL}}\!\left[
\pi_\theta(y \mid x) \,\|\, \pi_{\mathrm{ref}}(y \mid x)
\right],
\]

We observe that the primal update (\ref{prime1}) takes a form similar to DPO. The key difference lies in the formulation: \citet{rafailov2023direct} model the problem as a bandit problem, where an entire episode is treated as a single action with a single associated reward, resulting in sparse feedback. In contrast, our approach employs a first-order approximation of the value function of infinite horizon MDP together with a softmax parameterization.

\begin{algorithm}[htbp]
\caption{Natural Policy Gradient Primal-Dual Method from Human Feedback (NPGPD-HF)}
\label{alg:NPGPD-HF}
\begin{algorithmic}[1]
\Require Learning rates $\eta_1, \eta_2$, rounds $N$, evaluators $M$, maximum iterations $T$  and simulation access to $\mathrm{CMDP}(\mathcal{S},\mathcal{A},P,r,g,b,\gamma,\rho)$
\State Initialize $\boldsymbol{\theta}^{(0)}=\boldsymbol{0}$, $\lambda^{(0)}=0$.
\For{$t = 0:T$}

  \For{$n = 1:N$}
    \State  Starting from each $s\in\mathcal{S}$, sample a trajectory $\tau_{n}(s)$ by executing policy $\pi_{\boldsymbol{\theta}^{(t)}}$\\
    \hspace{2.9em}with length $H$;
    
    \State  Starting from each $(s,a)\in\mathcal{S}\times\mathcal{A}$, sample a trajectory $\tau_{n}(s,a)$ by executing policy $\pi_{\boldsymbol{\theta}^{(t)}}$ \\
    \hspace{2.9em}with length $H$; 

    \State For each $(s,a)\in\mathcal{S}\times\mathcal{A}$, query $M$ human  evaluators about helpfulness\\
    \hspace{2.9em}with $(\tau_{n}(s,a), \tau_{n}(s))$ and obtain feedback $[o_{r,n,1}(s,a), \cdots, o_{r,n,M}(s,a)]$;
    
    \State For each $(s,a)\in\mathcal{S}\times\mathcal{A}$, query $M$ human evaluators with $(\tau_{n}(s,a), \tau_{n}(s))$ about\\
    \hspace{2.9em} harmlessness and obtain feedback $[o_{g,n,1}(s,a), \cdots, o_{g,n,M}(s,a)];$
    
    \State Estimate preference probability:
    \[
        p^{(t)}_{r,n}(s,a) = \text{clip}\left( \sum_{m=1}^M \frac{o_{r,n,m}(s,a)}{M},\sigma_r(-G(H)),\sigma_r(G(H)) \right);
    \]
    \[
        p^{(t)}_{g,n}(s,a) = \text{clip}\left(\sum_{m=1}^M \frac{o_{g,n,m}(s,a)}{M} ,\sigma_g(-G(H)),\sigma_g(G(H)) \right);
    \]
    \State Draw $s_{n,0}\sim\rho$, sample a trajectory $\tau_{n}(s_{n,0})$ by executing policy $\pi_{\boldsymbol{\theta}^{(t)}}$ with length $H$ \\
    \hspace{2.9em} and initial state $s_{n,0}$, 
    
    \State Query $M$ human evaluators with $\tau_{n}(s_{n,0})$ to determine whether the response\\
    \hspace{2.9em} is harmless and obtain feedback $[o_{c,n,1}(\rho), \cdots, o_{c,n,M}(\rho)];$
     
    \State Estimate absolute probability:
        \[
        p^{(t)}_{c,n}(\rho) = \text{clip}\left(  \sum_{m=1}^M \frac{o_{c,n,m}(\rho)}{M}  ,\sigma_c(-G(H)),\sigma_c(G(H)) \right);
        \]
        
  \EndFor
    \State Estimate the advantage function of reward and utility: 
    \[
    \hat{A}^{(t)}_r(s,a)= \frac{1}{N} \sum_{n=1}^N \sigma_r^{-1}(p^{(t)}_{r,n}(s,a)) ;\quad
    \hat{A}^{(t)}_g(s,a)= \frac{1}{N} \sum_{n=1}^N \sigma_g^{-1}(p^{(t)}_{g,n}(s,a)) ;
    \]
     \State Estimate the utility value $V_g(\rho)-b$:
    \[
    \hat{h}^{(t)}_c(\rho) =  \frac{1}{N} \sum_{n=1}^N \sigma_c^{-1}(p^{(t)}_{c,n}(\rho));
    \]
  \State For each $(s,a)\in\mathcal{S}\times\mathcal{A}$, let $\hat A^{(t)}_{L}(s,a) = \hat A^{(t)}_{r}(s,a) +\lambda^{(t)} \hat A^{(t)}_{g}(s,a);$
  \State For each $(s,a)\in\mathcal{S}\times\mathcal{A}$, update $\theta^{(t+1)}_{s,a} \;=\; \theta^{(t)}_{s,a} + \frac{\eta_1}{1-\gamma}\,\hat A^{(t)}_{L}(s,a);$
  \State Update $\lambda^{(t+1)} \;=\; \mathcal{P}_{\Lambda}( \lambda^{(t)} - \eta_2 ( \hat{h}_c^{(t)}(\rho)) ).$
\EndFor
\end{algorithmic}
\end{algorithm}
\subsection{Algorithm}
In each policy gradient iteration, Algorithm \ref{alg:NPGPD-HF} consists of the following steps:

\textbf{Step 1: Collecting trajectories initialized from each state and state–action pairs by interacting with the environment.} From each $s \in \mathcal{S}$ and each $(s,a) \in \mathcal{S}\times\mathcal{A}$, 
sample $N$ trajectories $\tau_{n}(s)$ and $\tau_{n}(s,a)$ of length $H$ 
under policy $\pi_{\boldsymbol{\theta}^{(t)}}$ (line 4--7).

\textbf{Step 2: Collect pairwise preference feedback from human evaluators.}
For each $(s,a)\in\mathcal{S}\times\mathcal{A}$, query $M$ human evaluators with 
$(\tau_{n}(s,a), \tau_{n}(s))$ about helpfulness and harmlessness (lines 8--11), and estimate 
the probabilities that $\tau_{n}(s,a)$ is more helpful than $\tau_{n}(s)$, denoted 
$p^{(t)}_{r,n}(s,a)$, and that $\tau_{n}(s,a)$ is more harmless than $\tau_{n}(s)$, denoted 
$p^{(t)}_{g,n}(s,a)$ (line 12).

\textbf{Step 3: Collecting trajectories initialized from a distribution over states by interacting with the environment.}
Sample $N$ trajectories $\tau_{n}(s_{n,0})$ of length $H$ 
by executing policy $\pi_{\boldsymbol{\theta}^{(t)}}$ with initial state $s_{n,0}$ independently drawn from distribution $\rho$  (line 13--14).

\textbf{Step 4: Collecting absolute feedback from human evaluators.}
Query $M$ human evaluators with $\tau_{n}(s_0)$ to assess harmlessness (line 15--16) 
and estimate the probability that $\tau_{n}(s_0)$ is harmless, denoted by $p^{(t)}_{c,n}(\rho)$.

\textbf{Step 5: Estimate Advantage functions and utility Value function.}
For each state--action pair $(s,a)\in\mathcal{S}\times\mathcal{A}$, use the $N$ estimates 
$p^{(t)}_{r,n}(s,a)$ and $p^{(t)}_{g,n}(s,a)$, together with the inverse link functions 
$\sigma_r^{-1}(\cdot)$ and $\sigma_g^{-1}(\cdot)$, to estimate the reward and utility advantage 
functions $\hat A_r^{(t)}(s,a)$ and $\hat A_g^{(t)}(s,a)$ (line 19).\\
Use the $N$ estimates 
$p^{(t)}_{c,n}(\rho)$, together with the inverse link functions 
$\sigma_c^{-1}(\cdot)$,  to estimate the value function of utility, denoted by $\hat h_c^{(t)}(\rho)$ (line 20).

\textbf{Step 6: Update the primal and the dual.} 
For each state--action pair $(s,a)\in\mathcal{S}\times\mathcal{A}$, use the estimated reward and utility advantage functions $\hat A_r^{(t)}(s,a)$ and $\hat A_g^{(t)}(s,a)$, together with the Lagrange multiplier 
$\lambda^{(t)}$ to update the parameterized policy $\theta_{s,a}^{(t)}$ (lines 21--22), and use the estimated utility 
value function $\hat h_c^{(t)}(\rho)$ to update $\lambda^{(t)}$ (line 23);

\subsection{Analysis}
\begin{theorem}
\label{thm1}
Suppose Assumptions \ref{assum:link} and \ref{assum:slater} hold, and let $\bar{\sigma}_\lambda = \tfrac{2}{(1-\gamma)\xi}$.  
Fix $T > 0$, $\rho \in \Delta_S$, and initialize $\theta^{(0)} = 0$ and $\lambda^{(0)} = 0$.  
If we choose $\eta_1 = 2 \log |A|$ and $\eta_2 = (1 - \gamma)/\sqrt{T}$, then under the softmax parameterization,  
the iterates $\boldsymbol{\theta}^{(t)}$ generated by NPGPD-HF satisfy
\end{theorem}

\begin{equation*}
\frac{1}{T} \sum_{t=0}^{T-1} 
\mathbb{E}\!\left[ V_{r}^{\star}(\rho) - V_{r}^{\boldsymbol{\theta}^{(t)}}(\rho) \right]
\;\leq\; 
\mathcal{O}\!\left( \frac{1}{\sqrt{T}} \;+\; L \sqrt{\frac{ \log M}{M}} \;+\;  \frac{\gamma^H}{1-\gamma} \right),
\end{equation*}
and
\begin{equation*}
\mathbb{E}\!\left[ b - \frac{1}{T} \sum_{t=0}^{T-1} V_{g}^{\boldsymbol{\theta}^{(t)}}(\rho) \right]_+
\;\leq\; 
\mathcal{O}\!\left( \frac{1}{\sqrt{T}} \;+\; L \sqrt{\frac{ \log M}{M}} \;+\;  \frac{\gamma^H}{1-\gamma} \right).
\end{equation*}
The complete proof is presented in the appendix \ref{proof:thm1}.

\textbf{Insights:}  
The convergence rate can be decomposed into three main sources of error: the stochastic optimization rate, the bias introduced by human feedback, and the approximation error from truncating the infinite horizon:  

\[
\quad\quad\underbrace{\frac{1}{\sqrt{T}}}_{\text{Policy gradient}}
\quad\quad\;+\;\quad\quad
\underbrace{L\sqrt{\frac{\log M}{M}} }_{\text{Bias of Human Feedback}}
\quad\quad\;+\;
\underbrace{\frac{\gamma^H}{1-\gamma}}_{\text{Bias of finite horizon approximation}} 
\]

The first term, $\tfrac{1}{\sqrt{T}}$, reflects the convergence rate of natural  policy gradient methods of \citep{ding2022convergence}.
The second term represents the bias of human feedback:  
$L\sqrt{\log M / M}$ captures the statistical error when approximating population-level human preference probabilities using $M$ feedback samples, and decreases with larger $M$. The final term, $\tfrac{\gamma^H}{1-\gamma}$, is the bias of finite horizon approximation,  
arising because infinite horizon returns are truncated at horizon length $H$.  
This bias decays exponentially with $H$, scaled by the discount factor $\gamma$.  
\begin{corollary}
Suppose that all the conditions of \ref{thm1} hold, let $H\sim\mathcal{O}(log(T))$,$M\sim \mathcal{O}({T \log T})$, 
We can get that the iterations $\boldsymbol{\theta}^{(t)}$ generated by NPGPD-HF satisfy
\[\frac{1}{T} \sum_{t=0}^{T-1} \mathbb{E}\left[ V_{r}^{\star}(\rho) - V_{r}^{\boldsymbol{\theta}^{(t)}}(\rho)\right] \leq\mathcal{O}(\frac{1}{\sqrt{T}})\]
\end{corollary}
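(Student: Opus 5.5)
The corollary follows by substituting the stated choices of $H$ and $M$ into the optimality bound of Theorem~\ref{thm1}. That bound reads
\[
\frac{1}{T}\sum_{t=0}^{T-1}\mathbb{E}\big[V_r^\star(\rho)-V_r^{\boldsymbol{\theta}^{(t)}}(\rho)\big]\le C\Big(\frac{1}{\sqrt T}+L\sqrt{\frac{\log M}{M}}+\frac{\gamma^H}{1-\gamma}\Big),
\]
with $C$ depending only on $\gamma,\xi,|A|$ and not on $T,M,H$. So it suffices to show that each of the two non-optimization terms is $\mathcal{O}(1/\sqrt T)$. Here $\gamma$, $L$ and $\xi$ are treated as constants, and the notation $H\sim\mathcal{O}(\log T)$, $M\sim\mathcal{O}(T\log T)$ is read as choosing $H$ and $M$ of at least that order with suitable constants.

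\textbf{Truncation term.} Choose $H\ge \frac{\log T}{2\log(1/\gamma)}$, which is $\Theta(\log T)$ for fixed $\gamma<1$. Then
\[
\gamma^H=\exp\big(-H\log(1/\gamma)\big)\le T^{-1/2},
\]
so $\frac{\gamma^H}{1-\gamma}\le \frac{1}{(1-\gamma)\sqrt T}=\mathcal{O}(1/\sqrt T)$.

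\textbf{Human-feedback term.} Choose $M= cT\log T$ with $c>0$ and $T\ge 3$. The function $x\mapsto \log x/x$ is decreasing for $x\ge e$, and $\log M=\log c+\log T+\log\log T=\mathcal{O}(\log T)$. Hence
\[
\frac{\log M}{M}\le \frac{C'\log T}{cT\log T}=\frac{C'}{cT},
\]
and therefore $L\sqrt{\log M/M}\le L\sqrt{C'/c}\,T^{-1/2}=\mathcal{O}(1/\sqrt T)$. The reason for the extra $\log T$ factor in $M$ is exactly to cancel the $\log M$ in the numerator. With $M=T$ alone this term would only be $\mathcal{O}(\sqrt{\log T/T})$.

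Summing the three $\mathcal{O}(1/\sqrt T)$ terms gives the claim. The same substitution into the constraint-violation bound of Theorem~\ref{thm1} gives the analogous $\mathcal{O}(1/\sqrt T)$ rate for $\mathbb{E}\big[b-\frac1T\sum_t V_g^{\boldsymbol{\theta}^{(t)}}(\rho)\big]_+$.

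The only delicate point is the implicit dependence on $H$. The clipping intervals $[\sigma(-G(H)),\sigma(G(H))]$ change with $H$, but Assumption~\ref{assum:link} gives the Lipschitz constant $L$ uniformly on the largest interval $[-G(1/(1-\gamma)),G(1/(1-\gamma))]$, which contains every $[-G(H),G(H)]$. Moreover, since $G(H)\le \frac{1}{1-\gamma}$, the constants in Theorem~\ref{thm1} do not grow with $H$. Provided this uniformity holds (as the theorem's statement asserts), the constant $C$ is independent of $H$ and $M$, and the corollary follows.
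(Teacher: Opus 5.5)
Your proposal is correct and follows the paper's own route. The paper gives no separate proof; the corollary is obtained exactly as you do, by substituting $H \ge \frac{\log T}{2\log(1/\gamma)}$ (so $\gamma^H \le T^{-1/2}$) and $M = \Theta(T\log T)$ (so $\log M/M = \mathcal{O}(1/T)$) into the bound of Theorem~\ref{thm1}. The remaining constants there, involving $G(H) \le \frac{1}{1-\gamma}$, $N \ge 1$ and $\bar{\sigma}_\lambda$, are uniform in $H$, $M$ and $N$. Reading $H$ and $M$ as at least of the stated orders, with a large enough constant in $H$, is the right interpretation of the statement's loose $\mathcal{O}(\cdot)$ notation.
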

 This corollary suggests that shorter trajectories can initially be sampled and evaluated by fewer human reviewers, while longer trajectories and more reviewers can be incorporated later to achieve greater accuracy.
 
\section{ZPGPD-HF: Zeroth-Order Policy Gradient Primal-Dual Method from Human Feedback}
For NPGPD-HF, estimating the advantage function requires $|S||A|$ human queries per iteration. To allow an arbitrary less number of inner-loop queries, we propose ZPGPD-HF, which builds on the primal–dual method for CMDP developed by \citep{ding2022ACC} and the zeroth-order approach for RLHF introduced by \citep{zhang2024zeroth}.

\subsection{Algorithm}
The main difference between Algorithm~\ref{alg:NPGPD-HF} and Algorithm~\ref{alg:ZPGPD-HF} lies in how the trajectory pairs for comparison are collected.

In each policy gradient iteration, our algorithm consists of the following steps:

\textbf{Step 1: Generate a perturbed policy}
Sample $\boldsymbol{v}^{(t)}$ uniformly from a unit sphere $\mathbb{S}^{d-1} = \left\{ \boldsymbol{v} \in \mathbb{R}^d \,\middle|\, \|\boldsymbol{v}\|_2 = 1 \right\}$. From the current policy $\pi_{\boldsymbol{\theta}^{(t)}}$, it first obtained a perturbed policy $\pi_{\boldsymbol{\theta}^{(t)}+\mu}$ (line 2--3).

\textbf{Step 2: Collect trajectories from the base and perturbed policies.} 
Sample $N$ trajectories of length $H$ under policies $\pi_{\boldsymbol{\theta}^{(t)}}$ and $\pi_{\boldsymbol{\theta}^{(t)}+\mu}$, where initial state of each trajectory is independently drawn from distribution $\rho$ (line 6--7).

\textbf{Step 3: Collect pairwise preference feedback from human evaluators.} 
For each pair $(\tau_{n,1}(s_{n,0}),\tau_{n,0}(s_{n,0}))$, query $M$ independent human evaluators about helpfulness and harmlessness (lines 8--9), estimate 
the probabilities that $(\tau_{n,1}(s_{n,0})$ is more helpful than $\tau_{n,0}(s_{n,0})$, denoted 
$p^{(t)}_{r,n}$, and that $(\tau_{n,1}(s_{n,0})$ is more harmless than $\tau_{n,0}(s_{n,0})$, denoted 
$p^{(t)}_{g,n}$ (line 10).

\textbf{Step 4: Collect absolute feedback from human evaluators} 
Query $M$ human evaluators with $\tau_{n,0}(s_{n,0})$ to assess harmlessness (line 11) 
and estimate the probability that $\tau_{n,0}(s_{n,0})$ is harmless, denoted by $p^{(t)}_{c,n}(\rho)$ (line 12).

\textbf{Step 5: Estimate gradients and utility Value function} 
Use the $N$ estimates 
$p^{(t)}_{r,n}(\rho)$ and $p^{(t)}_{g,n}(\rho)$, together with the inverse link functions 
$\sigma_r^{-1}(\cdot)$ and $\sigma_g^{-1}(\cdot)$, to estimate the gradient of the reward and utility value 
function, denoted by $\hat{h}^{(t)}_c(\rho)$ and $\hat{h}^{(t)}_g(\rho)$ (line 14).\\
Use the $N$ estimates 
$p^{(t)}_{c,n}(\rho)$, together with the inverse link functions 
$\sigma_c^{-1}(\cdot)$,  to estimate value function of utility, denoted by $\hat h_c^{(t)}(\rho)$ (line 15).

\textbf{Step 6: Update the primal and dual.} 
Update the policy parameters $\boldsymbol{\theta}^{(t)}$ using the estimated gradient and learning rate $\eta_1$, and update the Lagrange multiplier $\lambda^{(t)}$ using the estimated value function and learning rate $\eta_2$.

The whole algorithm is provided in appendix \ref{ZPGAP}.

\subsection{Analysis}
\begin{theorem}\label{thm2}
Suppose Assumptions \ref{assum:link} and \ref{assum:slater} hold, and let 
\[
\bar{\sigma}_\lambda = \tfrac{2}{(1-\gamma)\xi}, 
\quad \rho \in \Delta_s, \quad \lambda^{(0)} = 0, \quad d = |S||A|.
\]
Let ${\boldsymbol{\theta}}^{(0)} \in {\boldsymbol{\Theta}}$ satisfy 
\( V_r^{{\boldsymbol{\theta}}^{(0)}}(\rho) \le V_r^{*}(\rho) \).  
For the step sizes
\[
\eta_1 \;=\; \frac{(1-\gamma)^4}{2\,|A|\,(1+2/\xi)},
\qquad
\eta_2 \;=\; \frac{8\,|A|\,|S|\,(1+2/\xi)}{(1-\gamma)^4 \,\sqrt{T}}
\,\Bigl\|\tfrac{d_\rho^{\pi^*}}{\rho}\Bigr\|_\infty^{2},
\]
and
\[
\mu^2 = \max\left\{\tfrac{2\gamma^H}{1-\gamma}, \; L \sqrt{\tfrac{2 \log M}{M}} + \tfrac{2G(H)}{M^2}\right\}.
\]
Then, under direct parameterization, the iterates 
\(\{{\boldsymbol{\theta}}^{(t)}\}_{t \ge 0}\) generated by ZPGPD-HF satisfy
\begin{equation*}
\frac{1}{T}\sum_{t=0}^{T-1}\mathbb{E}\!\left[V_r^{\star}(\rho) - V_r^{\boldsymbol{\theta}^{(t)}}(\rho)\right]
\;\leq\;
\mathcal{O}\!\left(
\frac{d}{T^{1/4}} 
+ d^{3/2} \sqrt{\frac{\log M}{M}} 
+ d^{3/2}\sqrt{\frac{\gamma^H}{1-\gamma}}
\right),
\end{equation*}
and
\begin{equation*}
\mathbb{E}\!\left[ b - \frac{1}{T} \sum_{t=0}^{T-1} V_{g}^{(t)}(\rho) \right]_+
\;\leq\; 
\mathcal{O}\!\left(\frac{d}{T^{1/4}}+d^{3/2}  \left(\frac{ \log M}{M} \right)^{1/4}
+d^{3/2}\sqrt{\frac{\gamma^H}{1-\gamma}}\right).
\end{equation*}
\end{theorem}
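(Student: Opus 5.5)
Our plan is to adapt the projected primal--dual analysis of \citet{ding2022ACC} for direct parametrization, treating the zeroth-order human-feedback gradient and the human-feedback constraint estimate as biased, noisy oracles. We first control the oracles and then insert their errors into the primal--dual analysis.

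\textbf{Step 1 (oracle errors).} Let $\tau^{1}\sim\pi_{\boldsymbol\theta^{(t)}+\mu\boldsymbol v}$ and $\tau^{0}\sim\pi_{\boldsymbol\theta^{(t)}}$. Conditional on these trajectories, the empirical frequency of $M$ evaluators concentrates at $\sigma_r(R_r^H(\tau^1)-R_r^H(\tau^0))$. Hoeffding's inequality with a union bound, plus the clipping step, gives deviation $\sqrt{2\log M/M}$ with probability at least $1-2/M^2$. On the complementary event we use the trivial bound $G(H)$.

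Because $\sigma_r^{-1}$ is $L$-Lipschitz (Assumption~\ref{assum:link}), we obtain
\[
\big|\mathbb{E}[\sigma_r^{-1}(p_{r,n})]-(V_r^{\boldsymbol\theta+\mu \boldsymbol v,H}(\rho)-V_r^{\boldsymbol\theta,H}(\rho))\big|\le L\sqrt{2\log M/M}+2G(H)/M^2 .
\]
Replacing $V^{H}$ by $V$ adds at most $2\gamma^{H+1}/(1-\gamma)$. The same argument with $\sigma_c$ gives $|\mathbb{E}\hat h_c-(V_g^{\boldsymbol\theta}(\rho)-b)|\le L\sqrt{2\log M/M}+2G(H)/M^2+\gamma^H/(1-\gamma)$.

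The gradient estimator is $\hat g=\frac{d}{\mu}\,\widehat{\Delta}\,\boldsymbol v$. Its mean is $\nabla V^{\mu}$, the gradient of the sphere-smoothed value, plus a bias of order $\frac{d}{\mu}\epsilon$, where $\epsilon$ is the total error above. The smoothing error is $\|\nabla V^\mu-\nabla V\|\le \mu d\,\beta/2$, with $\beta\sim|A|/(1-\gamma)^3$ the smoothness constant of $V$ under direct parametrization. The second moment is bounded by $d^2\,\mathrm{poly}(1/(1-\gamma))$.

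The choice $\mu^2=\max\{\cdot\}$ in the statement balances the $\mu d$ smoothing term against the $d\epsilon/\mu$ bias term. Hence the total gradient bias is $\mathcal O(d\sqrt{\epsilon})$, which is the source of the square roots in the bound.

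\textbf{Step 2 (primal step).} We follow \citet{ding2022ACC}. We use the $\beta$-smoothness of $V_L^{\boldsymbol\theta,\lambda}$ with the gradient-dominance (variational) inequality for direct parametrization,
\[
V^{\star}-V^{\boldsymbol\theta}\le \frac{1}{1-\gamma}\Big\|\frac{d_\rho^{\pi^\star}}{\rho}\Big\|_\infty\max_{\bar{\boldsymbol\theta}}(\bar{\boldsymbol\theta}-\boldsymbol\theta)^\top\nabla V^{\boldsymbol\theta}.
\]
Applied to the projected ascent step $\boldsymbol\theta^{(t+1)}=\mathcal P_{\Delta}(\boldsymbol\theta^{(t)}+\eta_1\hat g_L)$ with $\hat g_L=\hat g_r+\lambda^{(t)}\hat g_g$, this yields a per-step inequality of the form
\[
V_L^{\star,\lambda^{(t)}}-V_L^{(t),\lambda^{(t)}}\le C\big(V_L^{(t+1),\lambda^{(t)}}-V_L^{(t),\lambda^{(t)}}\big)^{1/2}+\text{error}.
\]
Here the error is linear in $\|\hat g_L-\nabla V_L\|$, and $\lambda^{(t)}\le\bar\sigma_\lambda=2/((1-\gamma)\xi)$ explains the factor $1+2/\xi$ in $\eta_1$.

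\textbf{Step 3 (dual step and combination).} The dual step uses the standard projected-descent bound
\[
(\lambda^{(t)}-\lambda)(\hat h_c^{(t)})\le \frac{(\lambda^{(t)}-\lambda)^2-(\lambda^{(t+1)}-\lambda)^2}{2\eta_2}+\frac{\eta_2}{2}(\hat h_c^{(t)})^2 .
\]
We sum the primal and dual inequalities and telescope the Lagrangian differences, using that $V_L$ is bounded by $(1+\bar\sigma_\lambda)/(1-\gamma)$. The square-root coupling forces a Cauchy--Schwarz step over $t$, which yields $T^{-1/4}$ rather than $T^{-1/2}$.

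Taking $\lambda=0$ gives the optimality-gap bound. Taking $\lambda=2/((1-\gamma)\xi)$ and invoking Lemma~\ref{Lemma:strong_duality} together with the standard constraint-violation argument (as in \citet{ding2020natural}) gives the violation bound. That bound contains the oracle error inside a square root once more, producing the $(\log M/M)^{1/4}$ term.

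\textbf{Main obstacle.} The delicate step is Step~2, where the biased zeroth-order gradient must be propagated through the projected gradient-dominance inequality. The bias must enter additively as $\mathcal O(d^{3/2}\sqrt{\epsilon})$; the extra $\sqrt d$ comes from $\|\bar{\boldsymbol\theta}-\boldsymbol\theta\|\le\sqrt{2|S|}$ multiplying the bias. It must not be amplified by $1/\eta_1$.

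To achieve this, we would first bound the gradient-mapping error $\|G_{\eta_1}(\hat g)-G_{\eta_1}(\nabla V)\|\le\|\hat g-\nabla V\|$, using nonexpansiveness of the projection. We then rerun the argument of \citet{ding2022ACC} with the exact gradient mapping, carrying this perturbation as an additive error. The variance term is controlled by the second-moment bound in expectation, and it is absorbed into the $d/T^{1/4}$ rate through the choice of $\eta_2$.
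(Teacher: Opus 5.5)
\textbf{Route.} Your Step~1 matches the paper's bias bounds (Lemma~\ref{Lemma:overall}), but your Step~2 takes a different route. Lemma~\ref{Lemma:FLambda} does not use gradient domination or the gradient mapping. It compares the projected step with the policy $\boldsymbol\theta^\alpha$ obtained by mixing the occupancy measures of $\pi^\star$ and $\pi^{(t)}$. It then uses linearity of $F_r+\lambda^{(t)}F_g$ in $\boldsymbol q$ and $\|\boldsymbol\theta^\alpha-\boldsymbol\theta^{(t)}\|^2\le\alpha^2D_\theta$, which gives a Frank--Wolfe-type recursion whose $1/\sqrt T$ forcing (the dual drift) produces $T^{-1/4}$. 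With exact gradients your Cauchy--Schwarz route would also give $T^{-1/4}$. For that you must keep the $\mathcal O(T\eta_2)$ drift from the changing $\lambda^{(t)}$ when telescoping; boundedness of $V_L$ alone would give $T^{-1/2}$.

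\textbf{The gap: realized gradient error.} In your Step~2 the gradient error is paired with $\boldsymbol\theta^{(t+1)}-\boldsymbol\theta^{(t)}$ (sufficient increase) and with the maximizer in the stationarity bound. Both depend on the current sample, so nonexpansiveness leaves you with the realized error $\|\hat{\boldsymbol g}_L-\nabla V_L\|$, not its conditional mean. That quantity is not small, for two reasons.
\begin{itemize}
\item Even with exact value oracles and $\mu\to0$, a single sphere direction gives $\mathbb E\|d\langle\nabla V,\boldsymbol v\rangle\boldsymbol v-\nabla V\|^2=(d-1)\|\nabla V\|^2$. Under direct parametrization, $\nabla V$ need not vanish at a boundary optimum.
\item $\tau_{n,1}$ and $\tau_{n,0}$ are sampled independently, so $\mathbb E\|\hat{\boldsymbol g}\|^2\ge\frac{d^2}{\mu^2}\operatorname{Var}(\widehat\Delta)$. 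Here $\operatorname{Var}(\widehat\Delta)$ is generically of order $1/N$ (return noise), not $O(\mu^2)$, so this blows up as $\mu\to0$ for fixed $N$.
\end{itemize}
Hence your second-moment bound $d^2\,\mathrm{poly}(1/(1-\gamma))$ is false. The variance also cannot be ``absorbed through the choice of $\eta_2$'': $\eta_2$ is the dual step, while primal noise enters every iteration through the constant $\eta_1$ and does not average out. Your argument therefore leaves an error term that does not vanish as $T,M,H\to\infty$, rather than the additive $\mathcal O(d^{3/2}\sqrt\epsilon)$ you need.

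\textbf{Comparison with the paper.} The paper gets a bias-only error by writing it as $\langle\hat{\boldsymbol h}-\nabla V,\boldsymbol\theta^\alpha-\boldsymbol\theta^{(t)}\rangle$ with $\boldsymbol\theta^\alpha$ being $\mathcal F_t$-measurable. Then only $\|\mathbb E[\hat{\boldsymbol h}-\nabla V\mid\mathcal F_t]\|\le\epsilon_{rg}(H,M)$ enters, multiplied by $\sqrt{D_\theta}$. However, deriving the first inequality of (\ref{eq:F1}) from smoothness actually leaves an extra term $-\langle\hat{\boldsymbol h}-\nabla V,\boldsymbol\theta^{(t+1)}-\boldsymbol\theta^{(t)}\rangle$, which is exactly your realized-noise term. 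So adopting the paper's comparator does not by itself close the gap. You would need genuine variance control (e.g.\ many directions per step with $N\gtrsim1/\mu^2$, or a different step-size regime), and the statement provides neither.

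\textbf{Exponents in $M$.} Your account of the $M$-exponents is inconsistent. With $\epsilon\asymp\sqrt{\log M/M}$, your own $\mathcal O(d^{3/2}\sqrt\epsilon)$ already puts $d^{3/2}(\log M/M)^{1/4}$ into the optimality gap. The violation step (Lemma~\ref{Lemma:constraint_violation}) is linear in $\delta$, so no second square root arises. The paper's argument likewise yields only $(\log M/M)^{1/4}$ in both bounds, because $\mu\lesssim\sqrt{\gamma^H/(1-\gamma)}+(\log M/M)^{1/4}$.
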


The complete proof is presented in the appendix \ref{proof:thm2}.

Similar to NPGPD-HF, the convergence rate of ZPGPD-HF can also be decomposed into the stochastic optimization rate, the bias introduced by human feedback, and the approximation error from truncating the infinite horizon.

Comparing the result of Theorem~\ref{thm2} with Theorem~\ref{thm1}, it can be observed that each term in Theorem~\ref{thm2} is correlated with the sizes of the state and action spaces. In addition, the first term exhibits a slower convergence rate, reflecting the cost of allowing arbitrary inner-loop queries. Furthermore, the bias term arising from finite horizon approximation and human feedback is larger.

\section{Experiment}

To evaluate the convergence of the two proposed algorithms, we construct a CMDP environment with $|\mathcal{S}|=10$ states and $|\mathcal{A}|=4$ actions, so that $d=|\mathcal{S}||\mathcal{A}|=40$. We set $\gamma=0.90$ and choose the initial state distribution $\rho$ to be uniform over $\mathcal{S}$. For each $(s,a)\in\mathcal{S}\times\mathcal{A}$, the transition kernel $P(\cdot\mid s,a)$ is independently drawn from $\mathrm{Dirichlet}(5,\ldots,5)$ to ensure ergodicity. The reward and utility functions, $r(s,a)$ and $g(s,a)$, are independently sampled from $\mathrm{Unif}[0,1]$ and rescaled. We fix the utility threshold at $b=0.55$.

Policy evaluation is performed exactly by solving the linear system $V_r^\pi = (I-\gamma P_\pi)^{-1} r_\pi$, where $P_\pi$ and $r_\pi$ denote the policy-induced transition matrix and expected reward vector, respectively. The optimal reward value $V_r^{\pi^*}$ is computed using value iteration. 

Human feedback is simulated as follows. Pairwise comparisons on helpfulness and harmlessness are generated via Bradley-Terry model: 
\begin{equation*}
\mathbb{P}(o_r=1)=\mathbb{P}(R_r^H(\tau_1) > R_r^H(\tau_0)) = \sigma_r\big(R_r^H(\tau_1) - R_r^H(\tau_0)\big)=
\frac{\exp\bigl(R_r^H(\tau_1)\bigr)}
{\exp\bigl(R_r^H(\tau_0)\bigr)+\exp\bigl(R_r^H(\tau_1)\bigr)},
\end{equation*}
\begin{equation*}
\mathbb{P}(o_g=1)=\mathbb{P}(R_g^H(\tau_1) > R_g^H(\tau_0)) = \sigma_g\big(R_g^H(\tau_1) - R_g^H(\tau_0)\big)=
\frac{\exp\bigl(R_g^H(\tau_1)\bigr)}
{\exp\bigl(R_g^H(\tau_0)\bigr)+\exp\bigl(R_g^H(\tau_1)\bigr)}.
\end{equation*}
Absolute harmlessness feedback is generated through a logistic model as \citep{chatterji2021theory}:
\begin{equation*}
\mathbb{P}(o_c=1)= \mathbb{P}(R_g^H(\tau_0) - b>0) = \sigma_c\big(R_g^H(\tau_0) - b\big)=\frac{\exp\bigl(R_g^H(\tau_0)-b\bigr)}
{1+\exp\bigl(R_g^H(\tau_0)-b\bigr)}.
\end{equation*}
We truncate the horizon to a fixed length of $H = 80$ and vary the number of independent evaluators per query $M\in\{16,64,256\}$.

We compare the two proposed algorithms in Fig. \ref{experi} in terms of both the reward optimality gap and the constraint violation under different numbers of evaluators \(M\in\{16,64,256\}\). For NPGPD-HF, the optimality gap decreases rapidly and then plateaus, with most of the gain obtained by increasing \(M\) from 16 to 64, while \(M=64\) and \(M=256\) yield nearly identical performance, indicating diminishing returns beyond \(M=64\). The constraint violation decreases at a similar rate for all three \(M\) values and the curves largely overlap, suggesting that feasibility is essentially insensitive to \(M\) in this setting. In contrast, ZPGPD-HF's optimality gap converges much more slowly and is more sensitive to the evaluator budget: performance improves substantially as \(M\) increases, with \(M=16\) leading to markedly worse steady-state performance. For the constraint violation of ZPGPD-HF, the curves for \(M=64\) and \(M=256\) are close and reach comparable final levels, suggesting limited additional benefit from increasing \(M\) beyond 64.
\begin{figure}[htbp]
    \centering
    \begin{subfigure}{0.45\textwidth}
        \centering
        \includegraphics[width=\linewidth]{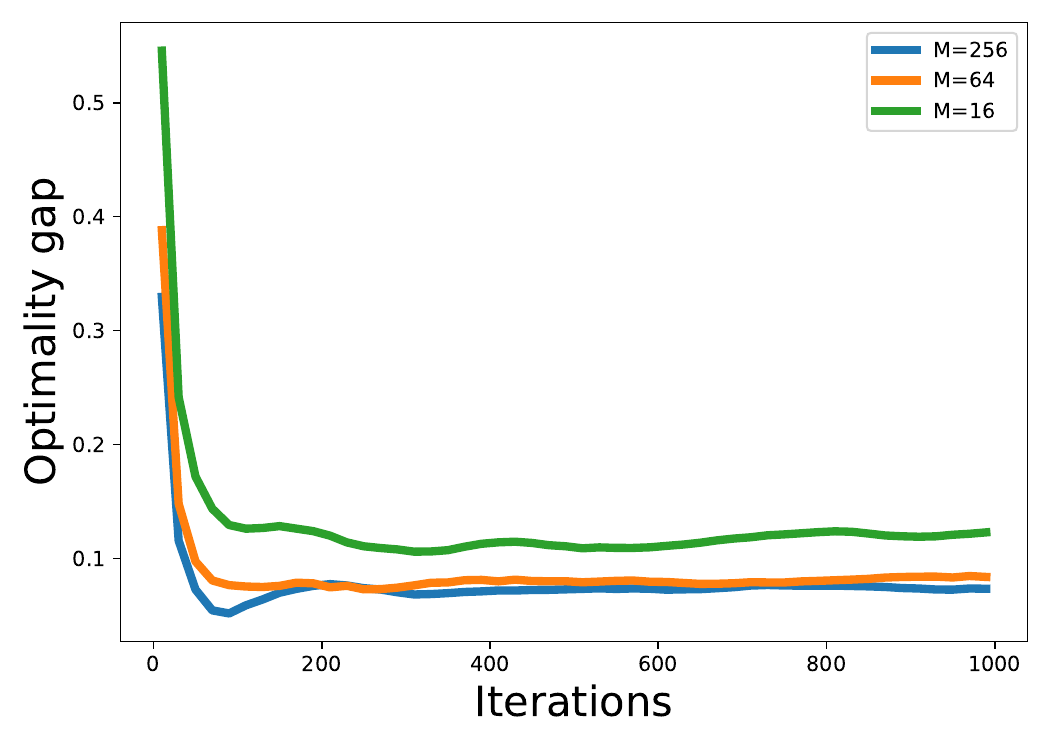}
        \caption{Optimality gap of NPGPD-HF}
        \label{fig:sub1}
    \end{subfigure}
    \hfill
    \begin{subfigure}{0.45\textwidth}
        \centering
        \includegraphics[width=\linewidth]{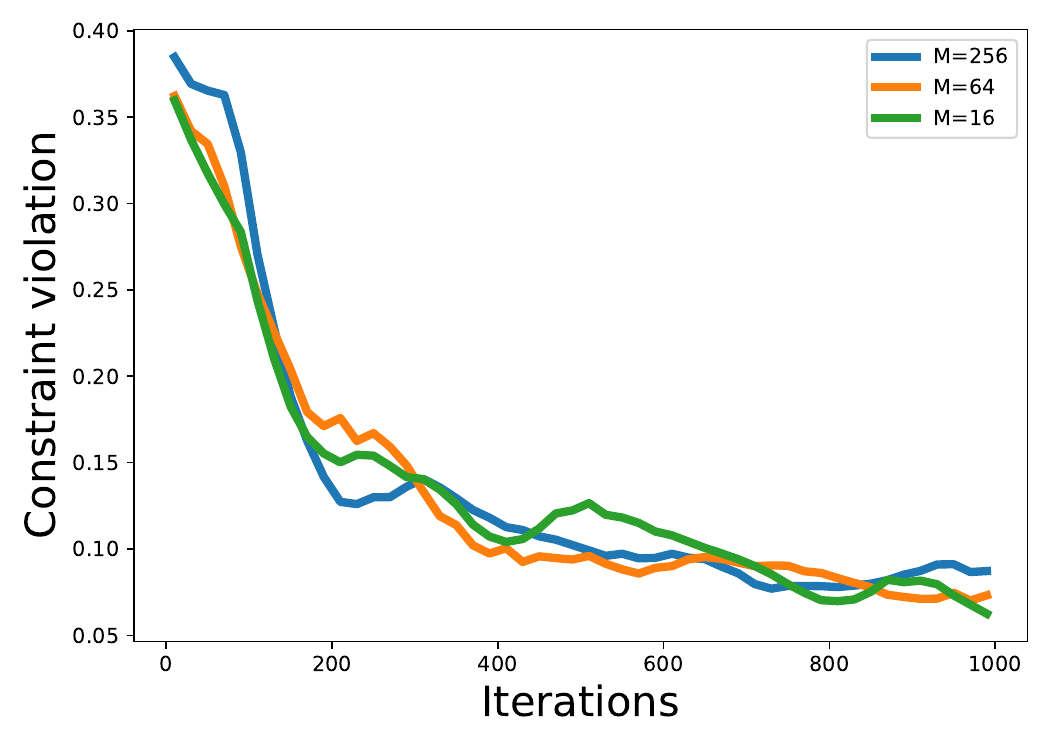}
        \caption{Constraint violation of NPGPD-HF}
        \label{fig:sub2}
    \end{subfigure}
    \vskip\baselineskip
      \begin{subfigure}{0.45\textwidth}
        \centering
        \includegraphics[width=\linewidth]{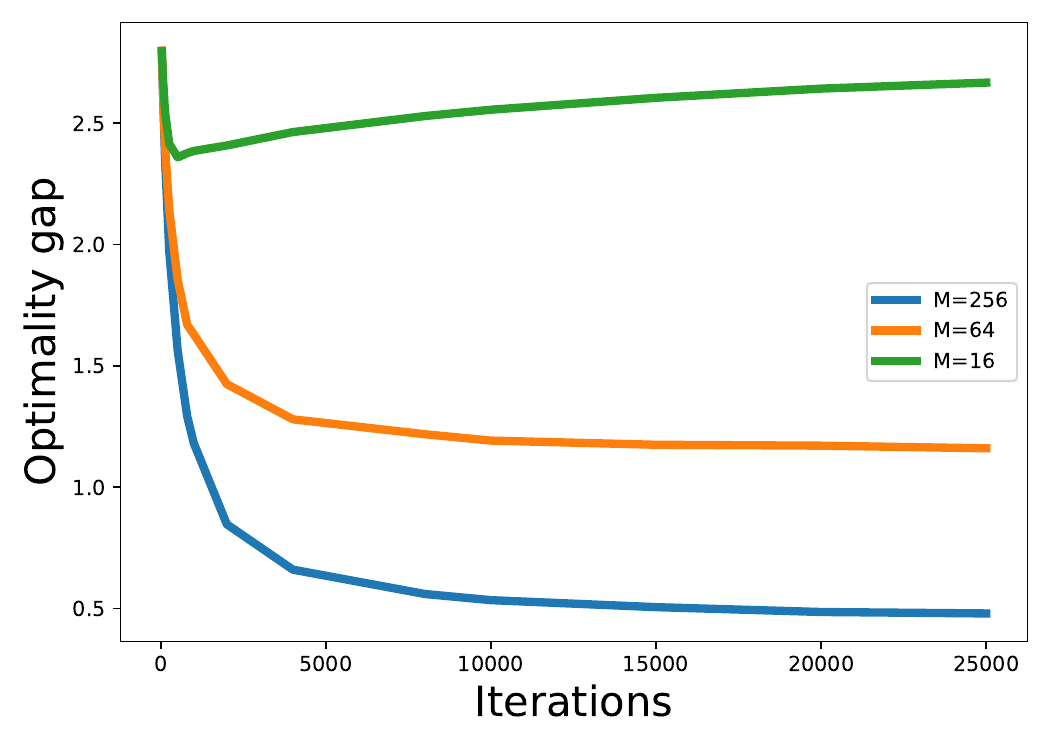}
        \caption{Optimality gap of ZPGPD-HF}
        \label{fig:sub1}
    \end{subfigure}
    \hfill
    \begin{subfigure}{0.45\textwidth}
        \centering
        \includegraphics[width=\linewidth]{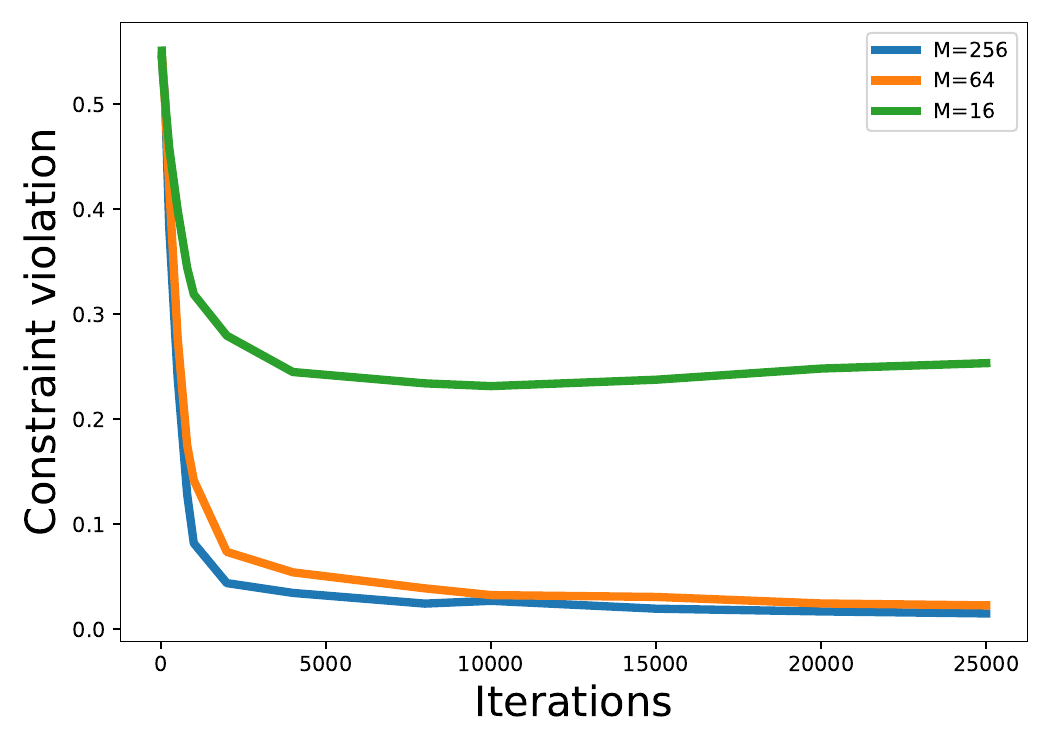}
        \caption{Constraint violation of ZPGPD-HF}
        \label{fig:sub2}
    \end{subfigure}
    \label{figs}  
\caption{Comparison of NPGPD-HF and ZPGPD-HF. In these experiments, we truncate the horizon to a fixed length of $H = 80$ and vary the number of evaluators $M =16,64,256$. }
\label{experi}
\end{figure}
\section{Conclusion}
In this paper, we investigate safe RLHF in discounted infinite horizon CMDP and propose two primal-dual policy gradient algorithms, namely NPGPD-HF and ZPGPD-HF. In contrast to existing RLHF methods that rely on explicit reward inference or are restricted to episodic finite horizon settings, the proposed algorithms directly leverage human preference and safety feedback and admit global non-asymptotic convergence guarantees. Our analysis characterizes the effects of optimization error, feedback inaccuracy, and horizon truncation, thereby providing a principled understanding of the tradeoffs involved in safe RLHF under infinite horizon discounted objectives. 

Future directions include extending the theory to large scale function approximation, deep neural policies, and richer safety settings such as multiple constraints, state dependent requirements, and partial observability, as well as validating the framework on realistic language model alignment tasks.
\newpage
%%%%%%%%%%%%%%%%%%%%%%%%%%%%%%%%%%%%%%%%%%%%%%%%%%%%%%%%%%%%%%%%
%% NOTE: THIS MARKS THE END OF THE "MAIN TEXT"
%%%%%%%%%%%%%%%%%%%%%%%%%%%%%%%%%%%%%%%%%%%%%%%%%%%%%%%%%%%%%%%%

%%%%%%%%%%%%%%%%%%%%%%%%%%%%%%%%%%%%%%%%%%%%%%%%%%%%%%%%%%%%%%%%
%% Bibliography
%%%%%%%%%%%%%%%%%%%%%%%%%%%%%%%%%%%%%%%%%%%%%%%%%%%%%%%%%%%%%%%%
\bibliography{main}
\bibliographystyle{rlj}

%%%%%%%%%%%%%%%%%%%%%%%%%%%%%%%%%%%%%%%%%%%%%%%%%%%%%%%%%%%%%%%%
% AUTHOR: If your paper has no supplementary materials, you may 
%         comment out the line below, which creates the title for
%         the supplementary materials.
%%%%%%%%%%%%%%%%%%%%%%%%%%%%%%%%%%%%%%%%%%%%%%%%%%%%%%%%%%%%%%%%
\beginSupplementaryMaterials
\appendix
\section{Related work}
\textbf{RLHF.}
Research on Reinforcement Learning from Human Feedback (RLHF) builds on several earlier paradigms in reinforcement learning and human–AI interaction. It extends preference-based reinforcement learning (PbRL) \citep{akrour2011preference}, which learns from pairwise preferences rather than numeric rewards, and integrates insights from human-in-the-loop reinforcement learning, reward shaping \citep{koppelman2006self}, and inverse reinforcement learning \cite{arora2021survey}. Compared with these precursors, RLHF generalizes feedback modalities to include critiques, corrections, and natural language instructions, while maintaining scalability through reward modeling and deep RL. Early PbRL focused mainly on direct policy optimization, whereas modern RLHF emphasizes reward model learning and active feedback collection to make human supervision more efficient.

RLHF’s prominence has grown through applications in large language model alignment, notably in ChatGPT and InstructGPT \citep{ouyang2022training}. Most theoretical RLHF papers \citep{zhan2023provable,zhu2023principled,wu2023making,du2024exploration} rely on reward inference methods that infer per-step rewards from human feedback (Christiano et al., 2017). These methods typically use the Bradley–Terry model \citep{bradley1952rank} to maximize the likelihood of trajectory comparisons. However, reward inference methods suffer from issues such as model misspecification, lack of ground-truth evaluation, distribution shift, and overfitting in joint reward–policy training \citep{casper2023open}. To overcome these drawbacks, \citep{rafailov2023direct} proposed Direct Preference Optimization (DPO), which directly fine-tunes models from human preferences, offering a simpler and more robust alternative but its theoretical justification only works for the bandit setting. \citep{zhang2024zeroth} employs the inverse of the preference function to estimate the difference between value functions, which serves as a fundamental basis for this paper.

\textbf{Constrained Markov Decision Process.}
CMDP is one of important mathematical foundation for safe reinforcement learning (safe RL) by extending standard MDP with cost functions and constraint bounds that formalize safety requirements. \citep{altman2021constrained} established the existence and structure of optimal randomized policies under constraints and revealed the computational hardness of feasibility checking for deterministic policies. Building upon this theoretical groundwork, modern studies reformulate safe RL as a constrained optimization problem solved via primal–dual and Lagrangian frameworks. Algorithms based on primal-dual methods have been widely adopted for solving constrained RL problems \citep{achiam2017constrained, tessler2018reward,chow2018risk}. \citep{paternain2019constrained} further proved a zero duality gap for CMDP, offering strong guarantees for dual-domain optimization. Subsequent advances such as UC-CFH \citep{kalagarla2021sample}, PG-PD \citep{ding2022ACC} and NPG-PD \citep{ding2020natural} established finite-sample and convergence analyses, bridging theoretical rigor and deep learning practice. 

\citep{dai2023safe} conducted an empirical study on Safe RLHF, bridging RLHF with safe reinforcement learning to develop a large language model that is helpful, safe, and responsive, 
which is another key inspiration for this paper.
\newpage
\section{Zeroth-Order Policy Gradient Primal-Dual Method from Human Feedback (ZPGPD-HF)}
\label{ZPGAP}
\begin{algorithm}[h]
\caption{Zeroth-Order Policy Gradient Primal-Dual Method from Human Feedback (ZPGPD-HF) }
\label{alg:ZPGPD-HF}
\textbf{Parameters:} initial parameter ${\boldsymbol{\theta}}_0$, learning rate $\alpha$, trim size $\Delta$, perturbation distance $\mu$.
\begin{algorithmic}[1]
\For{$t = 1:T$}
    \State sample $\boldsymbol{v}^{(t)}$ uniformly from a unit sphere $\mathbb{S}^{d-1} = \left\{ \boldsymbol{v} \in \mathbb{R}^d \,\middle|\, \|\boldsymbol{v}\|_2 = 1 \right\};$
    \State obtain a perturbed policy $\pi_{{\boldsymbol{\theta}}^{(t)} + \mu \boldsymbol{v}^{(t)}};$
    \For{$n = 1:N$}
        \State Draw $s_{n,0} \sim \rho$
        \State Sample a trajectory $\tau_{n,0}(s_{n,0}) \sim \pi_{{\boldsymbol{\theta}}^{(t)}}$ with initial state $s_{n,0}$ and length $H$;
        \State Sample a trajectory $\tau_{n,1}(s_{n,0}) \sim \pi_{{\boldsymbol{\theta}}^{(t)} + \mu \boldsymbol{v}^{(t)}}$ with initial state $s_{n,0}$ and length $H$;
        \State Query $M$ human evaluators about helpfulness with $(\tau_{n,1}(s_{n,0}), \tau_{n,0}(s_{n,0}))$ and obtain feedback $[o_{r,n,1}, \cdots, o_{r,n,M}];$
        
        \State Query $M$ human evaluators with $(\tau_{n,1}(s_{n,0}), \tau_{n,0}(s_{n,0}))$ about harmlessness and obtain feedback $[o_{g,n,1}, \cdots, o_{g,n,M}];$
        \State Estimate preference probability:
        \[
        p^{(t)}_{r,n}(\rho) = \text{clip}\left( \sum_{m=1}^M \frac{o_{r,n,m}}{M} ,\sigma_r(-G(H)),\sigma_r(G(H))\right);\]
        \[
        p^{(t)}_{g,n}(\rho) = \text{clip}\left( \sum_{m=1}^M \frac{o_{g,n,m}}{M}  ,\sigma_g(-G(H)),\sigma_g(G(H)) \right);
        \]
        \State Query $M$ human evaluators with $\tau_{n,0}$ to determine whether the response is harmless and obtain feedback $[o_{c,n,1}, \cdots, o_{c,n,M}];$
        
        \State Estimate absolute probability:
         \[
        p^{(t)}_{c,n}(\rho) = \text{clip}\left(  \sum_{m=1}^M \frac{o_{c,n,m}}{M} ,\sigma_c(-G(H)),\sigma_c(G(H)) \right);
        \]
    \EndFor
    \State Estimate the gradient reward function and utility function. 
    \[
    \hat{\boldsymbol{h}}^{(t)}_r(\rho) = \frac{d}{\mu} \frac{1}{N} \sum_{n=1}^N \sigma_r^{-1}(p^{(t)}_{r,n}) \boldsymbol{v}^{(t)};\quad
    \hat{\boldsymbol{h}}^{(t)}_g(\rho) = \frac{d}{\mu} \frac{1}{N} \sum_{n=1}^N \sigma_g^{-1}(p^{(t)}_{g,n}) \boldsymbol{v}^{(t)};
    \]
    \State Estimate the utility value:
    \[
    \hat{h}^{(t)}_c(\rho) =  \frac{1}{N} \sum_{n=1}^N \sigma_c^{-1}(p^{(t)}_{c,n});
    \]
    \State Update ${\boldsymbol{\theta}}^{(t+1)} = \mathcal{P}_{\boldsymbol{\Theta}} \left({\boldsymbol{\theta}}^{(t)} + \eta_1 (\hat{\boldsymbol{h}}^{(t)}_r+\lambda^{(t)}\hat{\boldsymbol{h}}^{(t)}_g)\right)$;
    \State Update $\lambda^{(t+1)} = \mathcal{P}_{\Lambda}\left(\lambda^{(t)} -\eta_2 \hat{h}_c^{(t)}(\rho) \right);$
\EndFor
\end{algorithmic}
\end{algorithm}
\newpage
\section{Supporting Lemmas}
In this section, we will first present the bias and variance introduced by the finite horizon approximation in Lemma~\ref{Lemma:finite horizon value}, \ref{Lemma:RVRQ}, \ref{Lemma:bias of fhr}, \ref{Lemma:var of fh}.

\begin{Lemma}
\label{Lemma:finite horizon value}
For any policy $\pi_{\boldsymbol{\theta}}$ and any initial state $s_0=s$, the bias between the finite horizon value function and the infinite horizon value function is bounded as
\begin{equation*}
\big\|  V^{\boldsymbol{\theta},H}_{\diamond}(s)  - V_{\diamond}^{\boldsymbol{\theta}}(s) \big\| 
\leq \underbrace{\frac{\gamma^H}{1 - \gamma}}_{\triangleq\,\epsilon_V(H)}.
\end{equation*}
Moreover, if the initial state $s_0$ is drawn from a distribution $\rho$, then
\begin{equation*}
\big\|  V^{\boldsymbol{\theta},H}_{\diamond}(\rho)  - V_{\diamond}^{\boldsymbol{\theta}}(\rho) \big\| 
\leq \epsilon_V(H).
\end{equation*}
\end{Lemma}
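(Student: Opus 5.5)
The plan is to write the difference between the infinite-horizon and finite-horizon value functions as the expected tail of the discounted sum, and then bound that tail termwise using $r,g\in[0,1]$.

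By definition,
\[
V^{\boldsymbol{\theta}}_{\diamond}(s) - V^{\boldsymbol{\theta},H}_{\diamond}(s)
= \mathbb{E}\!\left[\sum_{t=H+1}^{\infty}\gamma^t \diamond(s_t,a_t)\,\middle|\,\pi_{\boldsymbol{\theta}},\,s_0=s\right].
\]
Exchanging the infinite sum and the expectation is justified by monotone convergence, since every term is nonnegative. Because $0\le \diamond(s_t,a_t)\le 1$, this difference lies in the interval
\[
\Bigl[0,\ \sum_{t=H+1}^\infty \gamma^t\Bigr] = \Bigl[0,\ \frac{\gamma^{H+1}}{1-\gamma}\Bigr].
\]
The upper endpoint satisfies $\frac{\gamma^{H+1}}{1-\gamma}\le \frac{\gamma^H}{1-\gamma}=\epsilon_V(H)$ for $\gamma\in[0,1)$. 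This proves the first inequality, with absolute value in place of the norm since the quantity is a scalar. The endpoint case $\gamma=1$ formally allowed in the preliminaries gives an infinite right-hand side, so we either treat it as trivial or assume $\gamma<1$.

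For the $\rho$-version, we take the expectation over $s_0\sim\rho$. Then
\[
\bigl|V^{\boldsymbol{\theta},H}_{\diamond}(\rho)-V^{\boldsymbol{\theta}}_{\diamond}(\rho)\bigr| \le \mathbb{E}_{s_0\sim\rho}\bigl|V^{\boldsymbol{\theta},H}_{\diamond}(s_0)-V^{\boldsymbol{\theta}}_{\diamond}(s_0)\bigr| \le \epsilon_V(H),
\]
using Jensen's inequality (or the triangle inequality) together with the pointwise bound from the first part.

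There is no real obstacle here. The only points needing care are the off-by-one in the horizon indexing (the sum runs over $t=0,\dots,H$, so the tail starts at $H+1$, which yields an even slightly tighter bound) and the interchange of sum and expectation, which holds by nonnegativity.
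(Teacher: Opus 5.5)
Your proof is correct and follows the same route as the paper: write the difference as the expected discounted tail, bound it termwise using $\diamond\in[0,1]$ by a geometric series, and then average over $s_0\sim\rho$. Your tail correctly starts at $t=H+1$. The paper instead writes it from $t=H$ as an equality, which is slightly off but only loosens the bound, so both arguments give $\epsilon_V(H)=\gamma^H/(1-\gamma)$.
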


\begin{proof}
We first establish the bound for a fixed initial state $s$. By the definitions of the finite horizon and infinite horizon value functions, we have
\begin{equation*}
\begin{aligned}
\big\| V^{\boldsymbol{\theta},H}_{r}(s)  - V_{r}^{\boldsymbol{\theta}}(s) \big\|
&= \Big\| \mathbb{E} \Big[ \sum_{t=H}^{\infty} \gamma^{t} r(s^{(t)}, a^{(t)}) \,\big|\, \pi_{\boldsymbol{\theta}},\, s_0 = s \Big] \Big\|.
\end{aligned}
\end{equation*}
By the fact that rewards are uniformly bounded by $|r(s,a)| \le 1$, it follows that
\begin{equation*}
\big\| V^{\boldsymbol{\theta},H}_{r}(s)  - V_{r}^{\boldsymbol{\theta}}(s) \big\|
\le \sum_{t=H}^{\infty} \gamma^{t} = \frac{\gamma^H}{1 - \gamma}.
\end{equation*}

For the case where the initial state is drawn from a distribution $\rho$, taking the expectation over $s_0 \sim \rho$ yields
\begin{equation*}
\begin{aligned}
\big\| V^{\boldsymbol{\theta},H}_{r}(\rho)  - V_{r}^{\boldsymbol{\theta}}(\rho) \big\|
&= \Big\| \mathbb{E}_{s_0 \sim \rho} \Big[ \mathbb{E} \Big[ \sum_{t=H}^{\infty} \gamma^{t} r(s_{t}, a_{t}) \,\big|\, \pi_{\boldsymbol{\theta}},\, s_0 \Big] \Big] \Big\| \\
&\le \sum_{t=H}^{\infty} \gamma^{t} = \frac{\gamma^H}{1 - \gamma}.
\end{aligned}
\end{equation*}
Thus, in both cases, the truncation error introduced by a finite horizon $H$ is bounded by $\epsilon_V(H) = \gamma^H / (1 - \gamma)$.
\end{proof}
\begin{Lemma}
\label{Lemma:RVRQ}
Let $\tau(s,a)$ denote a trajectory of length $H$ generated under policy $\pi_{\boldsymbol{\theta}}$, initialized from $(s,a)\in\mathcal{S}\times\mathcal{A}$, and let $\tau(s)$ denote a trajectory of length $H$ generated under $\pi_{\boldsymbol{\theta}}$, initialized from $s\in\mathcal{S}$. Then, for any $\pi_{\boldsymbol{\theta}}$ and discount factor $\gamma \in (0,1)$, the truncation bias between the finite horizon expected return and the corresponding infinite horizon value functions satisfies
\begin{equation*}
\big\| \mathbb{E}\!\left[R^H_{\diamond}(\tau(s,a))
- Q^{\boldsymbol{\theta}}_{\diamond}(s,a)\right] \big\|
\le \epsilon_V(H),
\end{equation*}
and
\begin{equation*}
\big\| \mathbb{E}\!\left[R^H_{\diamond}(\tau(s))
- V^{\boldsymbol{\theta}}_{\diamond}(s)\right] \big\|
\le \epsilon_V(H).
\end{equation*}
\end{Lemma}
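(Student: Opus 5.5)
The argument reduces both claims to Lemma~\ref{Lemma:finite horizon value} by writing the expected truncated return as a truncated value function. First consider the trajectory $\tau(s)$ started at $s$ and generated by $\pi_{\boldsymbol{\theta}}$. By linearity of expectation and the definition of $R^H_\diamond$,
\[
\mathbb{E}\big[R^H_\diamond(\tau(s))\big]=\mathbb{E}\Big[\sum_{t=0}^{H}\gamma^t\diamond(s_t,a_t)\,\Big|\,\pi_{\boldsymbol{\theta}},s_0=s\Big]=V^{\boldsymbol{\theta},H}_\diamond(s).
\]
The second inequality then follows directly from Lemma~\ref{Lemma:finite horizon value}. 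Note that $V^{\boldsymbol{\theta}}_\diamond(s)$ is deterministic, so it passes through the expectation. The difference equals the tail $\mathbb{E}[\sum_{t>H}\gamma^t\diamond(s_t,a_t)]$, which lies between $0$ and $\gamma^{H+1}/(1-\gamma)\le\epsilon_V(H)$.

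For the state--action trajectory $\tau(s,a)$, the process is $s_0=s$, $a_0=a$, and then $s_{t+1}\sim P(\cdot\mid s_t,a_t)$, $a_t\sim\pi_{\boldsymbol{\theta}}(\cdot\mid s_t)$ for $t\ge1$. Recall the definition
\[
Q^{\boldsymbol{\theta}}_\diamond(s,a)=\mathbb{E}\Big[\sum_{t=0}^\infty\gamma^t\diamond(s_t,a_t)\,\Big|\,\pi_{\boldsymbol{\theta}},s_0=s,a_0=a\Big].
\]
Both $\mathbb{E}[R^H_\diamond(\tau(s,a))]$ and $Q^{\boldsymbol{\theta}}_\diamond(s,a)$ are expectations under this same trajectory law, differing only in truncation. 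Subtracting them and using linearity (justified by dominated convergence, since the summands are in $[0,1]$) gives
\[
Q^{\boldsymbol{\theta}}_\diamond(s,a)-\mathbb{E}\big[R^H_\diamond(\tau(s,a))\big]=\mathbb{E}\Big[\sum_{t=H+1}^{\infty}\gamma^t\diamond(s_t,a_t)\Big].
\]
Since $\diamond(s_t,a_t)\in[0,1]$, this is bounded in absolute value by $\sum_{t\ge H}\gamma^t=\gamma^H/(1-\gamma)=\epsilon_V(H)$.

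There is no real obstacle. The only care needed is to be consistent about whether the sum in $R^H_\diamond$ runs to $H$ or $H-1$: the paper's $R^H$ includes the term at $t=H$, so the tail starts at $H+1$, and the bound $\gamma^{H}/(1-\gamma)$ holds a fortiori. One should also state explicitly that the trajectory $\tau(s,a)$ takes the prescribed first action $a$ and follows $\pi_{\boldsymbol{\theta}}$ afterward, so that its law matches the one defining $Q^{\boldsymbol{\theta}}_\diamond$.
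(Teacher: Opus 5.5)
Your proposal is correct and follows essentially the same argument as the paper. You write the difference between the expected truncated return and $Q^{\boldsymbol{\theta}}_\diamond(s,a)$ (resp.\ $V^{\boldsymbol{\theta}}_\diamond(s)$) as the expected discounted tail under the same trajectory law, then bound it by the geometric sum $\gamma^H/(1-\gamma)$ using $\diamond\in[0,1]$. Your observation that the tail really starts at $t=H+1$ (the paper writes $\sum_{t=H}^{\infty}$) is a valid tightening, and routing the $V$ case through Lemma~\ref{Lemma:finite horizon value} is only a cosmetic difference.
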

\begin{proof}
Consider first the case where the initial state–action pair $(s,a)$ is fixed. Then,
\begin{equation*}
\begin{aligned}
\big\| \mathbb{E}\!\left[ R^H_{\diamond}(\tau(s,a)) \right]
- Q^{\boldsymbol{\theta}}_{\diamond}(s,a) \big\|
&= \Big\| \mathbb{E}\!\left[ \sum_{t=H}^{\infty} \gamma^{t}
r_{\diamond}\!\left(s_{t}, a_{t}\right)
\,\middle|\, s_{0}=s,\, a_{0}=a \right] \Big\| \\
&\le \sum_{t=H}^{\infty} \gamma^{t}
= \frac{\gamma^{H}}{1 - \gamma}
=\epsilon_V(H).
\end{aligned}
\end{equation*}

Next, if only the initial state $s$ is fixed, we have
\begin{equation*}
\begin{aligned}
\big\| \mathbb{E}\!\left[ R^H_{\diamond}(\tau(s)) \right]
- V^{\boldsymbol{\theta}}_{\diamond}(s) \big\|
&= \Big\| \mathbb{E}\!\left[ \sum_{t=H}^{\infty} \gamma^{t}
r_{\diamond}\!\left(s_{t}, a_{t}\right)
\,\middle|\, s_{0}=s \right] \Big\| \\
&\le \sum_{t=H}^{\infty} \gamma^{t}
= \frac{\gamma^{H}}{1 - \gamma}
= \epsilon_V(H).
\end{aligned}
\end{equation*}
\end{proof}
\begin{Lemma}
\label{Lemma:bias of fhr}
Let $\tau_{n,0}$, $n = 1, 2, \ldots, N$, denote independent trajectories generated under policy $\pi_{\boldsymbol{\theta}}$ with initial states $s_0 \sim \rho$. Then, the bias between the expected average finite horizon return and the infinite horizon value function is bounded as
\begin{equation*}
\bigg\|
\mathbb{E}\!\left[
\frac{1}{N}\sum_{n=1}^N R^H_g(\tau_{n,0})
- V^{\boldsymbol{\theta}}_g(\rho)
\right]
\bigg\|
\le \epsilon_V(H).
\end{equation*}
\end{Lemma}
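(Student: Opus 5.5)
The plan is to reduce the statement to Lemma~\ref{Lemma:finite horizon value} (equivalently the second bound of Lemma~\ref{Lemma:RVRQ}) by linearity of expectation. Because $V^{\boldsymbol{\theta}}_g(\rho)$ is deterministic, I would first rewrite the quantity inside the norm as
\[
\mathbb{E}\!\left[\frac{1}{N}\sum_{n=1}^N R^H_g(\tau_{n,0})\right] - V^{\boldsymbol{\theta}}_g(\rho)
= \frac{1}{N}\sum_{n=1}^N \Big(\mathbb{E}\big[R^H_g(\tau_{n,0})\big] - V^{\boldsymbol{\theta}}_g(\rho)\Big).
\]

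Next I would identify each summand. Each $\tau_{n,0}$ is produced by drawing $s_{n,0}\sim\rho$ and then rolling out $\pi_{\boldsymbol{\theta}}$ for $H$ steps. Conditioning on $s_{n,0}$ and applying the tower property gives
\[
\mathbb{E}\big[R^H_g(\tau_{n,0})\big]
= \mathbb{E}_{s_0\sim\rho}\Big[\mathbb{E}\big[\textstyle\sum_{t=0}^H\gamma^t g(s_t,a_t)\,\big|\,\pi_{\boldsymbol{\theta}},s_0\big]\Big]
= V^{\boldsymbol{\theta},H}_g(\rho).
\]
Every summand therefore equals $V^{\boldsymbol{\theta},H}_g(\rho)-V^{\boldsymbol{\theta}}_g(\rho)$. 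This term does not depend on $n$; independence across $n$ is not needed here, only that each trajectory has the same law.

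Finally I would apply the triangle inequality to the average, or simply note that all the terms are identical. The second bound of Lemma~\ref{Lemma:finite horizon value} then gives $|V^{\boldsymbol{\theta},H}_g(\rho)-V^{\boldsymbol{\theta}}_g(\rho)|\le \gamma^H/(1-\gamma)=\epsilon_V(H)$, which is the claim.

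There is no real obstacle. The only point needing care is an indexing mismatch. The sum in $R^H_g$ runs over $t=0,\dots,H$, so the tail actually starts at $t=H+1$ and is bounded by $\gamma^{H+1}/(1-\gamma)\le\epsilon_V(H)$. The inequality therefore holds regardless of the off-by-one convention used in the earlier lemma.
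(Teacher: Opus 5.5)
Your proposal is correct and follows essentially the same route as the paper. The paper adds and subtracts $V^{\boldsymbol{\theta},H}_g(\rho)$, uses $\mathbb{E}[R^H_g(\tau_{n,0})]=V^{\boldsymbol{\theta},H}_g(\rho)$ to make the first term vanish, and bounds the truncation term by Lemma~\ref{Lemma:finite horizon value}. Your two side remarks are also accurate: independence across $n$ is not needed, and since the tail starts at $t=H+1$, it gives the slightly sharper bound $\gamma^{H+1}/(1-\gamma)$.
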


\begin{proof}
We begin by adding and subtracting $V^{\boldsymbol{\theta},H}_g(\rho)$ within the expectation and applying the triangle inequality.
\begin{equation*}
\begin{aligned}
\bigg\|
\mathbb{E}\!\left[
\frac{1}{N}\sum_{n=1}^N R^H_g(\tau_{n,0})
- V^{\boldsymbol{\theta}}_g(\rho)
\right]
\bigg\|
&= 
\Big\|
\mathbb{E}\!\left[
\frac{1}{N}\sum_{n=1}^N
\big(R^H_g(\tau_{n,0}) - V^{\boldsymbol{\theta},H}_g(\rho)\big)
+ V^{\boldsymbol{\theta},H}_g(\rho) - V^{\boldsymbol{\theta}}_g(\rho)
\right]
\Big\| \\[4pt]
&\overset{(a)}{\le}
\Big\|
\mathbb{E}\!\left[
R^H_g(\tau_{n,0}) - V^{\boldsymbol{\theta},H}_g(\rho)
\right]
\Big\|
+
\big\|
V^{\boldsymbol{\theta},H}_g(\rho) - V^{\boldsymbol{\theta}}_g(\rho)
\big\|\\
&\overset{(b)}{\leq}\epsilon_V(H).
\end{aligned}
\end{equation*}

In step~$(a)$, the inequality follows from the Cauchy–Schwarz inequality and the independence of trajectories, which ensures that the cross terms vanish.  
For the first term of $(b)$, by the definition of the finite horizon value function $V^{\boldsymbol{\theta},H}_g(\rho)$, we have
\[
\mathbb{E}\!\left[ R^H_g(\tau_{n,0}) \right]
= V^{\boldsymbol{\theta},H}_g(\rho),
\]
so the term equals zero.  
For the second term of $(b)$, Lemma~\ref{Lemma:finite horizon value} implies that
\[
\big\|
V^{\boldsymbol{\theta},H}_g(\rho) - V^{\boldsymbol{\theta}}_g(\rho)
\big\|
\le \epsilon_V(H),
\]
which completes the proof.
\end{proof}

\begin{Lemma}
\label{Lemma:var of fh}
Let $\tau_{n,0}$, $n=1,2,\ldots,N$, be independent trajectories sampled under policy $\pi_{\boldsymbol{\theta}}$ with initial state $s_0 \sim \rho$. Then, the variance between the average trajectory reward and the infinite horizon value function satisfies
\begin{equation*}
\mathbb{E}\!\left[\left\|
\frac{1}{N}\sum_{n=1}^N R^H_g(\tau_{n,0})
- V_g^{\boldsymbol{\theta}}(\rho)
\right\|_2^2\right]
\le
\underbrace{\frac{2G(H)^2}{N} + 2\epsilon_V^2(H)}_{\triangleq \sigma_1(H,N)}.
\end{equation*}
\end{Lemma}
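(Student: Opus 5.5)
The plan is the standard bias--variance split, with the finite-horizon value $V_g^{\boldsymbol{\theta},H}(\rho)$ as the intermediate quantity. Write
\[
\frac{1}{N}\sum_{n=1}^N R^H_g(\tau_{n,0}) - V_g^{\boldsymbol{\theta}}(\rho)
= \underbrace{\frac{1}{N}\sum_{n=1}^N \big(R^H_g(\tau_{n,0}) - V_g^{\boldsymbol{\theta},H}(\rho)\big)}_{X}
+ \underbrace{V_g^{\boldsymbol{\theta},H}(\rho) - V_g^{\boldsymbol{\theta}}(\rho)}_{Y},
\]
and apply the elementary inequality $(x+y)^2 \le 2x^2 + 2y^2$. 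This yields $\mathbb{E}[X^2 + Y^2]$ terms with factor $2$, which is where the constants $2$ in $\sigma_1(H,N)$ come from. The $Y$ term is deterministic, and Lemma~\ref{Lemma:finite horizon value} gives $Y^2 \le \epsilon_V^2(H)$ directly.

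For the $X$ term, set $Z_n := R^H_g(\tau_{n,0}) - V_g^{\boldsymbol{\theta},H}(\rho)$. Each $Z_n$ has mean zero, since by definition $\mathbb{E}[R^H_g(\tau_{n,0})] = V_g^{\boldsymbol{\theta},H}(\rho)$ when the initial state is drawn from $\rho$ and the trajectory is run under $\pi_{\boldsymbol{\theta}}$. The trajectories are independent, so the $Z_n$ are independent and all cross terms $\mathbb{E}[Z_n Z_{n'}]$ with $n \ne n'$ vanish. Hence
\[
\mathbb{E}[X^2] = \frac{1}{N^2}\sum_{n=1}^N \mathbb{E}[Z_n^2] = \frac{1}{N}\operatorname{Var}\big(R^H_g(\tau_{1,0})\big).
\]

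The variance is bounded using $g \in [0,1]$, which gives $R^H_g(\tau) \in [0, G(H)]$ and $V_g^{\boldsymbol{\theta},H}(\rho) \in [0,G(H)]$. Therefore $|Z_n| \le G(H)$, and so $\mathbb{E}[Z_n^2] \le G(H)^2$. Substituting, $\mathbb{E}[X^2] \le G(H)^2/N$, and combining with the bound on $Y$ gives $2G(H)^2/N + 2\epsilon_V^2(H) = \sigma_1(H,N)$.

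No step is a genuine obstacle. The only care needed is in justifying that the cross terms vanish, which requires both the exact unbiasedness of $R^H_g$ for the finite-horizon value $V_g^{\boldsymbol{\theta},H}(\rho)$ (not the infinite-horizon one) and the independence of the $N$ trajectories. Note also that the bound could be sharpened to $G(H)^2/(4N)$ via Popoviciu's inequality, but the stated constant does not require it.
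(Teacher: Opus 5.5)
Your proposal is correct and follows essentially the same route as the paper: the split through the finite-horizon value $V_g^{\boldsymbol{\theta},H}(\rho)$, the inequality $(x+y)^2\le 2x^2+2y^2$, removal of the cross terms by independence and unbiasedness, Lemma~\ref{Lemma:finite horizon value} for the deterministic term, and the bound $G(H)^2$ on the single-trajectory second moment. Your justification of the vanishing cross terms, $\mathbb{E}[Z_iZ_j]=\mathbb{E}[Z_i]\,\mathbb{E}[Z_j]=0$ for $i\neq j$, is cleaner than the paper's write-up, which states those cross terms as products of norms (whose expectations are not zero).
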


\begin{proof}
The variance between a single trajectory return and the finite horizon value function is
\begin{equation*}
\label{eq:rv}
\begin{aligned}
&\mathbb{E}\!\left[\Big\|\sum_{n=1}^N
\big(R^H_g(\tau_{n,0}) - V_g^{\boldsymbol{\theta},H}(\rho)\big)\Big\|_2^2\right] \\
&= \sum_{n=1}^N
\mathbb{E}\!\left[\|R^H_g(\tau_{n,0}) - V_g^{\boldsymbol{\theta},H}(\rho)\|_2^2\right]
+ \sum_{i\neq j}
\mathbb{E}\!\left[\|R^H_g(\tau_{i,0}) - V_g^{\boldsymbol{\theta},H}(\rho)\|_2
\|R^H_g(\tau_{j,0}) - V_g^{\boldsymbol{\theta},H}(\rho)\|_2\right] \\
&\overset{(a)}{=}
\sum_{n=1}^N
\mathbb{E}\!\left[\|R^H_g(\tau_{n,0}) - V_g^{\boldsymbol{\theta},H}(\rho)\|_2^2\right]\\
&\quad+ \sum_{i\neq j}
\mathbb{E}\!\left[\|R^H_g(\tau_{i,0}) - V_g^{\boldsymbol{\theta},H}(\rho)\|_2\right]
\mathbb{E}\!\left[\|R^H_g(\tau_{j,0}) - V_g^{\boldsymbol{\theta},H}(\rho)\|_2\right] \\
&\overset{(b)}{=}
N\,\mathbb{E}\!\left[\|R^H_g(\tau_{n,0}) - V_g^{\boldsymbol{\theta},H}(\rho)\|_2^2\right],
\end{aligned}
\end{equation*}
where in (a) independence of trajectories is applied, and (b) follows from the definition of $V_g^{\boldsymbol{\theta},H}(\rho)$.

Then,
\begin{equation*}
\begin{aligned}
&\mathbb{E}\!\left[\Big\|
\frac{1}{N}\sum_{n=1}^N R^H_g(\tau_{n,0})
- V_g^{\boldsymbol{\theta}}(\rho)
\Big\|_2^2\right] \\
&=
\mathbb{E}\!\left[\Big\|
\frac{1}{N}\sum_{n=1}^N
\big(R^H_g(\tau_{n,0}) - V_g^{\boldsymbol{\theta},H}(\rho)\big)
+ V_g^{\boldsymbol{\theta},H}(\rho)
- V_g^{\boldsymbol{\theta}}(\rho)
\Big\|_2^2\right] \\
&\overset{(a)}{\le}
\frac{2}{N^2}
\mathbb{E}\!\left[
\Big\|\sum_{n=1}^N
\big(R^H_g(\tau_{n,0}) - V_g^{\boldsymbol{\theta},H}(\rho)\big)
\Big\|_2^2\right]
+ 2\|V_g^{\boldsymbol{\theta},H}(\rho)
- V_g^{\boldsymbol{\theta}}(\rho)\|_2^2 \\
&\overset{(b)}{\le}
\frac{2}{N}\,
\mathbb{E}\!\left\|
\sum_{t=0}^{H}\gamma^t g(s_{t},a_{t})
\right\|_2^2
+ 2\epsilon_V^2(H) \\
&\overset{(c)}{\le}
\frac{2G^2}{N} + 2\epsilon_V^2(H),
\end{aligned}
\end{equation*}
where $(a)$ follows from the Cauchy–Schwarz inequality and the independence of trajectories,  
$(b)$ follows from Lemma~\ref{Lemma:finite horizon value} and the inequality $\operatorname{Var}(X)\le\mathbb{E}[X^2]$,  
and (c) is true because of the boundedness of $g(s_{t},a_{t})$.
\end{proof}

\begin{comment}
\begin{Lemma}[Variance of policy gradient] the variance of the corresponding policy gradients are bounded by
\begin{equation}
 \mathbb{E} \left[ \left\|\mathbb{E}\|\nabla_{\boldsymbol{\theta}} V^{{\boldsymbol{\theta}}^{(t)},H}_{\diamond}(\rho)\| - \nabla_{\boldsymbol{\theta}} V^{{\boldsymbol{\theta}}^{(t)}}_\diamond(\rho)\right\|^2  \right]
\leq 2\epsilon_1^2(H)
\end{equation}

\begin{equation}
 \mathbb{E} \left[\left\|  \mathbb{E}\|V^{{\boldsymbol{\theta}}^{(t)},H}_{\diamond}(\rho)\|  - V_{\diamond}^{{\boldsymbol{\theta}}^{(t)}} (\rho) \right\|^2\right] 
\leq 2\epsilon_2^2\bigl(H\bigr)
\end{equation}

\end{Lemma}
\end{comment}

We next provide the bias and variance bounds for estimating reward or utility differences induced by human feedback, which is established in \citet{zhang2024zeroth}.

\begin{Lemma}
\label{Lemma:human_feedback_bias_var}
Let Assumption~\ref{assum:link} holds. 
For any pair of trajectories $(\tau_{n,0}, \tau_{n,1})$ evaluated by $M$ independent human evaluators, the term $\sigma_\diamond^{-1}(p^{(t)}_{\diamond,n})$ generated in Algorithm~\ref{alg:NPGPD-HF} or Algorithm~\ref{alg:ZPGPD-HF} satisfies the following bounds:
\begin{equation*}
\label{eq:human_bias}
\mathbb{E}\!\left[
\big|\, \sigma_\diamond^{-1}(p^{(t)}_{\diamond,n})
- \big(R_\diamond^H(\tau_{n,1}) - R_\diamond^H(\tau_{n,0})\big)
\big|
\right]
\le
L \sqrt{\frac{2 \log M}{M}}
+ \frac{2G(H)}{M^2},
\end{equation*}
and
\begin{equation*}
\label{eq:human_var}
\mathbb{E}\!\left[
\big|\, \sigma_\diamond^{-1}(p^{(t)}_{\diamond,n})
- \big(R_\diamond^H(\tau_{n,1}) - R_\diamond^H(\tau_{n,0})\big)
\big|^2
\right]
\le
\frac{2L^2 \log M}{M}
+ \frac{4G(H)^2}{M^2}.
\end{equation*}
\end{Lemma}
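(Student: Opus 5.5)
We condition on the pair $(\tau_{n,0},\tau_{n,1})$ and write $\Delta := R_\diamond^H(\tau_{n,1}) - R_\diamond^H(\tau_{n,0})$. Since each return lies in $[0,G(H)]$, we have $\Delta\in[-G(H),G(H)]$. The $M$ evaluator bits are then i.i.d.\ Bernoulli with mean $p^\star := \sigma_\diamond(\Delta)$, and $p^\star$ lies in $[\sigma_\diamond(-G(H)),\sigma_\diamond(G(H))]$ by monotonicity. Let $\bar p$ be the empirical mean and $p = \mathrm{clip}[\bar p,\cdot,\cdot]$. Because $p^\star$ already lies in the clipping interval and clipping is a projection onto an interval, $|p-p^\star|\le|\bar p-p^\star|$. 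Both $p$ and $p^\star$ lie in the interval on which $\sigma_\diamond^{-1}$ is $L$-Lipschitz (a subinterval of the one in Assumption~\ref{assum:link}, since $G(H)\le G(1/(1-\gamma))$ up to the harmless horizon convention). Hence
\[
\big|\sigma_\diamond^{-1}(p)-\Delta\big| = \big|\sigma_\diamond^{-1}(p)-\sigma_\diamond^{-1}(p^\star)\big| \le \min\{L|\bar p-p^\star|,\;2G(H)\},
\]
where the second bound holds because both points of the range lie in $[-G(H),G(H)]$.

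Next we apply Hoeffding's inequality: $\mathbb{P}(|\bar p-p^\star|>\epsilon)\le 2\exp(-2M\epsilon^2)$. We choose $\epsilon=\sqrt{\log M/M}$, which gives failure probability $2/M^2$. Splitting the expectation on the good and bad events, the first moment is at most $L\sqrt{\log M/M} + 2G(H)\cdot 2/M^2$. To land exactly on the stated constants $L\sqrt{2\log M/M}+2G(H)/M^2$, one tunes $\epsilon=\sqrt{2\log M/M}$; this makes the tail $2M^{-4}\le M^{-2}$, and the bad-event contribution is then absorbed into $2G(H)/M^2$. For the second moment, the same split gives $L^2\epsilon^2 + (2G(H))^2\cdot\mathbb{P}(\text{bad})$. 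With $\epsilon^2 = 2\log M/M$ this is at most $2L^2\log M/M + 4G(H)^2/M^2$. Taking the outer expectation over the trajectories preserves both bounds, since they hold uniformly in $\Delta$.

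The main obstacle is bookkeeping rather than depth. First, one must verify that the clip interval matches the Lipschitz domain of $\sigma_\diamond^{-1}$, including for the absolute-feedback link $\sigma_c$, where the argument is $R_g^H(\tau)-b$ and one must check that it lies in the assumed range. Second, the Hoeffding tail constant must be matched to the stated $M^{-2}$ terms. Beyond that, the argument is just Lipschitz continuity of the inverse link combined with concentration of the empirical Bernoulli mean.
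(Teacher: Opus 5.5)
Your argument is correct. The paper gives no proof of this lemma; it imports the bound from \citet{zhang2024zeroth}. Your derivation is the standard argument behind that cited result: condition on the pair, note that $p^\star=\sigma_\diamond(\Delta)$ lies in the clipping interval so clipping can only move $\bar p$ toward it, combine $L$-Lipschitzness of $\sigma_\diamond^{-1}$ with the worst-case bound $2G(H)$, then apply Hoeffding with $\epsilon=\sqrt{2\log M/M}$ and a good/bad-event split. It works as a self-contained replacement for the citation.

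Writing it out also exposes two conditions the citation hides. First, $\sigma_\diamond^{-1}$ must be $L$-Lipschitz on $[\sigma_\diamond(-G(H)),\sigma_\diamond(G(H))]$. Assumption~\ref{assum:link}, as literally stated on $[-G(1/(1-\gamma)),G(1/(1-\gamma))]$, guarantees this only when $H\le 1/(1-\gamma)$; this fails, for instance, in the paper's own experiment with $H=80$ and $\gamma=0.9$. So the assumption has to be read with $1/(1-\gamma)=\sup_H G(H)$, which your parenthetical implicitly does. Second, the absorption $2M^{-4}\le M^{-2}$ needs $M\ge 2$. For $M=1$ both stated bounds reduce to the trivial values $2G(H)$ and $4G(H)^2$, which your $\min\{\cdot,2G(H)\}$ bound already supplies, so the lemma still holds.
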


The absolute value of the utility function can similarly be bounded by the same constant.

\begin{Lemma}
\label{Lemma:sigmac}
Let Assumption~\ref{assum:link}holds.
For any trajectory $\tau_{n,0}$ evaluated by $M$ independent human evaluators, the term $\sigma_c^{-1}(p^{(t)}_{c,n})$ generated in Algorithm~\ref{alg:NPGPD-HF} or Algorithm~\ref{alg:ZPGPD-HF} satisfies the following bounds:
\begin{equation*}
\label{eq:sigmac_bias}
\mathbb{E}\!\left[
\big|\sigma_c^{-1}(p^{(t)}_{c,n}) - (R_c^H(\tau_{n,0}) - b)\big|
\right]
\le
L \sqrt{\frac{2 \log M}{M}}
+ \frac{2G(H)}{M^2},
\end{equation*}
and
\begin{equation*}
\label{eq:sigmac_var}
\mathbb{E}\!\left[
\big|\sigma_c^{-1}(p^{(t)}_{c,n}) - (R_c^H(\tau_{n,0}) - b)\big|^2
\right]
\le
\frac{2L^2 \log M}{M}
+ \frac{4G(H)^2}{M^2}.
\end{equation*}
\end{Lemma}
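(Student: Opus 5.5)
The plan is to reduce Lemma~\ref{Lemma:sigmac} to the same argument that gives Lemma~\ref{Lemma:human_feedback_bias_var}, replacing the pairwise difference $R^H_\diamond(\tau_{n,1})-R^H_\diamond(\tau_{n,0})$ by the scalar $x:=R_g^H(\tau_{n,0})-b$. Following the virtual-trajectory interpretation, this $x$ plays the role of a reward difference against a trajectory $\tau_v$ with $R_g^H(\tau_v)=b$. Condition on $\tau_{n,0}$ throughout. The $M$ answers $o_{c,n,m}$ are then i.i.d.\ Bernoulli with mean $q:=\sigma_c(x)$. Since $R_g^H(\tau_{n,0})\in[0,G(H)]$ and $b$ lies in the attainable utility range, $x\in[-G(H),G(H)]$, so $q\in[\sigma_c(-G(H)),\sigma_c(G(H))]$. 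This is exactly the interval onto which $p^{(t)}_{c,n}$ is clipped, and it sits inside the domain on which $\sigma_c^{-1}$ is $L$-Lipschitz by Assumption~\ref{assum:link}.

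First, use that clipping onto an interval containing $q$ is nonexpansive: $|p^{(t)}_{c,n}-q|\le|\bar o-q|$, where $\bar o=\frac1M\sum_m o_{c,n,m}$. Combined with the Lipschitz property of $\sigma_c^{-1}$ and $\sigma_c^{-1}(q)=x$, this gives
\[
|\sigma_c^{-1}(p^{(t)}_{c,n})-x|\le L|\bar o-q|,
\]
and also the crude bound $|\sigma_c^{-1}(p^{(t)}_{c,n})-x|\le 2G(H)$, because both points lie in $[-G(H),G(H)]$.

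Second, define the good event $\mathcal E=\{|\bar o-q|\le\sqrt{2\log M/M}\}$ and apply Hoeffding's inequality: $\mathbb P(\mathcal E^c)\le 2\exp(-2M\cdot 2\log M/M)=2M^{-4}\le M^{-2}$ for $M\ge2$. Split the expectation over $\mathcal E$ and $\mathcal E^c$.
\begin{itemize}
\item On $\mathcal E$, the error is at most $L\sqrt{2\log M/M}$, and its square is at most $2L^2\log M/M$.
\item On $\mathcal E^c$, use the crude bound. The first moment contributes at most $2G(H)\,\mathbb P(\mathcal E^c)\le 2G(H)/M^2$, and the second moment contributes at most $4G(H)^2/M^2$.
\end{itemize}
Adding the two pieces and then taking expectation over $\tau_{n,0}$ yields both claimed inequalities.

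The main obstacle is making sure $b$ is such that $x$ stays in $[-G(H),G(H)]$, so that $\sigma_c^{-1}(q)=x$ holds exactly and the clipping does not introduce extra bias. I would justify this by noting $b\in[0,G(H)]$, as implicitly assumed for a meaningful constraint. If not, I would enlarge the clipping interval to $[\sigma_c(-b),\sigma_c(G(H)-b)]$, which changes only the constant in the crude bound.
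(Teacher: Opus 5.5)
Your proposal is correct and follows the paper's route. The paper gives no separate proof; it only asserts, via the virtual-trajectory interpretation with $R_g^H(\tau_v)=b$, that the bound follows in the same way as Lemma~\ref{Lemma:human_feedback_bias_var} from \citet{zhang2024zeroth}, and your steps (conditioning, nonexpansive clipping, Lipschitz inverse, Hoeffding good/bad-event split) are exactly that argument and give the stated constants.

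You are also right that $0\le b\le G(H)$ is genuinely needed and is only implicit in the paper. If $b>G(H)$, the clipped estimate never drops below $-G(H)$, so for trajectories with $R_g^H(\tau_{n,0})<b-G(H)$ the error is bounded below by a positive constant independent of $M$. Adding this condition as a hypothesis is the right fix. Enlarging the clipping interval instead would prove the bound for a modified algorithm, not the one the lemma refers to.
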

\begin{comment}
\begin{Lemma}
For any $i$, the gradient norm is bounded as
\begin{equation}
\begin{aligned}
&\left\| \nabla_{\bm{\boldsymbol{\theta}}_i} f_i(\bm{\boldsymbol{\theta}}_i, \mathcal{U}^r \bm{\lambda}_i) \right\| \\
&\leq \mathbb{E} \left[ 
\left\| \sum_{t=0}^{\infty} \left( \sum_{\tau=0}^t \nabla \log \pi(a_i^\tau | \bm{s}^\tau; \bm{\boldsymbol{\theta}}_i) \right) 
\gamma^t R_i(\bm{s}^t, \bm{a}^t)
- \sum_{t=0}^{\infty} \left\langle \left( \sum_{\tau=0}^t \nabla \log \pi(a_i^\tau | \bm{s}^\tau; \bm{\boldsymbol{\theta}}_i) \right) 
\gamma^t C_i(\bm{s}^t, \bm{a}^t), \mathcal{U}^r \bm{\lambda}_i \right\rangle \right\| 
\right] \\
&\leq \frac{G' G \left(1 + m \mathcal{U}^0 \bar{\sigma}_{\lambda}\right)}{(1 - \gamma)^2} 
\triangleq \widetilde{G},
\end{aligned}
\end{equation}
\end{Lemma}
\end{comment}

The following Lemma from \citep{agarwal2021theory} establishes the Lipschitz continuity of the gradient of the value function in the tabular setting.
\begin{Lemma}
\label{Lemma:smoothness}
Let $L'=\frac{2 \gamma |\mathcal{A}|}{(1 - \gamma)^{3}}$ , the gradient of infinite horizon value function is $L'$-Lipschitz continuous:
\begin{equation*}
\left\| \nabla_{\boldsymbol{\theta}} V^{\boldsymbol{\theta}}(\rho) - \nabla_{\boldsymbol{\theta}} V^{{\boldsymbol{\theta}}'}(\rho) \right\|_{2}
\leq L^{\prime} \left\| {\boldsymbol{\theta}} - {\boldsymbol{\theta}}' \right\|_{2}
\end{equation*}
\end{Lemma}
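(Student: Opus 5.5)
The statement to prove is Lemma~\ref{Lemma:smoothness}: in the tabular setting the policy value gradient is $L'$-Lipschitz with $L'=2\gamma|\mathcal{A}|/(1-\gamma)^3$. I would follow the second-directional-derivative argument of \citet{agarwal2021theory} for the direct parametrization. By the mean value theorem it suffices to show that $|\tfrac{d^2}{d\alpha^2} V^{\boldsymbol{\theta}+\alpha \boldsymbol{u}}(s_0)|_{\alpha=0}\le L'\|\boldsymbol{u}\|_2^2$ for every $\boldsymbol{\theta}$, every unit direction $\boldsymbol{u}$ keeping $\boldsymbol{\theta}+\alpha\boldsymbol{u}$ in $\Delta_A^{|S|}$ for small $\alpha$, and every $s_0$. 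Averaging over $s_0\sim\rho$ preserves the bound, and a uniform bound on this Hessian quadratic form gives Lipschitz continuity of the gradient along segments.

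Fix $\boldsymbol{u}$ and write $\pi_\alpha=\pi_{\boldsymbol{\theta}+\alpha \boldsymbol{u}}$. Let $P(\alpha)$ be the state–action transition matrix $[P(\alpha)]_{(s,a),(s',a')}=\pi_\alpha(a'|s')P(s'|s,a)$, and $M(\alpha)=(I-\gamma P(\alpha))^{-1}$. Then $Q^{\alpha}(s_0,a_0)=e_{(s_0,a_0)}^\top M(\alpha)r$ and $V^{\alpha}(s_0)=\sum_{a_0}\pi_\alpha(a_0|s_0)Q^\alpha(s_0,a_0)$.

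The key facts come from direct parametrization. Here $\pi_\alpha$ is affine in $\alpha$, so $\tfrac{d^2}{d\alpha^2}\pi_\alpha=0$ and $\tfrac{d}{d\alpha}\pi_\alpha(a|s)=u_{s,a}$. Consequently $\sum_a|\tfrac{d}{d\alpha}\pi_\alpha(a|s)|\le\sum_a|u_{s,a}|\le\sqrt{|A|}\|\boldsymbol{u}\|_2$. This yields $\|\tfrac{dP}{d\alpha}x\|_\infty\le\sqrt{|A|}\|x\|_\infty$ and $\tfrac{d^2P}{d\alpha^2}=0$. Differentiating $M$ gives $M'=\gamma MP'M$ and $M''=2\gamma^2MP'MP'M$. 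Using $\|M x\|_\infty\le\|x\|_\infty/(1-\gamma)$ (rows of $(1-\gamma)M$ are distributions), I would bound $\max|\tfrac{d}{d\alpha}Q^\alpha|\le\gamma\sqrt{|A|}/(1-\gamma)^2$ and $\max|\tfrac{d^2}{d\alpha^2}Q^\alpha|\le 2\gamma^2|A|/(1-\gamma)^3$.

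Finally, expand by the product rule:
\[
\tfrac{d^2V^{\alpha}(s_0)}{d\alpha^2}=\sum_{a_0}\Big(\pi''Q+2\pi'Q'+\pi Q''\Big).
\]
The first term vanishes. The middle term is bounded by $2\sqrt{|A|}\cdot\gamma\sqrt{|A|}/(1-\gamma)^2$, and the last by $2\gamma^2|A|/(1-\gamma)^3$. Summing and using $(1-\gamma)+\gamma=1$ gives $2\gamma|A|/(1-\gamma)^3=L'$.

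I expect the main obstacle to be the norm bookkeeping: keeping the $\ell_1$-to-$\ell_2$ conversion to exactly one factor $\sqrt{|A|}$ per derivative of $\pi$, so that the constant comes out as $|A|$ and not $|A|^2$ or $|S||A|$. I would also note that if one wanted the softmax case, one would instead redo this with $\pi''\neq 0$, which changes the constant. Since the lemma is quoted from \citet{agarwal2021theory} and used for ZPGPD-HF under direct parametrization, I would state it for the direct case.
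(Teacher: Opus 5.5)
Your proposal is correct and follows the same argument as the direct-parametrization smoothness lemma of \citet{agarwal2021theory}, which the paper quotes without proof. That argument uses the state--action resolvent $M(\alpha)=(I-\gamma P(\alpha))^{-1}$, the fact that $\pi''=0$, one factor $\sqrt{|A|}$ per derivative of $\pi$, and the product-rule split $2\gamma|A|/(1-\gamma)^2+2\gamma^2|A|/(1-\gamma)^3=L'$.

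One small correction: do not restrict $\boldsymbol{u}$ to directions keeping $\boldsymbol{\theta}+\alpha\boldsymbol{u}$ in $\Delta_A^{|S|}$. A quadratic-form bound over feasible directions alone does not bound the operator norm of the full Hessian, so it does not bound the Lipschitz constant of the full gradient in $\mathbb{R}^{|S||A|}$. Your bounds at $\alpha=0$ never use feasibility, and $I-\gamma P(\alpha)$ stays invertible for small $|\alpha|$, so simply take every unit $\boldsymbol{u}$, as \citet{agarwal2021theory} do.
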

Then, for the smoothed function, \citet{liu2018zeroth} establishes the following properties.
\begin{Lemma}
\label{Lemma:zeroth}
Let $d=|S||A|$,  $L'=\frac{2 \gamma |\mathcal{A}|}{(1 - \gamma)^{3}}$
and  \(V_{\diamond}^{\mu,{\boldsymbol{\theta}}}(\rho) = \mathbb{E}_{\boldsymbol{v}'} \left[V_\diamond^{{\boldsymbol{\theta}}+\mu\boldsymbol{v}'}(\rho) \right]\)

With Lemma \ref{Lemma:smoothness}, suppose $\boldsymbol{v}$ is sampled from a uniform distribution over a unit sphere and $\boldsymbol{v}'$ is sampled from a uniform distribution over a unit ball both in $d$-dimensional space. Then the smoothed value function $V_{\diamond}^{\mu,{\boldsymbol{\theta}}^{(t)}}(\rho)$ defined above satisfies:

\begin{enumerate}
    \item $V_{\diamond}^{\mu,{\boldsymbol{\theta}}^{(t)}}(\rho)$ is $L'$-smooth and satisfies:
    \begin{equation*}
    \nabla_{\boldsymbol{\theta}} V_{\diamond}^{\mu,{\boldsymbol{\theta}}}(\rho) = \mathbb{E}_{\boldsymbol{v}} \left[ \frac{d}{\mu} \left( V_\diamond^{{\boldsymbol{\theta}}+\mu\boldsymbol{v}}(\rho) - V_\diamond^{\boldsymbol{\theta}}(\rho) \right) \boldsymbol{v} \right].
    \end{equation*}
    
    \item For any ${\boldsymbol{\theta}} \in \mathbb{R}^d$, the function value difference satisfies:
    \begin{equation*}
    \left| V_{\diamond}^{\mu,{\boldsymbol{\theta}}}(\rho) - V_{\diamond}^{\boldsymbol{\theta}}(\rho) \right| \le \frac{L' \mu^2}{2}.
    \end{equation*}
    
    \item For any ${\boldsymbol{\theta}} \in \mathbb{R}^d$, the gradient difference satisfies:
    \begin{equation*}
    \left\| \nabla_{\boldsymbol{\theta}} V_{\diamond}^{\mu,{\boldsymbol{\theta}}}(\rho) - \nabla_{\boldsymbol{\theta}} V_{\diamond}^{\boldsymbol{\theta}}(\rho) \right\|_2 \le \frac{\mu L' d}{2}.
    \end{equation*}
    
    \item For any ${\boldsymbol{\theta}} \in \mathbb{R}^d$, the gradient noise satisfies:
    \begin{equation*}
    \mathbb{E}_{\boldsymbol{v}} \left[ \left\| \frac{d}{\mu} \left( V_{\diamond}^{\mu,{\boldsymbol{\theta}}}(\rho) - V_{\diamond}^{\boldsymbol{\theta}}(\rho) \right) \boldsymbol{v} \right\|_2^2 \right] 
    \le 2d \left\| \nabla_{\boldsymbol{\theta}} V_{\diamond}^{\boldsymbol{\theta}}(\rho) \right\|_2^2 + \frac{\mu^2 L'^2 d^2}{2}.
    \end{equation*}
\end{enumerate}
\end{Lemma}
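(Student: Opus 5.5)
We would prove the four items in order, using only the $L'$-smoothness of $V_\diamond^{\boldsymbol{\theta}}(\rho)$ from Lemma~\ref{Lemma:smoothness}, together with standard facts about uniform distributions on the unit ball $\mathbb{B}^d$ and unit sphere $\mathbb{S}^{d-1}$: $\mathbb{E}[\boldsymbol{v}\boldsymbol{v}^\top]=I/d$ for $\boldsymbol{v}$ uniform on the sphere, and $\mathbb{E}\|\boldsymbol{v}'\|^2\le 1$. This is the argument of \citet{liu2018zeroth} (and Flaxman et al.\ / Nesterov--Spokoiny) specialized to $f(\boldsymbol{\theta})=V_\diamond^{\boldsymbol{\theta}}(\rho)$.

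\emph{Step 1 (item 1).} Smoothness is immediate. Since $\nabla V^{\mu,\boldsymbol{\theta}}_\diamond=\mathbb{E}_{\boldsymbol{v}'}[\nabla V^{\boldsymbol{\theta}+\mu\boldsymbol{v}'}_\diamond]$ (differentiation under the expectation is justified by boundedness of the gradient), Jensen's inequality and Lemma~\ref{Lemma:smoothness} give $\|\nabla V^{\mu,\boldsymbol{\theta}}_\diamond-\nabla V^{\mu,\boldsymbol{\theta}'}_\diamond\|\le L'\|\boldsymbol{\theta}-\boldsymbol{\theta}'\|$. For the gradient formula:
\begin{itemize}
\item Write $V^{\mu,\boldsymbol{\theta}}_\diamond=\frac{1}{\mathrm{vol}(\mu\mathbb{B})}\int_{\mu\mathbb{B}}V^{\boldsymbol{\theta}+\boldsymbol{u}}_\diamond\,d\boldsymbol{u}$.
\item Apply the divergence theorem to get $\nabla V^{\mu,\boldsymbol{\theta}}_\diamond=\frac{1}{\mathrm{vol}(\mu\mathbb{B})}\int_{\mu\mathbb{S}}V^{\boldsymbol{\theta}+\boldsymbol{u}}_\diamond\frac{\boldsymbol{u}}{\|\boldsymbol{u}\|}\,dS$.
\item Use $\mathrm{area}(\mu\mathbb{S})/\mathrm{vol}(\mu\mathbb{B})=d/\mu$ to obtain $\frac{d}{\mu}\mathbb{E}_{\boldsymbol{v}}[V^{\boldsymbol{\theta}+\mu\boldsymbol{v}}_\diamond\boldsymbol{v}]$.
\item Subtract $\frac{d}{\mu}V^{\boldsymbol{\theta}}_\diamond\mathbb{E}[\boldsymbol{v}]=0$, which holds by symmetry.
\end{itemize}

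\emph{Step 2 (items 2 and 3).} For item 2, use the quadratic bound $|V^{\boldsymbol{\theta}+\mu\boldsymbol{v}'}-V^{\boldsymbol{\theta}}-\mu\langle\nabla V^{\boldsymbol{\theta}},\boldsymbol{v}'\rangle|\le \frac{L'}{2}\mu^2\|\boldsymbol{v}'\|^2$. Take expectation over $\boldsymbol{v}'$; the linear term vanishes by symmetry, and $\|\boldsymbol{v}'\|\le1$ gives $L'\mu^2/2$. For item 3, use the sphere formula and $\mathbb{E}[\boldsymbol{v}\boldsymbol{v}^\top]=I/d$ to write
\[
\nabla V^{\mu,\boldsymbol{\theta}}-\nabla V^{\boldsymbol{\theta}}=\tfrac{d}{\mu}\mathbb{E}_{\boldsymbol{v}}\big[(V^{\boldsymbol{\theta}+\mu\boldsymbol{v}}-V^{\boldsymbol{\theta}}-\mu\langle\nabla V^{\boldsymbol{\theta}},\boldsymbol{v}\rangle)\boldsymbol{v}\big].
\]
Bounding the scalar factor by $\frac{L'}{2}\mu^2$ and $\|\boldsymbol{v}\|=1$ yields $\frac{\mu L' d}{2}$.

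\emph{Step 3 (item 4), the main obstacle.} Item 4 is meaningful for the single-direction estimator $\frac{d}{\mu}(V^{\boldsymbol{\theta}+\mu\boldsymbol{v}}_\diamond-V^{\boldsymbol{\theta}}_\diamond)\boldsymbol{v}$, which is the quantity Algorithm~\ref{alg:ZPGPD-HF} uses; we would prove it in that form. We decompose the difference as $\mu\langle\nabla V^{\boldsymbol{\theta}},\boldsymbol{v}\rangle+e(\boldsymbol{v})$ with $|e|\le L'\mu^2/2$ and apply $(a+b)^2\le 2a^2+2b^2$:
\[
\mathbb{E}\Big\|\tfrac{d}{\mu}(\cdot)\boldsymbol{v}\Big\|^2\le 2d^2\,\mathbb{E}\langle\nabla V^{\boldsymbol{\theta}},\boldsymbol{v}\rangle^2+2\tfrac{d^2}{\mu^2}\tfrac{L'^2\mu^4}{4}.
\]
The key identity $\mathbb{E}\langle\nabla V,\boldsymbol{v}\rangle^2=\|\nabla V\|^2/d$ turns the first term into $2d\|\nabla V\|^2$, and the second term is $\mu^2L'^2d^2/2$. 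The delicate points are the constants and the exact form in which item 4 is stated. If needed, we would take the bound $|V^{\mu,\boldsymbol{\theta}}-V^{\boldsymbol{\theta}}|\le L'\mu^2/2$ from item 2 and note that the literal statement, with $V^{\mu,\boldsymbol{\theta}}$ in place of $V^{\boldsymbol{\theta}+\mu\boldsymbol{v}}$, is then even easier to bound.
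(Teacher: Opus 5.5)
Your proposal is correct and follows the standard argument of \citet{liu2018zeroth}, which the paper cites instead of proving: the divergence-theorem identity with $\mathrm{area}(\mu\mathbb{S}^{d-1})/\mathrm{vol}(\mu\mathbb{B}^d)=d/\mu$ for item~1, the second-order bound from $L'$-smoothness for item~2, and $\mathbb{E}[\boldsymbol{v}\boldsymbol{v}^\top]=I/d$ for items~3 and~4. Proving item~4 for the single-direction estimator $\frac{d}{\mu}\bigl(V_\diamond^{\boldsymbol{\theta}+\mu\boldsymbol{v}}(\rho)-V_\diamond^{\boldsymbol{\theta}}(\rho)\bigr)\boldsymbol{v}$ matches the cited source, since the $V_\diamond^{\mu,\boldsymbol{\theta}}$ in the paper's display is evidently a slip, and you are also right that the literal version follows at once from item~2 with the smaller constant $\mu^2L'^2d^2/4$.
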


With the bias of of finite horizon approximation and human feedback, we are ready to derive bounds on the bias of the value function of utility and its estimation.
The following Lemma form \cite{ding2020natural} is essential for proving bound of constraint violence. 
\begin{Lemma}[Constraint violation]
\label{Lemma:constraint_violation}
Let Assumption~\ref{assum:slater} hold. For any $C \ge 2\lambda^\star$, if there exists a policy $\pi \in \Pi$ and $\delta > 0$ such that 
\[
V_r^\star(\rho) - V_r^\pi(\rho) + C[b - V_g^\pi(\rho)]_+ \le \delta,
\]
then 
\[
[b - V_g^\pi(\rho)]_+ \le \frac{2\delta}{C},
\]
where $[x]_+ = \max(x, 0)$.
\end{Lemma}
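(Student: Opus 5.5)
The plan is a direct Lagrangian argument built on strong duality (Lemma~\ref{Lemma:strong_duality}); I do not expect to need the perturbation-function machinery. Fix $\pi\in\Pi$ and $\delta$ as in the statement, and split into two cases according to the sign of $b - V_g^\pi(\rho)$.

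\emph{Case $V_g^\pi(\rho)\ge b$.} The left side $[b-V_g^\pi(\rho)]_+$ is zero. Here $\pi$ is feasible for (\ref{eq:primal}), so $V_r^\star(\rho)\ge V_r^\pi(\rho)$. The hypothesis then gives $\delta\ge V_r^\star(\rho)-V_r^\pi(\rho)\ge 0$, so $0\le 2\delta/C$ holds trivially.

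\emph{Case $\tau := b - V_g^\pi(\rho) > 0$.}
\begin{enumerate}
\item By Lemma~\ref{Lemma:strong_duality} and the definition of the dual function, $V_D^{\lambda^\star}(\rho)=\max_{\pi'}V_L^{\pi',\lambda^\star}(\rho)=V_r^\star(\rho)$. Completeness of the parametrization lets the maximum range over all of $\Pi$, which contains $\pi$. Therefore
\[
V_r^\pi(\rho)+\lambda^\star\big(V_g^\pi(\rho)-b\big)\le V_r^\star(\rho),
\qquad\text{i.e.}\qquad
V_r^\star(\rho)-V_r^\pi(\rho)\ge -\lambda^\star\tau .
\]
\item Substitute this lower bound into the hypothesis:
\[
\delta\ge V_r^\star(\rho)-V_r^\pi(\rho)+C\tau\ge (C-\lambda^\star)\tau .
\]
\item Since $C\ge 2\lambda^\star$, we have $\lambda^\star\le C/2$, so $C-\lambda^\star\ge C/2$. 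Hence $\delta\ge C\tau/2$, which gives $[b-V_g^\pi(\rho)]_+=\tau\le 2\delta/C$.
\end{enumerate}

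The main point of care is the first step of the second case. The inequality $V_r^\pi+\lambda^\star(V_g^\pi-b)\le V_r^\star$ must hold for this particular, possibly infeasible, $\pi$. It does, because the dual function is a maximum over all policies and there is no duality gap. Strong duality, via the Slater condition and Theorem~1 of \citep{paternain2019constrained}, is therefore exactly what is needed. Everything else is elementary algebra; note also that $C\ge 2\lambda^\star\ge 0$ together with $\tau>0$ forces $\delta>0$ in the second case, so the division by $C$ is harmless.
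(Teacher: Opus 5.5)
Your proof is correct and is essentially the argument behind this lemma, which the paper imports without proof from \citet{ding2020natural}. There, the same strong-duality inequality $V_r^\pi(\rho)+\lambda^\star\big(V_g^\pi(\rho)-b\big)\le V_D^\star(\rho)=V_r^\star(\rho)$ is packaged as the supergradient bound $v(\tau)\le v(0)-\tau\lambda^\star$ for the perturbation function $v(\tau)=\max_{\pi}\{V_r^\pi(\rho): V_g^\pi(\rho)\ge b+\tau\}$ and evaluated at $\tau=-[b-V_g^\pi(\rho)]_+$, which is your direct application to $\pi$ with a detour added. One small correction to your closing remark: dividing by $C$ is licensed by $C>0$ (implicit in the statement, and true for $C=\bar{\sigma}_\lambda$ where the lemma is used), not by $\delta>0$, since for $C=0$ your inequality $\delta\ge(C-\lambda^\star)\tau$ says nothing about $\tau$.
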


\begin{Lemma}
\label{Lemma:bias hc}
The bias between $h_c^{(t)}(\rho)$, which is generated at line~20 of Algorithm~\ref{alg:NPGPD-HF} or line~15 of Algorithm~\ref{alg:ZPGPD-HF}, and the infinite horizon value function satisfies
\begin{equation*}
\big\|
\mathbb{E}\!\big[V_g^{\boldsymbol{\theta}^{(t)}}(\rho) - b - \hat h_c^{(t)}(\rho)\big]
\big\|
\le
\underbrace{
\epsilon_V(H)
+ \left(
L \sqrt{\tfrac{2 \log M}{M}}
+ \tfrac{2G(H)}{M^2}
\right)
}_{\triangleq\, \epsilon_c(H,M)}.
\end{equation*}
\end{Lemma}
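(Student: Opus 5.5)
The plan is to split the error into a human-feedback part and a finite-horizon part, and bound each with the earlier lemmas. By definition $\hat h_c^{(t)}(\rho) = \frac{1}{N}\sum_{n=1}^N \sigma_c^{-1}(p^{(t)}_{c,n})$, where each $p^{(t)}_{c,n}$ is formed from an independent trajectory $\tau_{n,0}$ started at $s_{n,0}\sim\rho$ and run under $\pi_{\boldsymbol{\theta}^{(t)}}$ for $H$ steps. All expectations below are conditional on $\boldsymbol{\theta}^{(t)}$ (and $\lambda^{(t)}$), so $V_g^{\boldsymbol{\theta}^{(t)}}(\rho)$ is a constant. Adding and subtracting $\frac{1}{N}\sum_{n}(R_g^H(\tau_{n,0})-b)$ gives
\[
V_g^{\boldsymbol{\theta}^{(t)}}(\rho) - b - \hat h_c^{(t)}(\rho)
= \Big(V_g^{\boldsymbol{\theta}^{(t)}}(\rho) - \frac{1}{N}\sum_{n=1}^N R_g^H(\tau_{n,0})\Big)
+ \frac{1}{N}\sum_{n=1}^N\Big((R_g^H(\tau_{n,0}) - b) - \sigma_c^{-1}(p^{(t)}_{c,n})\Big).
\]

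Next, take expectations and apply the triangle inequality to the two pieces.
\begin{itemize}
\item The first piece is exactly the quantity controlled by Lemma~\ref{Lemma:bias of fhr}, so its expectation is at most $\epsilon_V(H)$ in absolute value.
\item For the second piece, use $|\mathbb{E}[X]|\le\mathbb{E}|X|$ and linearity to get at most $\frac{1}{N}\sum_n \mathbb{E}\big|\sigma_c^{-1}(p^{(t)}_{c,n}) - (R_g^H(\tau_{n,0})-b)\big|$.
\item Each summand is bounded by Lemma~\ref{Lemma:sigmac} by $L\sqrt{2\log M/M} + 2G(H)/M^2$. Here $R_c^H$ in that lemma denotes the harmlessness return $R_g^H$ fed to $\sigma_c$.
\item The average over $n$ of these identical bounds is the same bound.
\end{itemize}
Summing the two pieces gives $\epsilon_c(H,M)$.

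The only delicate point is applying Lemma~\ref{Lemma:sigmac} with the trajectory random. That lemma bounds the error conditionally on the trajectory, i.e.\ over the randomness of the $M$ evaluators only. So I will first condition on $\tau_{n,0}$, apply the lemma, and then take the outer expectation over the trajectory using the tower property. The bound is uniform in the trajectory, so it survives the outer expectation.

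One thing must be checked before this works. Lemma~\ref{Lemma:sigmac} relies on Assumption~\ref{assum:link}, which needs the argument $R_g^H(\tau_{n,0})-b$ to lie in the range where $\sigma_c^{-1}$ is Lipschitz. The clipping to $[\sigma_c(-G(H)),\sigma_c(G(H))]$ in the algorithm is what makes this hold, and since $G(H)\le G(1/(1-\gamma))$ the lemma applies as stated. With that in place the argument is routine bookkeeping, and I see no substantive obstacle.
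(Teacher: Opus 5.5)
Your proposal is correct and is essentially the paper's own proof: the same add-and-subtract of $\frac{1}{N}\sum_{n} R_g^H(\tau_{n,0})$ and the triangle inequality, with Lemma~\ref{Lemma:bias of fhr} for the truncation piece and $|\mathbb{E}X|\le\mathbb{E}|X|$ plus Lemma~\ref{Lemma:sigmac} for the feedback piece; your conditioning on $\boldsymbol{\theta}^{(t)}$ and on $\tau_{n,0}$ only makes explicit what the paper leaves implicit. One small correction to your side check: the clipping acts on the empirical frequency, not on $R_g^H(\tau_{n,0})-b$, so what Lemma~\ref{Lemma:sigmac} really needs is $R_g^H(\tau_{n,0})-b\in[-G(H),G(H)]$ (true when $0\le b\le G(H)$), and $G(H)\le G(1/(1-\gamma))$ fails once $H>1/(1-\gamma)$, so the Lipschitz range in Assumption~\ref{assum:link} should be read as $[-\tfrac{1}{1-\gamma},\tfrac{1}{1-\gamma}]$.
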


\begin{proof}
We have
\begin{align*}
&\big\|
\mathbb{E}\!\big[V_g^{\boldsymbol{\theta}^{(t)}}(\rho) - b - \hat h_c^{(t)}(\rho)\big]
\big\|\\
&=
\Big\|
\mathbb{E}\!\left[
V_g^{\boldsymbol{\theta}^{(t)}}(\rho)
- \tfrac{1}{N}\!\sum_{n=1}^N R_g^H(\tau_{n,0})
+ \tfrac{1}{N}\!\sum_{n=1}^N (R_g^H(\tau_{n,0}) - b)
- \hat h_c^{(t)}(\rho)
\right]
\Big\| \\
&\overset{(a)}{\le}
\Big\|
\mathbb{E}\!\left[
V_g^{\boldsymbol{\theta}^{(t)}}(\rho)
- \tfrac{1}{N}\!\sum_{n=1}^N R_g^H(\tau_{n,0})
\right]
\Big\|
+
\Big\|
\mathbb{E}\!\left[
\tfrac{1}{N}\!\sum_{n=1}^N (R_g^H(\tau_{n,0}) - b)
- \tfrac{1}{N}\!\sum_{n=1}^N \sigma_c^{-1}(p^{(t)}_{c,n})
\right]
\Big\| \\
&\overset{(b)}{\le}
\epsilon_V(H)
+
\left(
L \sqrt{\tfrac{2 \log M}{M}}
+ \tfrac{2G(H)}{M^2}
\right),
\end{align*}
where $(a)$ follows from the triangle inequality,  
and $(b)$ is true due to Lemma~\ref{Lemma:bias of fhr}, the Cauchy–Schwarz inequality, and Lemma~\ref{Lemma:sigmac}.
\end{proof}
Bounds on the bias of the gradient of the value function with respect to reward or utility can also be derived.
\begin{Lemma}
\label{Lemma:overall}
Conditioned on the information filtration $\mathcal{F}_t$ at any time $t$, the bias between $h_r^{(t)}(\rho)$ or $h_g^{(t)}(\rho)$, generated at line~14 of Algorithm~\ref{alg:ZPGPD-HF}, and the infinite horizon policy gradient value function satisfies
\begin{align*}
\big\|
\mathbb{E}\!\big[
\nabla_{\boldsymbol{\theta}} V_\diamond^{\boldsymbol{\theta}^{(t)}}(\rho)
- \hat{\boldsymbol{h}}_\diamond^{(t)}(\rho)
\,\big|\,
\mathcal{F}_t
\big]
\big\|
\le
\underbrace{
\frac{\mu L'd}{2}
+ \frac{2d\,\epsilon_V(H)}{\mu}
+ \frac{d}{\mu}
\left(
L \sqrt{\tfrac{2 \log M}{M}}
+ \tfrac{2G(H)}{M^2}
\right)
}_{\triangleq\,\epsilon_{rg}(H,M)}.
\end{align*}
\end{Lemma}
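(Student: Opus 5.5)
We bound the conditional bias of $\hat{\boldsymbol h}_\diamond^{(t)}(\rho)$ by inserting intermediate quantities and using the triangle inequality. Conditioned on $\mathcal F_t$, $\boldsymbol\theta^{(t)}$ is fixed, and the randomness lies in $\boldsymbol v^{(t)}$, the trajectories and the human feedback. We introduce the ideal zeroth-order estimator
\[
\boldsymbol g_\diamond := \frac{d}{\mu}\big(V_\diamond^{\boldsymbol\theta^{(t)}+\mu\boldsymbol v^{(t)}}(\rho)-V_\diamond^{\boldsymbol\theta^{(t)}}(\rho)\big)\boldsymbol v^{(t)}
\]
and its finite horizon analogue
\[
\boldsymbol g_\diamond^H := \frac{d}{\mu}\frac1N\sum_{n=1}^N\big(R_\diamond^H(\tau_{n,1})-R_\diamond^H(\tau_{n,0})\big)\boldsymbol v^{(t)}.
\]
We then write
\[
\nabla V_\diamond^{\boldsymbol\theta^{(t)}}-\hat{\boldsymbol h}_\diamond
=\big(\nabla V_\diamond^{\boldsymbol\theta^{(t)}}-\nabla V_\diamond^{\mu,\boldsymbol\theta^{(t)}}\big)
+\big(\nabla V_\diamond^{\mu,\boldsymbol\theta^{(t)}}-\boldsymbol g_\diamond\big)
+\big(\boldsymbol g_\diamond-\boldsymbol g_\diamond^H\big)
+\big(\boldsymbol g_\diamond^H-\hat{\boldsymbol h}_\diamond\big).
\]
After taking conditional expectations, we bound the four terms in turn.

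\textbf{First two terms.} By item 3 of Lemma~\ref{Lemma:zeroth}, the first term is deterministic given $\mathcal F_t$ and has norm at most $\mu L'd/2$. By item 1 of Lemma~\ref{Lemma:zeroth}, $\mathbb E_{\boldsymbol v}[\boldsymbol g_\diamond\mid\mathcal F_t]=\nabla V_\diamond^{\mu,\boldsymbol\theta^{(t)}}$, so the second term vanishes in expectation.

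\textbf{Truncation term.} We condition additionally on $\boldsymbol v^{(t)}$. The trajectories $\tau_{n,1}$ and $\tau_{n,0}$ are generated by $\pi_{\boldsymbol\theta^{(t)}+\mu\boldsymbol v^{(t)}}$ and $\pi_{\boldsymbol\theta^{(t)}}$ from $s_{n,0}\sim\rho$. Applying Lemma~\ref{Lemma:bias of fhr} (its proof works verbatim for either policy and for $\diamond=r,g$) gives
\[
\big|\mathbb E[R^H_\diamond(\tau_{n,1})]-V_\diamond^{\boldsymbol\theta^{(t)}+\mu\boldsymbol v^{(t)}}(\rho)\big|\le\epsilon_V(H),
\]
and the same bound holds for $\tau_{n,0}$. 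Hence the scalar difference has bias at most $2\epsilon_V(H)$. Since $\|\boldsymbol v^{(t)}\|_2=1$, multiplying by $\tfrac d\mu\boldsymbol v^{(t)}$ gives a vector bias of norm at most $2d\epsilon_V(H)/\mu$. We then take the outer expectation over $\boldsymbol v^{(t)}$ and use Jensen ($\|\mathbb E X\|\le\mathbb E\|X\|$).

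\textbf{Human feedback term.} This term equals
\[
\frac d\mu\frac1N\sum_n\big(R_\diamond^H(\tau_{n,1})-R_\diamond^H(\tau_{n,0})-\sigma_\diamond^{-1}(p^{(t)}_{\diamond,n})\big)\boldsymbol v^{(t)}.
\]
We move the norm inside the expectation, use $\|\boldsymbol v^{(t)}\|=1$ and the triangle inequality over $n$, and apply the first bound of Lemma~\ref{Lemma:human_feedback_bias_var} to each summand. This gives
\[
\frac d\mu\Big(L\sqrt{\tfrac{2\log M}{M}}+\tfrac{2G(H)}{M^2}\Big).
\]
Summing the four bounds yields $\epsilon_{rg}(H,M)$.

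\textbf{Main obstacle.} The delicate step is the conditioning structure. The trajectory and feedback biases must be handled conditionally on $\boldsymbol v^{(t)}$, because the perturbed policy depends on it, and only afterwards averaged over $\boldsymbol v^{(t)}$. One must also check that Lemma~\ref{Lemma:human_feedback_bias_var} applies given the sampled trajectories, i.e., that the feedback is conditionally independent given the trajectory pair. The remaining steps are routine triangle-inequality bookkeeping.
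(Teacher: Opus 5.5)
Your proposal is correct and follows essentially the same route as the paper. The paper also splits off $\nabla_{\boldsymbol\theta} V_\diamond^{\mu,\boldsymbol\theta^{(t)}}(\rho)$ using items 1 and 3 of Lemma~\ref{Lemma:zeroth}, and then conditions on $\boldsymbol v^{(t)}$ to bound the truncation bias by $2d\epsilon_V(H)/\mu$ and the feedback bias via Lemma~\ref{Lemma:human_feedback_bias_var}. The only cosmetic difference is the intermediate quantity: the paper inserts the finite horizon values $V_\diamond^{\cdot,H}(\rho)$ and separately shows that the sampled return differences are unbiased for them, whereas you merge these two steps by comparing the sampled returns directly with the infinite horizon values.
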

\begin{proof}
The proof begins by establishing the following two equalities.

The first equality is given by
\begin{equation}
\label{eqa:inverse-sigma}
\begin{aligned}
\mathbb{E}\biggl[\frac{d}{\mu}\sum_{n=1}^N\frac{\sigma_\diamond^{-1}(p^t_{\diamond,n})}{N}\boldsymbol{v}^{(t)} \;\bigg|\;\mathcal{F}_t\biggr]
&\overset{(a)}{=}\frac{d}{\mu}\,\mathbb{E}\biggl[\Bigl.\mathbb{E}\Bigl[\sum_{n=1}^N\frac{\sigma^{-1}(p_{t,n})}{N}\Bigr|\boldsymbol{v}^{(t)}\Bigr]\,\boldsymbol{v}^{(t)} \;\Bigm|\;\mathcal{F}_t\biggr]\\
&\overset{(b)}{=}\frac{d}{\mu}\,\mathbb{E}\bigl[\mathbb{E}[\sigma^{-1}(p_{t,n})\mid \boldsymbol{v}^{(t)}]\,\boldsymbol{v}^{(t)}\mid \mathcal{F}_t\bigr]\\
&\overset{}{=}\frac{d}{\mu}\,\mathbb{E}\bigl[\mathbb{E}\bigl[\sigma^{-1}(p_{t,n})-\bigl(R_\diamond^H(\tau_{n,1})-R_\diamond^H(\tau_{n,0})\bigr)\mid \boldsymbol{v}^{(t)}\bigr]\,\boldsymbol{v}^{(t)}\mid \mathcal{F}_t\bigr]\\
&\quad +\frac{d}{\mu}\,\mathbb{E}\bigl[\mathbb{E}[R_\diamond^H(\tau_{n,1})-R_\diamond^H(\tau_{n,0})\mid \boldsymbol{v}^{(t)}]\,\boldsymbol{v}^{(t)}\mid \mathcal{F}_t\bigr],
\end{aligned}
\end{equation}
where $(a)$ follows from the law of total expectation and $(b)$ is true because of the independence of the trajectories generated under the same policy pair.

The second equality is 
\begin{equation}
\label{eq:zeroH}
\begin{aligned}
&\mathbb{E}\bigl[V_\diamond^{{\boldsymbol{\theta}}^{(t)}+\mu\boldsymbol{v}^{(t)},H}(\rho) - V_\diamond^{{\boldsymbol{\theta}}^{(t)},H}(\rho) - (R^H_\diamond(\tau_{n,1}) - R^H_\diamond(\tau_{n,0}))
      \mid \boldsymbol{v}^{(t)}\bigr]\\
&= V_\diamond^{{\boldsymbol{\theta}}^{(t)}+\mu\boldsymbol{v}^{(t)},H}(\rho)
  - \mathbb{E}\bigl[R_\diamond^H(\tau_{n,1}) \mid \tau_{n,1}\sim \pi_{{\boldsymbol{\theta}^{(t)}}+\mu \boldsymbol{v}^{(t)}}\bigr]\\  
&\quad+ \mathbb{E}\bigl[R^H_\diamond(\tau_{n,0}) \mid \tau_{n,0}\sim \pi_{{\boldsymbol{\theta}}^{(t)}}\bigr]
  - V_\diamond^{{\boldsymbol{\theta}}^{(t)},H}(\rho)= 0.
\end{aligned}
\end{equation}

We then have
\begin{align*}
&\|\mathbb{E}\bigl[\nabla_{\boldsymbol{\theta}}V_\diamond^{{\boldsymbol{\theta}}^{(t)}}(\rho) - \hat{\boldsymbol{h}}_\diamond^{(t)}(\rho) \mid \mathcal{F}_t\bigr]\|\\
&=\|\mathbb{E}\bigl[\nabla_{\boldsymbol{\theta}}V_\diamond^{{\boldsymbol{\theta}}^{(t)}}(\rho)- \nabla_{\boldsymbol{\theta}}V_\diamond^{\mu,{\boldsymbol{\theta}}^{(t)}}(\rho)+\nabla_{\boldsymbol{\theta}}V_\diamond^{\mu,{\boldsymbol{\theta}}^{(t)}}(\rho)- \hat{\boldsymbol{h}}_\diamond^{(t)}(\rho) \mid \mathcal{F}_t\bigr]\|\\
&
\overset{(a)}{\leq} \|\nabla_{\boldsymbol{\theta}}V_\diamond^{{\boldsymbol{\theta}}^{(t)}}(\rho)- \nabla_{\boldsymbol{\theta}}V_\diamond^{\mu,{\boldsymbol{\theta}}^{(t)}}(\rho)\|\\
&\quad+ \left\|\mathbb{E} \left[ \frac{d \left( V_\diamond^{{\boldsymbol{\theta}}^{(t)}+\mu\boldsymbol{v}^{(t)}}(\rho) - V_\diamond^{{\boldsymbol{\theta}}^{(t)}}(\rho) \right)}{\mu} \boldsymbol{v}^{(t)} \,\middle|\, \mathcal{F}_t \right]-\mathbb{E}\biggl[\frac{d}{\mu}\sum_{n=1}^N\frac{\sigma_\diamond^{-1}(p^t_{\diamond,n})}{N}\boldsymbol{v}^{(t)} \;\bigg|\;\mathcal{F}_t\biggr]\right\|\\
&
\overset{(b)}{\leq}\frac{\mu L'd}{2} + \left\| \frac{d}{\mu}\mathbb{E} \left[ \left( V_\diamond^{{\boldsymbol{\theta}}^{(t)}+\mu\boldsymbol{v}^{(t)}}(\rho) - V_\diamond^{{\boldsymbol{\theta}}^{(t)}}(\rho)-(V_\diamond^{{\boldsymbol{\theta}}^{(t)}+\mu\boldsymbol{v}^{(t)},H}(\rho) - V_\diamond^{{\boldsymbol{\theta}}^{(t)},H}(\rho)) \right) \boldsymbol{v}^{(t)} \,\middle|\, \mathcal{F}_t \right]\right.\\
&\left.\quad+\frac{d}{\mu}\mathbb{E} \left[ \left(V_\diamond^{{\boldsymbol{\theta}}^{(t)}+\mu\boldsymbol{v}^{(t)},H}(\rho) - V_\diamond^{{\boldsymbol{\theta}}^{(t)},H}(\rho) \right) \boldsymbol{v}^{(t)} \,\middle|\, \mathcal{F}_t \right] -\mathbb{E}\biggl[\frac{d}{\mu}\sum_{n=1}^N\frac{\sigma_\diamond^{-1}(p^t_{\diamond,n})}{N}\boldsymbol{v}^{(t)} \;\bigg|\;\mathcal{F}_t\biggr]\right\|\\
&\overset{(c)}{\leq} 
\frac{\mu L'd}{2}\\
&\quad+\left\|\frac{d}{\mu}\mathbb{E}\left[\mathbb{E} \left[ \left( V_\diamond^{{\boldsymbol{\theta}}^{(t)}+\mu\boldsymbol{v}^{(t)}}(\rho) - V_\diamond^{{\boldsymbol{\theta}}^{(t)}}(\rho)-(V_\diamond^{{\boldsymbol{\theta}}^{(t)}+\mu\boldsymbol{v}^{(t)},H}(\rho) - V_\diamond^{{\boldsymbol{\theta}}^{(t)},H}(\rho)) \right) \mid\boldsymbol{v}^{(t)}\right] \boldsymbol{v}^{(t)}\,\middle|\, \mathcal{F}_t \right]\right.\\
&\left.\quad+\frac{d}{\mu}\,\mathbb{E}\Bigl[\mathbb{E}\bigl[V_\diamond^{{\boldsymbol{\theta}}^{(t)}+\mu\boldsymbol{v}^{(t)},H}(\rho) - V_\diamond^{{\boldsymbol{\theta}}^{(t)},H}(\rho) - (R^H_\diamond(\tau_{n,1}) - R^H_\diamond(\tau_{n,0}))
      \mid \boldsymbol{v}^{(t)}\bigr]\,
      \boldsymbol{v}^{(t)}
      \;\Bigm|\;\mathcal{F}_t
    \Bigr]\right.\\
&\left.\quad
  - \frac{d}{\mu}\,
    \mathbb{E}\Bigl[
      \mathbb{E}\bigl[\sigma^{-1}(p_{t,n}) 
         - \bigl(R^H_\diamond(\tau_{n,1}) - R^H_\diamond(\tau_{n,0})\bigr)
      \mid \boldsymbol{v}^{(t)}\bigr]\,
      \boldsymbol{v}^{(t)}
      \;\Bigm|\;\mathcal{F}_t
    \Bigr]\right\|.\\
&\overset{(d)}{\leq} \frac{\mu L'd}{2}+ \frac{2d\epsilon_V(H)}{\mu}
  + \frac{d}{\mu}\,
    \mathbb{E}\Bigl[
      \mathbb{E}\bigl[|\sigma^{-1}(p_{t,n}) 
         - \bigl(R^H_\diamond(\tau_{n,1}) - R^H_\diamond(\tau_{n,0})\bigr)|
      \mid \boldsymbol{v}^{(t)}\bigr]\,
      \boldsymbol{v}^{(t)}
      \;\Bigm|\;\mathcal{F}_t
    \Bigr]\\
&\overset{(e)}{\leq}\frac{\mu L'd}{2}+ \frac{2d\epsilon_V(H)}{\mu}+\frac{d}{\mu}\left( L \sqrt{ \frac{2 \log M}{M} } + \frac{2G(H)}{M^2}\right).
\end{align*}
In (a), the triangle inequality, Lemma~\ref{Lemma:zeroth}, and the definition of $\hat{\boldsymbol{h}}_\diamond^{(t)}$ are applied.  
In $(b)$, we applies Lemma \ref{Lemma:zeroth}.  
Step $(c)$ follows from the law of total expectation and Equation~(\ref{eqa:inverse-sigma}).  
And $(d)$ is true due to the triangle inequality, Jensen's inequality, Lemma \ref{Lemma:finite horizon value}, and Equation~(\ref{eq:zeroH}).  
Finally, we use Lemma \ref{Lemma:human_feedback_bias_var} in step $(e)$ to complete the proof.
\end{proof}
We now provide an bound on $(h_c^{(t)}(\rho))$.
\begin{Lemma}
The expectation of $(h_c^{(t)}(\rho))^2$, which is generated in line 20 of Algorithm~\ref{alg:NPGPD-HF} or line 15 of Algorithm~\ref{alg:ZPGPD-HF}, is upper bounded by
\label{Lemma:var hc}
\begin{equation*}
\mathbb{E}[(h^{(t)}_c(\rho))^2]\leq \underbrace{\frac{6L^2 \log M}{M} + \frac{12G(H)^2}{M^2}+\frac{6G(H)^2}{N}+6\epsilon_V^2(H)+ \frac{3}{(1-\gamma)^2}}_{\triangleq\sigma^2_2(H,M,N)}.
\end{equation*}
\end{Lemma}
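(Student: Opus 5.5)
We bound $\mathbb{E}[(\hat h_c^{(t)}(\rho))^2]$ by splitting $\hat h_c^{(t)}(\rho)$ into three pieces and using $(x+y+z)^2\le 3x^2+3y^2+3z^2$. This is where the factors $6=3\cdot 2$ and $3$ in $\sigma_2^2(H,M,N)$ come from. Write
\begin{equation*}
\hat h_c^{(t)}(\rho)
= \underbrace{\tfrac{1}{N}\sum_{n=1}^N\big(\sigma_c^{-1}(p^{(t)}_{c,n}) - (R_g^H(\tau_{n,0})-b)\big)}_{X_1}
+ \underbrace{\tfrac{1}{N}\sum_{n=1}^N R_g^H(\tau_{n,0}) - V_g^{\boldsymbol{\theta}^{(t)}}(\rho)}_{X_2}
+ \underbrace{V_g^{\boldsymbol{\theta}^{(t)}}(\rho) - b}_{X_3},
\end{equation*}
and bound $\mathbb{E}[(\hat h_c^{(t)})^2]\le 3\mathbb{E}[X_1^2]+3\mathbb{E}[X_2^2]+3X_3^2$.

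For $X_1$, Jensen's inequality (equivalently Cauchy--Schwarz on the average) gives $X_1^2\le \frac1N\sum_n\big(\sigma_c^{-1}(p^{(t)}_{c,n})-(R_g^H(\tau_{n,0})-b)\big)^2$. The second-moment bound of Lemma~\ref{Lemma:sigmac} then yields $\mathbb{E}[X_1^2]\le \frac{2L^2\log M}{M}+\frac{4G(H)^2}{M^2}$. Multiplying by $3$ produces the first two terms of $\sigma_2^2$. Note that we deliberately do not exploit independence across $n$ here, since a bias-type term would not shrink with $N$ anyway.

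For $X_2$, Lemma~\ref{Lemma:var of fh} gives $\mathbb{E}[X_2^2]\le \frac{2G(H)^2}{N}+2\epsilon_V^2(H)$. Multiplying by $3$ gives $\frac{6G(H)^2}{N}+6\epsilon_V^2(H)$.

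For the deterministic term $X_3$ (conditionally on $\boldsymbol{\theta}^{(t)}$), we use $V_g^{\boldsymbol{\theta}^{(t)}}(\rho)\in[0,\frac1{1-\gamma}]$. We also need $b\in[0,\frac1{1-\gamma}]$, which we may assume without loss of generality: $b\le \frac1{1-\gamma}$ is forced by Slater's condition, and $b\ge 0$ is the natural range of a harmlessness threshold. Together these give $X_3^2\le \frac{1}{(1-\gamma)^2}$, hence the final term $\frac{3}{(1-\gamma)^2}$. Taking total expectation over $\mathcal{F}_t$ completes the bound.

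The only delicate point is justifying $|V_g-b|\le\frac1{1-\gamma}$, which is where the range of $b$ must be invoked. Everything else is a direct invocation of the earlier lemmas.
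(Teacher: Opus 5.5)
Your proposal is correct and follows the paper's proof step by step. It uses the same three-term split of $\hat h_c^{(t)}(\rho)$ with $(x+y+z)^2\le 3(x^2+y^2+z^2)$, Jensen on the averaged feedback error together with Lemma~\ref{Lemma:sigmac}, Lemma~\ref{Lemma:var of fh} for the sampling/truncation term, and $|V_g^{\boldsymbol{\theta}^{(t)}}(\rho)-b|\le \frac{1}{1-\gamma}$ for the last term. Your remark that this last bound requires $0\le b\le \frac{1}{1-\gamma}$ makes explicit something the paper uses silently. The upper end does follow from Slater, but $b\ge 0$ is an extra, if natural, assumption rather than a genuine ``without loss of generality.''
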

\begin{proof}
\begin{align*}
\mathbb{E}\!\left[(h^{(t)}_c(\rho))^2\right]
&= \mathbb{E}\!\left[\left|\frac{1}{N}\sum_{n=1}^N \sigma_c^{-1}(p^{(t)}_{c,n})\right|^2\right] \\[1ex]
&\overset{(a)}{\le}
3\,\mathbb{E}\!\left[\left|\frac{1}{N}\sum_{n=1}^N \bigl(\sigma_c^{-1}(p^{(t)}_{c,n}) - (R^H_g(\tau_{n,0}) - b)\bigr)\right|^2\right] \\[0.5ex]
&\quad + 3\,\mathbb{E}\!\left[\left|\frac{1}{N}\sum_{n=1}^N (R^H_g(\tau_{n,0}) - b)
- (V_g^{\boldsymbol{\theta}^{(t)}}(\rho) - b)\right|^2\right]
+ 3\,|V_g^{\boldsymbol{\theta}^{(t)}}(\rho) - b|^2 \\[1ex]
&\overset{(b)}{\le}
\frac{3}{N}\sum_{n=1}^N
\mathbb{E}\!\left[\bigl|\sigma_c^{-1}(p^{(t)}_{c,n}) - (R^H_g(\tau_{n,0}) - b)\bigr|^2\right]
+ 3\sigma_1(H,N)
+ |V_g^{\boldsymbol{\theta}^{(t)}}(\rho) - b|^2 \\[1ex]
&\overset{(c)}{\le}
\frac{6L^2 \log M}{M}
+ \frac{12G(H)^2}{M^2}
+ 3\sigma_1(H,N)
+ \frac{3}{(1 - \gamma)^2}.
\end{align*}

In (a), the Cauchy--Schwarz inequality is used.  
Step (b) follows from the Cauchy--Schwarz inequality together with Lemma~\ref{Lemma:var of fh} is used.  
In (c), we use Lemma~\ref{Lemma:sigmac} along with the bound $\lvert V_g^{\boldsymbol{\theta}^{(t)}}(\rho) - b\rvert \le \tfrac{1}{1 - \gamma}$.
\end{proof}

We then establish a key Lemma, which will be used in the proofs of both theorems.
\begin{Lemma}
\label{Lemma:lambda}
The iterates $\{\lambda^{(t)}\}$ generated by line 23 of Algorithm~\ref{alg:NPGPD-HF} or line 17 of  Algorithm ~\ref{alg:ZPGPD-HF} satisfy
\begin{equation*}
\begin{aligned}
-\frac{1}{T}\sum_{t=0}^{T-1}\mathbb{E}\left[\lambda^{(t)}\bigl(V_g^{\pi^\star}(\rho) - V_g^{(t)}(\rho)\bigr)\right]\leq\bar{\sigma}_\lambda\epsilon_c(H,M)+\frac{\sigma^2_2(H,M,N)\eta_2}{2}.
\end{aligned}
\end{equation*}
\end{Lemma}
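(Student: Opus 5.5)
The plan is the standard projected dual-descent argument, measured against the comparison point $\lambda=0$, with the estimation bias of $\hat h_c^{(t)}$ isolated and controlled by Lemma~\ref{Lemma:bias hc}.

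\textbf{Step 1: one-step recursion.} Since $0\in\Lambda$ and projection onto an interval is nonexpansive, the dual update gives
\[
(\lambda^{(t+1)})^2 \le \bigl(\lambda^{(t)}-\eta_2\hat h_c^{(t)}(\rho)\bigr)^2
= (\lambda^{(t)})^2 - 2\eta_2\lambda^{(t)}\hat h_c^{(t)}(\rho) + \eta_2^2\bigl(\hat h_c^{(t)}(\rho)\bigr)^2 .
\]

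\textbf{Step 2: split the linear term.} Write
\[
\hat h_c^{(t)}(\rho) = \bigl(V_g^{(t)}(\rho)-b\bigr) + \bigl(\hat h_c^{(t)}(\rho)-V_g^{(t)}(\rho)+b\bigr).
\]
Take the expectation conditioned on $\mathcal F_t$. This filtration fixes $\lambda^{(t)}$ and $\boldsymbol\theta^{(t)}$, so $\lambda^{(t)}$ can be pulled out of the inner expectation.
\begin{itemize}
\item For the bias term, use $0\le\lambda^{(t)}\le\bar\sigma_\lambda$ and Lemma~\ref{Lemma:bias hc}, which applies conditionally because the trajectories and human feedback at round $t$ are drawn fresh given $\boldsymbol\theta^{(t)}$. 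This gives $\bigl|\lambda^{(t)}\,\mathbb E[\hat h_c^{(t)}-V_g^{(t)}+b\mid\mathcal F_t]\bigr| \le \bar\sigma_\lambda\,\epsilon_c(H,M)$.
\item For the quadratic term, Lemma~\ref{Lemma:var hc} gives $\mathbb E[(\hat h_c^{(t)})^2]\le\sigma_2^2(H,M,N)$.
\end{itemize}

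\textbf{Step 3: telescope.} Rearranging,
\[
\mathbb E\bigl[\lambda^{(t)}(V_g^{(t)}(\rho)-b)\bigr] \le \frac{\mathbb E[(\lambda^{(t)})^2]-\mathbb E[(\lambda^{(t+1)})^2]}{2\eta_2} + \bar\sigma_\lambda\epsilon_c(H,M) + \frac{\eta_2\sigma_2^2(H,M,N)}{2}.
\]
Sum over $t=0,\dots,T-1$, use $\lambda^{(0)}=0$, and drop the nonpositive term $-(\lambda^{(T)})^2$. Dividing by $T$ yields
\[
\frac1T\sum_{t=0}^{T-1}\mathbb E\bigl[\lambda^{(t)}(V_g^{(t)}(\rho)-b)\bigr] \le \bar\sigma_\lambda\epsilon_c(H,M)+\frac{\eta_2\sigma_2^2(H,M,N)}{2}.
\]

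\textbf{Step 4: pass from $b$ to $V_g^{\pi^\star}$.} The target quantity is $-\lambda^{(t)}\bigl(V_g^{\pi^\star}(\rho)-V_g^{(t)}(\rho)\bigr) = \lambda^{(t)}(V_g^{(t)}(\rho)-b) - \lambda^{(t)}\bigl(V_g^{\pi^\star}(\rho)-b\bigr)$. Because $\pi^\star$ is feasible, $V_g^{\pi^\star}(\rho)\ge b$, and $\lambda^{(t)}\ge0$, the last term is nonpositive. The target is therefore bounded by the left-hand side of Step 3, which proves the lemma.

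\textbf{Main obstacle.} The delicate point is the conditioning in Step 2. Lemma~\ref{Lemma:bias hc} is stated unconditionally, but $\lambda^{(t)}$ is random. To pull $\lambda^{(t)}$ out, I need the bias bound conditional on $\mathcal F_t$. I will justify this by noting that the proof of Lemma~\ref{Lemma:bias hc}, together with Lemmas~\ref{Lemma:bias of fhr} and~\ref{Lemma:sigmac}, holds for any fixed policy, so it applies conditionally on $\boldsymbol\theta^{(t)}$.
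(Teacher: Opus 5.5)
Your proof is correct and follows essentially the same route as the paper. Both arguments telescope $(\lambda^{(t)})^2$ using nonexpansiveness of $\mathcal{P}_\Lambda$ with $0\in\Lambda$ and $\lambda^{(0)}=0$, split $\hat h_c^{(t)}(\rho)$ into $V_g^{(t)}(\rho)-b$ plus a bias term controlled by Lemma~\ref{Lemma:bias hc} and $\lambda^{(t)}\le\bar\sigma_\lambda$, bound the quadratic term by Lemma~\ref{Lemma:var hc}, and use $V_g^{\pi^\star}(\rho)\ge b$. The only differences are cosmetic: the paper inserts $V_g^{\pi^\star}(\rho)\ge b$ before taking expectations rather than after telescoping, and your explicit conditioning on $\mathcal{F}_t$ to pull out $\lambda^{(t)}$ makes rigorous a step the paper only gestures at.
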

\begin{proof}
By line 23 of Algorithm \ref{alg:NPGPD-HF} or line 17 of Algorithm \ref{alg:ZPGPD-HF} , we get
\begin{align*}
0 &\leq \left( \lambda^{(T)} \right)^2 
= \sum_{t=0}^{T-1} \left( \left( \lambda^{(t+1)} \right)^2 - \left( \lambda^{(t)} \right)^2 \right) \\
&= \sum_{t=0}^{T-1} \left( \left( \mathcal{P}_\Lambda \left(  \lambda^{(t)} - \eta_2  \hat{h}_c^{(t)}(\rho) \right)  \right)^2 - \left( \lambda^{(t)} \right)^2 \right) \\
&\leq \sum_{t=0}^{T-1} \left( \left( \lambda^{(t)} - \eta_2 \hat{h}_c^{(t)}(\rho) \right)^2 - \left( \lambda^{(t)} \right)^2 \right) \\
&= -2\eta_2 \sum_{t=0}^{T-1} \lambda^{(t)} \hat{h}_c^{(t)}(\rho)
+ \eta_2^2 \sum_{t=0}^{T-1} (\hat{h}_c^{(t)}(\rho))^2 \\
&\leq 2\eta_2 \sum_{t=0}^{T-1} \lambda^{(t)} \left( V_g^\star(\rho) - V_g^{(t)}(\rho) \right)
+ 2\eta_2 \sum_{t=0}^{T-1} \lambda^{(t)} \left( V_g^{(t)}(\rho)-b - \hat{h}_c^{(t)} \right) \\
&\quad + \eta_2^2 \sum_{t=0}^{T-1} \left( \hat{h}_c^{(t)}(\rho)\right)^2,\\
\end{align*}
where the last inequality use the slater condition, i.e., $V_g^\star(\rho)-b\geq0$.\\
Take expectation of both sides, we get
\begin{equation*}
\begin{aligned}
&0\leq \sum_{t=0}^{T-1}\left(2\eta_2\mathbb{E}\left[ \lambda^{(t)} \left( V_g^{\pi^\star}(\rho) - V_g^{(t)}(\rho) \right)\right]
+ 2\eta_2 \mathbb{E}\left[ \lambda^{(t)} \left( V_g^{(t)}(\rho)-b - \hat{h}_c^{(t)}(\rho) \right)\right] \right.\\
&\left.\quad + \eta_2^2 \mathbb{E}\left[ \left( \hat{h}_c^{(t)}(\rho)\right)^2\right]\right).
\end{aligned}
\end{equation*}

By dividing both sides by $T$ and using Lemmas~\ref{Lemma:bias hc} and~\ref{Lemma:var hc}, together with the boundedness of $\lambda$, the proof is completed.
\end{proof}

\section{Proof of Theorem \ref{thm2}}
We first prove Theorem \ref{thm2}, as the proof of Theorem \ref{thm1} relies on several key concepts established in this section. The proof is based on \citet{ding2022ACC} and incorporates both the bias and variance introduced by finite horizon approximation and human feedback.
\label{proof:thm2}

An \emph{occupancy measure} \(q^\pi\) associated with a policy \(\pi\) is defined as  
\begin{align*}\label{eq:occ}
q^\pi_{s,a}
\;=\;\sum_{t=0}^{\infty}\gamma^t\,\Pr(s_t=s,\;a_t=a\mid \pi,\;s_0\sim\rho),
\end{align*}
for all \(s\in S\), \(a\in A\). For convenience, we collect these quantities into the vector form  
\[
\boldsymbol{q}^\pi \;=\;\bigl[q^\pi_{1,a},\dots,q^\pi_{|S|,a}\bigr]^\top\in\mathbb{R}^{|S|\!|\!A|},
\qquad
\boldsymbol{q}^\pi_a \;=\;\bigl[q^\pi_{s,a}\bigr]_{s\in S}\in\mathbb{R}^{|S|}.
\]

For each action \(a\), let \(P_a\in\mathbb{R}^{|S|\times|S|}\) denote the transition matrix with entries \(\{P(s' \mid s,a)\}_{s,s'}\). The set of feasible occupancy measures is then given by  
\[
Q \;\coloneqq \;\Bigl\{\,\boldsymbol{q}^\pi\in\mathbb{R}^{|S|\!|\!A|}\;\Bigm|\;
\sum_{a\in A}(I-\gamma P_a)\,\boldsymbol{q}^\pi_a = \rho,\ 
\boldsymbol{q}^\pi\ge0 \Bigr\}.
\]

With an abuse of notation, we treat  
\(
\boldsymbol{r}\in[0,1]^{|S|\!|\!A|},
\boldsymbol{g}\in[0,1]^{|S|\!|\!A|}
\)  
as vectors. The corresponding value functions under the initial distribution \(\rho\) can then be expressed as  
\[
V_r^\pi(\rho)=\langle \boldsymbol{q}^\pi,\boldsymbol{r}\rangle \coloneqq F_r(\boldsymbol{q}^\pi),
\qquad
V_g^\pi(\rho)=\langle \boldsymbol{q}^\pi,\boldsymbol{g}\rangle \coloneqq F_g(\boldsymbol{q}^\pi).
\]

We can now formulate the primal problem as a linear program:  
\begin{equation*}\label{eq:lp}
\max_{\,\boldsymbol{q}^\pi\in Q}\;F_r(\boldsymbol{q}^\pi)
\quad\text{subject to}\quad
F_g(\boldsymbol{q}^\pi)\;\ge\; b.
\end{equation*}

Once an optimal occupancy measure \(q^\pi\) is obtained, the corresponding policy can be recovered via  
\begin{equation*}\label{eq:recover}
\pi(a\mid s)
\;=\;\frac{q^\pi_{s,a}}{\sum_{a'\in A}q^\pi_{s,a'}}
\quad\forall s\in S,\;a\in A.
\end{equation*}

Let $\pi^{\boldsymbol{q}} : \mathcal{Q} \to \Delta_{\mathcal{A}}^{|\mathcal{S}|}$ denote a mapping from an occupancy measure $\boldsymbol{q}^{\pi}$ to a policy $\pi$.
Similarly, as defined in (48), let $\boldsymbol{q}^{\pi} : \Delta_{\mathcal{A}}^{|\mathcal{S}|} \to \mathcal{Q}$ denote a mapping from a policy $\pi$ to its corresponding occupancy measure $\boldsymbol{q}^{\pi}$.
Clearly, $\boldsymbol{q}^{\pi} = (\pi^{\boldsymbol{q}})^{-1}$.

We then reformulate the Lemma 2 of \citep{ding2022ACC} by incorporate bias and variance caused by finite horizon approximation and human feedback.
\begin{Lemma}
\label{Lemma:FLambda}
Let assumptions 1-3 hold. Let $\eta_1=\frac{1}{L_{\theta}}$ and $\eta_2=\frac{(1-\gamma)^2D_\theta L_\theta}{2\sqrt{T}}$. Fix \(
\Lambda = \bigl[0,\tfrac{2}{(1-\gamma)\,\xi}\bigr], 
\rho>0,\lambda^{(0)}=0,
{\boldsymbol{\theta}}^{(0)}\in{\boldsymbol{\Theta}},  d=|S||A|
\). The iterates $\boldsymbol{\theta}^{(t)}$ and $\lambda^{(t)}$ generated by Algorithm \ref{alg:ZPGPD-HF} satisfy
\begin{equation*}
\begin{aligned}
&\frac{1}{T}\sum_{t=0}^{T-1}\mathbb{E}\left[\left(F_r + \lambda^{(t)} F_g\right)(\boldsymbol{q}^{{\boldsymbol{\theta}}^\star}) - \left(F_r + \lambda^{(t)} F_g\right)(\boldsymbol{q}^{{\boldsymbol{\theta}}^{(t)}})\right]\\
&\leq\frac{2D_\theta L_\theta}{T^{1/4}}+(1+\frac{2}{(1-\gamma)\xi})\sqrt{D_\theta}\epsilon_{rg}(H,M)
\end{aligned}
\end{equation*}
where
\[
D_\theta := \frac{8|S|}{(1-\gamma)^2} \left\lVert \frac{d^{\pi^\star}_\rho}{\rho} \right\rVert_\infty^2
\quad\text{and}\quad
L_\theta := \frac{2|A|(1 + 2/\xi)}{(1-\gamma)^4}.
\]
\end{Lemma}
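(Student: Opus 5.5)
We follow the proof of Lemma~2 in \citet{ding2022ACC}, which adapts the occupancy-measure argument for projected policy gradient. The one change is that the exact Lagrangian gradient is replaced by the biased estimate $\hat{\boldsymbol h}_r^{(t)}+\lambda^{(t)}\hat{\boldsymbol h}_g^{(t)}$.

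\textbf{Step 1: ascent inequality with an explicit error term.} Fix the dual iterate $\lambda^{(t)}$ and write $V_L^{\boldsymbol\theta,\lambda^{(t)}}=V_r^{\boldsymbol\theta}+\lambda^{(t)}V_g^{\boldsymbol\theta}$. By Lemma~\ref{Lemma:smoothness}, this function is $L'(1+\bar\sigma_\lambda)$-smooth, which is bounded by $L_\theta$. Write the primal step as a projected gradient step with gradient $\nabla V_L^{\boldsymbol\theta^{(t)},\lambda^{(t)}}+e^{(t)}$, where
\[
e^{(t)}:=\hat{\boldsymbol h}_r^{(t)}+\lambda^{(t)}\hat{\boldsymbol h}_g^{(t)}-\nabla V_L^{\boldsymbol\theta^{(t)},\lambda^{(t)}}.
\]
The projection onto the product of simplices is non-expansive. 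From the standard three-point inequality for a projected step with stepsize $1/L_\theta$, we get for every $\boldsymbol\theta\in\boldsymbol\Theta$ a bound of the form
\[
V_L^{\boldsymbol\theta^{(t+1)},\lambda^{(t)}}\ge V_L^{\boldsymbol\theta,\lambda^{(t)}}-\tfrac{L_\theta}{2}\|\boldsymbol\theta-\boldsymbol\theta^{(t)}\|^2+\tfrac{L_\theta}{2}\|\boldsymbol\theta-\boldsymbol\theta^{(t+1)}\|^2+\langle e^{(t)},\boldsymbol\theta^{(t+1)}-\boldsymbol\theta\rangle .
\]

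\textbf{Step 2: compare against a mixture toward the optimum in occupancy space.} Following \citet{ding2022ACC}, choose $\boldsymbol\theta$ as the policy $\pi^{\boldsymbol q}(\alpha\boldsymbol q^{\star}+(1-\alpha)\boldsymbol q^{\boldsymbol\theta^{(t)}})$. Since $F_r+\lambda F_g$ is linear in $\boldsymbol q$, the value at this policy equals
\[
\alpha(F_r+\lambda^{(t)}F_g)(\boldsymbol q^\star)+(1-\alpha)(F_r+\lambda^{(t)}F_g)(\boldsymbol q^{\boldsymbol\theta^{(t)}}).
\]
The Lipschitz property of the map $\pi^{\boldsymbol q}$ and the distribution-mismatch coefficient give
\[
\|\boldsymbol\theta-\boldsymbol\theta^{(t)}\|^2\le\alpha^2 D_\theta .
\]
This yields a recursion for the gap
\[
\delta_t:=(F_r+\lambda^{(t)}F_g)(\boldsymbol q^\star)-(F_r+\lambda^{(t)}F_g)(\boldsymbol q^{\boldsymbol\theta^{(t)}}),
\]
of the form
\[
\delta_{t+1}\lesssim(1-\alpha)\delta_t+\tfrac{L_\theta D_\theta\alpha^2}{2}+\text{error}+\text{dual drift}.
\]
The dual drift arises because $\lambda^{(t+1)}\ne\lambda^{(t)}$. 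It is bounded by $|\lambda^{(t+1)}-\lambda^{(t)}|\cdot|F_g|\le\eta_2|\hat h_c^{(t)}|/(1-\gamma)$, and in expectation this is controlled by Lemma~\ref{Lemma:var hc}.

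\textbf{Step 3: control the error term.} Take conditional expectation on $\mathcal F_t$. The inner product $\langle e^{(t)},\boldsymbol\theta^{(t+1)}-\boldsymbol\theta\rangle$ splits into two parts. The first is a bias part, bounded by
\[
\|\mathbb E[e^{(t)}\mid\mathcal F_t]\|\,\|\boldsymbol\theta^{(t)}-\boldsymbol\theta\|\le(1+\bar\sigma_\lambda)\epsilon_{rg}(H,M)\sqrt{D_\theta},
\]
using Lemma~\ref{Lemma:overall} and $\alpha\le1$. The second is the term involving $\boldsymbol\theta^{(t+1)}-\boldsymbol\theta^{(t)}$; we absorb it using the $\tfrac{L_\theta}{2}\|\cdot\|^2$ terms, or bound it by the diameter and the stepsize.

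\textbf{Step 4: average.} Sum the recursion over $t$ and choose $\alpha\asymp T^{-1/4}$ to balance $\alpha L_\theta D_\theta$ against $\delta_0/(\alpha T)$ and the dual-drift term $\eta_2/\alpha$, with $\eta_2\propto T^{-1/2}$. This gives $\frac1T\sum_t\mathbb E\delta_t\le 2D_\theta L_\theta T^{-1/4}$ plus the bias term $(1+\tfrac{2}{(1-\gamma)\xi})\sqrt{D_\theta}\,\epsilon_{rg}$.

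\textbf{Main obstacle.} The hardest part is Step~3. The zeroth-order estimate has only conditional-mean control from Lemma~\ref{Lemma:overall}, not a second-moment bound, and $\boldsymbol\theta^{(t+1)}$ depends on the same noise. My first attempt is to pair the error only with the $\mathcal F_t$-measurable direction $\boldsymbol\theta^{(t)}-\boldsymbol\theta$, and to handle the remaining piece via the three-point inequality and the boundedness of $\boldsymbol\Theta$. That should leave only the bias term $\sqrt{D_\theta}\,\epsilon_{rg}$ stated in the lemma.
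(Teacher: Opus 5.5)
\textbf{Verdict: genuine gap.} Your skeleton is the paper's: the occupancy-space mixture $\boldsymbol{\theta}^\alpha$, $\|\boldsymbol{\theta}^\alpha-\boldsymbol{\theta}^{(t)}\|^2\le\alpha^2D_\theta$, the dual-drift bound, and Lemma~\ref{Lemma:overall} for the bias. Summing the recursion with a fixed $\alpha\asymp T^{-1/4}$ is a fine substitute for the paper's $\beta^{(t)}$-induction.

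\textbf{The gap.} It is exactly the step you flag. With the correct sign, the three-point inequality produces $\langle e^{(t)},\boldsymbol{\theta}-\boldsymbol{\theta}^{(t+1)}\rangle$. After splitting off the $\mathcal{F}_t$-measurable direction you are left with $\langle e^{(t)},\boldsymbol{\theta}^{(t)}-\boldsymbol{\theta}^{(t+1)}\rangle$, where $\boldsymbol{\theta}^{(t+1)}$ is built from the same noise. Neither of your devices bounds this at the level of $\epsilon_{rg}(H,M)$:
\begin{itemize}
\item Young's inequality against an $\tfrac{L_\theta}{2}\|\cdot\|^2$ term leaves $\tfrac{1}{2L_\theta}\mathbb{E}[\|e^{(t)}\|^2\mid\mathcal{F}_t]$.
\item The diameter/stepsize bound leaves $\sqrt{2|S|}\,\mathbb{E}[\|e^{(t)}\|\mid\mathcal{F}_t]$ or $\eta_1\mathbb{E}[\|e^{(t)}\|\,\|\hat{\boldsymbol h}_r^{(t)}+\lambda^{(t)}\hat{\boldsymbol h}_g^{(t)}\|\mid\mathcal{F}_t]$.
\end{itemize}
Both are second-moment quantities. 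With one random direction per iteration, $\mathbb{E}[\|e^{(t)}\|^2\mid\mathcal{F}_t]$ contains two pieces:
\begin{itemize}
\item the variance due to $\boldsymbol v^{(t)}$, of order $d\,\|\nabla V_L^{\boldsymbol{\theta}^{(t)},\lambda^{(t)}}\|^2$, which need not vanish at a boundary optimum of the direct parametrization;
\item trajectory and feedback noise up to order $\tfrac{d^2}{\mu^2}\bigl(\tfrac{G(H)^2}{N}+\cdots\bigr)$.
\end{itemize}
None of this is controlled by $\epsilon_{rg}(H,M)$, and none of it decays with $T$ at the constant stepsize $\eta_1=1/L_\theta$. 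In your Step~4, such a per-step additive term is further divided by $\alpha\asymp T^{-1/4}$.

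So the bias-only right-hand side is not reachable this way. A correct version must carry a root-mean-square term, roughly $\sqrt{D_\theta}\,(\mathbb{E}\|e^{(t)}\|^2)^{1/2}$ in place of $\sqrt{D_\theta}\,\|\mathbb{E}[e^{(t)}\mid\mathcal{F}_t]\|$. Alternatively, work in a regime (large $N$, different stepsize) where that term is small.

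The paper does not close this step either. Its inequality (\ref{eq:F1}) lower-bounds $(F_r+\lambda^{(t)}F_g)(\boldsymbol q^{\boldsymbol{\theta}^{(t+1)}})$ by $\max_{\boldsymbol{\theta}}\{\cdots+\langle e^{(t)},\boldsymbol{\theta}-\boldsymbol{\theta}^{(t)}\rangle\}$. That is precisely your ``first attempt'', and it silently drops $-\langle e^{(t)},\boldsymbol{\theta}^{(t+1)}-\boldsymbol{\theta}^{(t)}\rangle$. Without projection, the dropped quantity is $-\eta_1\bigl(\langle e^{(t)},\nabla V_L^{\boldsymbol{\theta}^{(t)},\lambda^{(t)}}\rangle+\|e^{(t)}\|^2\bigr)$. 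Its conditional mean is, up to a bias-level piece, $-\eta_1\mathbb{E}[\|e^{(t)}\|^2\mid\mathcal{F}_t]\le 0$, so it cannot be discarded from a lower bound.

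\textbf{Smaller corrections.}
\begin{itemize}
\item In Step~3 you bound the bias part by $(1+\bar\sigma_\lambda)\epsilon_{rg}(H,M)\sqrt{D_\theta}$ ``using $\alpha\le1$''. You must keep the factor $\alpha$ from $\|\boldsymbol{\theta}^\alpha-\boldsymbol{\theta}^{(t)}\|\le\alpha\sqrt{D_\theta}$, as in the paper's $\alpha\epsilon_1$. Step~4 divides the summed per-step errors by $\alpha\asymp T^{-1/4}$, so dropping it would give $T^{1/4}(1+\bar\sigma_\lambda)\sqrt{D_\theta}\,\epsilon_{rg}(H,M)$ instead of the stated bias term.
\item $V_L$ is not concave in $\boldsymbol{\theta}$, so Step~1 gives only $-L_\theta\|\boldsymbol{\theta}-\boldsymbol{\theta}^{(t)}\|^2$ (as in (\ref{eq:FLip})), not $-\tfrac{L_\theta}{2}\|\boldsymbol{\theta}-\boldsymbol{\theta}^{(t)}\|^2$.
\item Telescoping leaves $(\delta_0-\delta_T)/(\alpha T)$, which needs $\delta_T\ge-(1+\bar\sigma_\lambda)/(1-\gamma)$.
\end{itemize}
The last two affect only constants and $O(T^{-3/4})$ terms.
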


\begin{proof}
By Lemma~\ref{Lemma:smoothness}, we have
\[
\left| F_r(\boldsymbol{q}^{\boldsymbol{\theta}^\star}) - F_r(\boldsymbol{q}^{\boldsymbol{\theta}^{(t)}}) - \left\langle \nabla_\theta F_r(\boldsymbol{q}^{\boldsymbol{\theta}^{(t)}}), \boldsymbol{\theta} - \boldsymbol{\theta}^{(t)} \right\rangle \right|
\le \frac{\gamma |A|}{(1-\gamma)^3} \left\| 
\boldsymbol{\theta} - \boldsymbol{\theta}^{(t)} \right\|^2 .
\]
If we fix $\lambda^{(t)}\geq0$, then
\begin{equation}
\label{eq:FLip}
\begin{aligned}
(F_r + \lambda^{(t)} F_g)(\boldsymbol{q}^{\boldsymbol{\theta}}) 
&\geq (F_r + \lambda^{(t)} F_g)(\boldsymbol{q}^{{\boldsymbol{\theta}}^{(t)}}) 
+ \left\langle \nabla_{\boldsymbol{\theta}} F_r(\boldsymbol{q}^{{\boldsymbol{\theta}}^{(t)}}) 
+ \lambda^{(t)} \nabla_{\boldsymbol{\theta}} F_g(\boldsymbol{q}^{{\boldsymbol{\theta}}^{(t)}}),\, {\boldsymbol{\theta}} - {\boldsymbol{\theta}}^{(t)} \right\rangle \\
&\quad - \frac{L_{\boldsymbol{\theta}}}{2} \left\| {\boldsymbol{\theta}} - {\boldsymbol{\theta}}^{(t)} \right\|^2 \\
&\geq (F_r + \lambda^{(t)} F_g)(\boldsymbol{q}^{\boldsymbol{\theta}}) 
- L_{\boldsymbol{\theta}} \left\| {\boldsymbol{\theta}} - {\boldsymbol{\theta}}^{(t)} \right\|^2.
\end{aligned}
\end{equation}

From line 16 of Algorithm \ref{alg:ZPGPD-HF}, we know that the update of $\boldsymbol{\theta}^{(t)}$ is equivalent to
\begin{equation}
\label{eq:update theta}
\begin{aligned}
{\boldsymbol{\theta}}^{(t+1)} 
&= \arg\max_{{\boldsymbol{\theta}} \in {\Theta}} \left\{
V_r^{{\boldsymbol{\theta}}^{(t)}}(\rho) + \lambda^{(t)} V_g^{{\boldsymbol{\theta}}^{(t)}}(\rho)\right.\\
&\quad\left.+\left\langle \hat{\boldsymbol{h}}^{(t)}_r(\rho)+\lambda^{(t)}\hat{\boldsymbol{h}}^{(t)}_g(\rho),{\boldsymbol{\theta}} - {\boldsymbol{\theta}}^{(t)}\right\rangle
- \frac{1}{2\eta_1} \left\| {\boldsymbol{\theta}} - {\boldsymbol{\theta}}^{(t)} \right\|^2
\right\}\\
&= \arg\max_{{\boldsymbol{\theta}} \in {\Theta}} \left\{
V_r^{{\boldsymbol{\theta}}^{(t)}}(\rho) + \lambda^{(t)} V_g^{{\boldsymbol{\theta}}^{(t)}}(\rho)
+\left\langle \nabla_{\boldsymbol{\theta}} V_r^{{\boldsymbol{\theta}}^{(t)}}(\rho) + \lambda^{(t)} \nabla_{\boldsymbol{\theta}} V_g^{{\boldsymbol{\theta}}^{(t)}}(\rho), {\boldsymbol{\theta}} - {\boldsymbol{\theta}}^{(t)} \right\rangle \right.\\
&\left.\quad- \frac{1}{2\eta_1} \left\| {\boldsymbol{\theta}} - {\boldsymbol{\theta}}^{(t)} \right\|^2\right.\\
&\quad\left.+\left\langle\hat{\boldsymbol{h}}^{(t)}_r(\rho)+\lambda^{(t)}\hat{\boldsymbol{h}}^{(t)}_g(\rho)-(\nabla_{\boldsymbol{\theta}} V_r^{{\boldsymbol{\theta}}^{(t)}}(\rho) + \lambda^{(t)} \nabla_{\boldsymbol{\theta}} V_g^{{\boldsymbol{\theta}}^{(t)}}(\rho)),{\boldsymbol{\theta}} - {\boldsymbol{\theta}}^{(t)}\right\rangle
\right\}
\end{aligned}
\end{equation}
By taking $\eta_1 = 1 / L_{\theta}$ and $\boldsymbol{\theta} = \boldsymbol{\theta}^{(t+1)}$ in Equation (\ref{eq:update theta}),
\begin{equation}
\label{eq:F1}
\begin{aligned}
&(F_r + \lambda^{(t)} F_g)(\boldsymbol{q}^{{\boldsymbol{\theta}}^{(t+1)}})\\
&\geq \max_{{\boldsymbol{\theta}} \in {\Theta}} \left\{
(F_r + \lambda^{(t)} F_g)(\boldsymbol{q}^{{\boldsymbol{\theta}}^{(t)}})+ \left\langle \nabla_{\boldsymbol{\theta}} F_r(\boldsymbol{q}^{{\boldsymbol{\theta}}^{(t)}}) 
+ \lambda^{(t)} \nabla_{\boldsymbol{\theta}} F_g(\boldsymbol{q}^{{\boldsymbol{\theta}}^{(t)}}),\, {\boldsymbol{\theta}} - {\boldsymbol{\theta}}^{(t)} \right\rangle \right. \\
&\quad\left. 
- \frac{L_{\boldsymbol{\theta}}}{2} \left\| {\boldsymbol{\theta}} - {\boldsymbol{\theta}}^{(t)} \right\|^2+ \left\langle\hat{\boldsymbol{h}}^{(t)}_r+\lambda^{(t)}\hat{\boldsymbol{h}}^{(t)}_g-(\nabla_{\boldsymbol{\theta}} V_r^{{\boldsymbol{\theta}}^{(t)}}(\rho) + \lambda^{(t)} \nabla_{\boldsymbol{\theta}} V_g^{{\boldsymbol{\theta}}^{(t)}}(\rho)),{\boldsymbol{\theta}} - {\boldsymbol{\theta}}^{(t)}\right\rangle 
\right\} \\
&\overset{(a)}{\geq} \max_{{\boldsymbol{\theta}} \in {\Theta}} \left\{
(F_r + \lambda^{(t)} F_g)(\boldsymbol{q}^{\boldsymbol{\theta}})
- L_{\boldsymbol{\theta}} \left\| {\boldsymbol{\theta}} - {\boldsymbol{\theta}}^{(t)} \right\|^2\right.\\
&\left.\quad+\left\langle\hat{\boldsymbol{h}}^{(t)}_r+\lambda^{(t)}\hat{\boldsymbol{h}}^{(t)}_g-(\nabla_{\boldsymbol{\theta}} V_r^{{\boldsymbol{\theta}}^{(t)}}(\rho) + \lambda^{(t)} \nabla_{\boldsymbol{\theta}} V_g^{{\boldsymbol{\theta}}^{(t)}}(\rho)),{\boldsymbol{\theta}} - {\boldsymbol{\theta}}^{(t)}\right\rangle
\right\} \\
&\overset{(b)}{\geq} \max_{\alpha \in [0,1]} \left\{
(F_r + \lambda^{(t)} F_g)(\boldsymbol{q}^{{\boldsymbol{\theta}}^\alpha})
- L_{\boldsymbol{\theta}} \left\| {\boldsymbol{\theta}}^\alpha - {\boldsymbol{\theta}}^{(t)} \right\|^2\right.\\
&\left.\quad+\left\langle\hat{\boldsymbol{h}}^{(t)}_r+\lambda^{(t)}\hat{\boldsymbol{h}}^{(t)}_g-(\nabla_{\boldsymbol{\theta}} V_r^{{\boldsymbol{\theta}}^{(t)}}(\rho) + \lambda^{(t)} \nabla_{\boldsymbol{\theta}} V_g^{{\boldsymbol{\theta}}^{(t)}}(\rho)),{\boldsymbol{\theta}}^\alpha - {\boldsymbol{\theta}}^{(t)}\right\rangle
\right\},
\end{aligned}
\end{equation}
where $\boldsymbol{\theta}^{\alpha} := \pi^{(\alpha \boldsymbol{q}^{\boldsymbol{\theta}^*} + (1 - \alpha) \boldsymbol{q}^{\boldsymbol{\theta}^{(t)}})}$. 

In $(a)$, we applied Equation~(\ref{eq:FLip}). Step $(b)$ follows from $\pi^{\boldsymbol{q}} \circ \boldsymbol{q}^\pi = \mathrm{id}_{SA}$ and linearity of $\boldsymbol{q}^{\boldsymbol{\theta}}$ in $\boldsymbol{\theta}$. 

Since $F_r$ and $F_g$ are linear in $q^{\boldsymbol{\theta}}$, we have
\begin{equation}
\begin{aligned}
\left(F_r + \lambda^{(t)} F_g\right)(\boldsymbol{q}^{{\boldsymbol{\theta}}^\alpha}) 
= \alpha \left(F_r + \lambda^{(t)} F_g\right)(\boldsymbol{q}^{{\boldsymbol{\theta}}^\star}) 
+ (1 - \alpha) \left(F_r + \lambda^{(t)} F_g\right)(\boldsymbol{q}^{{\boldsymbol{\theta}}^{(t)}}).
\end{aligned}
\end{equation}
And from Equation (55) of \citep{ding2022convergence}, we have that
\begin{equation}
\label{eq:thetabound}
\left\| {\boldsymbol{\theta}}_\alpha - {\boldsymbol{\theta}}^{(t)} \right\|^2\leq\alpha^2D_\theta.
\end{equation}
Therefore, we have
\begin{equation*}
\begin{aligned}
&\left(F_r + \lambda^{(t)} F_g\right)(\boldsymbol{q}^{{\boldsymbol{\theta}}^\star}) 
- \left(F_r + \lambda^{(t)} F_g\right)(\boldsymbol{q}^{{\boldsymbol{\theta}}^{(t+1)}}) \\
&\overset{}{\leq} \min_{\alpha \in [0,1]} \left\{
L_{\boldsymbol{\theta}} \left\| {\boldsymbol{\theta}}^\alpha - {\boldsymbol{\theta}}^{(t)} \right\|^2 
+ \left(F_r + \lambda^{(t)} F_g\right)(\boldsymbol{q}^{{\boldsymbol{\theta}}^\star}) 
- \left(F_r + \lambda^{(t)} F_g\right)(\boldsymbol{q}^{{\boldsymbol{\theta}}^\alpha})\right.\\
&\left.\quad+\left\langle\hat{\boldsymbol{h}}^{(t)}_r+\lambda^{(t)}\hat{\boldsymbol{h}}^{(t)}_g-(\nabla_{\boldsymbol{\theta}} V_r^{{\boldsymbol{\theta}}^{(t)}}(\rho) + \lambda^{(t)} \nabla_{\boldsymbol{\theta}} V_g^{{\boldsymbol{\theta}}^{(t)}}(\rho)),{\boldsymbol{\theta}}^\alpha - {\boldsymbol{\theta}}^{(t)}\right\rangle \right\} \\
&\leq \min_{\alpha \in [0,1]} \left\{
\alpha^2 D_{\boldsymbol{\theta}} L_{\boldsymbol{\theta}}+(1 - \alpha) \left( \left(F_r + \lambda^{(t)} F_g\right)(\boldsymbol{q}^{{\boldsymbol{\theta}}^\star}) 
- \left(F_r + \lambda^{(t)} F_g\right)(\boldsymbol{q}^{{\boldsymbol{\theta}}^{(t)}}) \right)\right.\\
&\quad\left.+\left\langle\hat{\boldsymbol{h}}^{(t)}_r+\lambda^{(t)}\hat{\boldsymbol{h}}^{(t)}_g-(\nabla_{\boldsymbol{\theta}} V_r^{{\boldsymbol{\theta}}^{(t)}}(\rho) + \lambda^{(t)} \nabla_{\boldsymbol{\theta}} V_g^{{\boldsymbol{\theta}}^{(t)}}(\rho)),{\boldsymbol{\theta}}^\alpha - {\boldsymbol{\theta}}^{(t)}\right\rangle
\right\}
\end{aligned}
\end{equation*}
We can further get
\begin{equation*}
\begin{aligned}
&\left(F_r + \lambda^{(t+1)} F_g\right)(\boldsymbol{q}^{{\boldsymbol{\theta}}^\star}) 
- \left(F_r + \lambda^{(t+1)} F_g\right)(\boldsymbol{q}^{{\boldsymbol{\theta}}^{(t+1)}}) \\
&\leq \min_{\alpha \in [0,1]} \left\{
\alpha^2 D_{\boldsymbol{\theta}} L_{\boldsymbol{\theta}}
+ (1 - \alpha) \left( 
\left(F_r + \lambda^{(t)} F_g\right)(\boldsymbol{q}^{{\boldsymbol{\theta}}^\star}) 
- \left(F_r + \lambda^{(t)} F_g\right)(\boldsymbol{q}^{{\boldsymbol{\theta}}^{(t)}})
\right) \right. \\
&\quad\left.
+\left\langle\hat{\boldsymbol{h}}^{(t)}_r+\lambda^{(t)}\hat{\boldsymbol{h}}^{(t)}_g-(\nabla_{\boldsymbol{\theta}} V_r^{{\boldsymbol{\theta}}^{(t)}}(\rho) + \lambda^{(t)} \nabla_{\boldsymbol{\theta}} V_g^{{\boldsymbol{\theta}}^{(t)}}(\rho)),{\boldsymbol{\theta}}^\alpha - {\boldsymbol{\theta}}^{(t)}\right\rangle\right\}\\
&\quad - \left(\lambda^{(t)} - \lambda^{(t+1)}\right) 
\left(F_g(\boldsymbol{q}^{{\boldsymbol{\theta}}^\star}) - F_g(\boldsymbol{q}^{{\boldsymbol{\theta}}^{(t+1)}})\right).
\end{aligned}
\end{equation*}

Take conditional expectation of the drift over the natural filtration $\mathcal{F}_t$ we got
\begin{equation}
\label{eq:EF}
\begin{aligned}
&\mathbb{E}\left[\left(F_r + \lambda^{(t+1)} F_g\right)(\boldsymbol{q}^{{\boldsymbol{\theta}}^\star}) 
- \left(F_r + \lambda^{(t+1)} F_g\right)(\boldsymbol{q}^{{\boldsymbol{\theta}}^{(t+1)}})|\mathcal{F}_t \right] \\
&\leq \min_{\alpha \in [0,1]} \left\{
\alpha^2 D_{\boldsymbol{\theta}} L_{\boldsymbol{\theta}}
+ (1 - \alpha) \mathbb{E}\left[
\left(F_r + \lambda^{(t)} F_g\right)(\boldsymbol{q}^{{\boldsymbol{\theta}}^\star}) 
- \left(F_r + \lambda^{(t)} F_g\right)(\boldsymbol{q}^{{\boldsymbol{\theta}}^{(t)}})|\mathcal{F}_t
\right] \right. \\
&\quad\left.
+\mathbb{E}\left[\left\langle\hat{\boldsymbol{h}}^{(t)}_r(\rho)+\lambda^{(t)}\hat{\boldsymbol{h}}^{(t)}_g(\rho)-(\nabla_{\boldsymbol{\theta}} V_r^{{\boldsymbol{\theta}}^{(t)}}(\rho) + \lambda^{(t)} \nabla_{\boldsymbol{\theta}} V_g^{{\boldsymbol{\theta}}^{(t)}}(\rho)),{\boldsymbol{\theta}}^\alpha - {\boldsymbol{\theta}}^{(t)}\right\rangle|\mathcal{F}_t\right]\right\}\\
&\quad- \mathbb{E}\left[\left(\lambda^{(t)} - \lambda^{(t+1)}\right) 
\left(F_g(\boldsymbol{q}^{{\boldsymbol{\theta}}^\star})F_g(\boldsymbol{q}^{{\boldsymbol{\theta}}^{(t+1)}})\right)|\mathcal{F}_t\right]\\
&\overset{(a)}{\leq}\min_{\alpha \in [0,1]} \left\{
\alpha^2 D_{\boldsymbol{\theta}} L_{\boldsymbol{\theta}}
+ (1 - \alpha) \mathbb{E}\left[
\left(F_r + \lambda^{(t)} F_g\right)(\boldsymbol{q}^{{\boldsymbol{\theta}}^\star}) 
- \left(F_r + \lambda^{(t)} F_g\right)(\boldsymbol{q}^{{\boldsymbol{\theta}}^{(t)}})|\mathcal{F}_t
\right] \right. \\
&\quad\left.
+\left\|\mathbb{E}\left[\hat{\boldsymbol{h}}^{(t)}_r(\rho)+\lambda^{(t)}\hat{\boldsymbol{h}}^{(t)}_g(\rho)-(\nabla_{\boldsymbol{\theta}} V_r^{{\boldsymbol{\theta}}^{(t)}}(\rho) + \lambda^{(t)} \nabla_{\boldsymbol{\theta}} V_g^{{\boldsymbol{\theta}}^{(t)}}(\rho))|\mathcal{F}_t\right]\right\| \left\|{\boldsymbol{\theta}}^\alpha - {\boldsymbol{\theta}}^{(t)}\right\|\right\}+ \frac{D_\theta L_\theta}{2\sqrt{T}}\\
&\overset{(b)}{\leq}\min_{\alpha \in [0,1]} \left\{
\alpha^2 D_{\boldsymbol{\theta}} L_{\boldsymbol{\theta}}
+ (1 - \alpha) \mathbb{E}\left[
\left(F_r + \lambda^{(t)} F_g\right)(\boldsymbol{q}^{{\boldsymbol{\theta}}^\star}) 
- \left(F_r + \lambda^{(t)} F_g\right)(\boldsymbol{q}^{{\boldsymbol{\theta}}^{(t)}})|\mathcal{F}_t
\right]+ \frac{D_\theta L_\theta}{2\sqrt{T}} \right. \\
&\quad\left.
+\left(\left\|\mathbb{E}\left[\hat{\boldsymbol{h}}^{(t)}_r(\rho)-\nabla_{\boldsymbol{\theta}} V_r^{{\boldsymbol{\theta}}^{(t)}}(\rho) |\mathcal{F}_t\right]\right\| +\bar{\sigma}_\lambda \left\|\mathbb{E}\left[\hat{\boldsymbol{h}}^{(t)}_g(\rho)- \nabla_{\boldsymbol{\theta}}V_g^{{\boldsymbol{\theta}}^{(t)}}(\rho)|\mathcal{F}_t\right]\right\| \right)\left\|{\boldsymbol{\theta}}^\alpha - {\boldsymbol{\theta}}^{(t)}\right\|\right\}\\
&\overset{(c)}{\leq}\min_{\alpha \in [0,1]} \left\{
\alpha^2 D_{\boldsymbol{\theta}} L_{\boldsymbol{\theta}}
+ (1 - \alpha) \mathbb{E}\left[
\left(F_r + \lambda^{(t)} F_g\right)(\boldsymbol{q}^{{\boldsymbol{\theta}}^\star}) 
- \left(F_r + \lambda^{(t)} F_g\right)(\boldsymbol{q}^{{\boldsymbol{\theta}}^{(t)}})|\mathcal{F}_t
\right]\right.\\
&\left.\quad+\alpha\underbrace{(1+\bar{\sigma}_\lambda)\epsilon_{rg}(H,M)\sqrt{D_\theta}}_{\triangleq\epsilon_1}  \right\}+ \frac{D_\theta L_\theta}{2\sqrt{T}}.
\end{aligned}
\end{equation}
In $(a)$, we use Cauchy–Schwarz inequality and
\begin{equation*}
- \bigl(\lambda^{(t)} - \lambda^{(t+1)}\bigr)
\bigl(F_g(q^{{\boldsymbol{\theta}}^*}) - F_g(q^{{\boldsymbol{\theta}}^{(t+1)}})\bigr)
\;\le\;
\frac{\lvert \lambda^{(t)} - \lambda^{(t+1)}\rvert}{1 - \gamma}
\;\le\;
\frac{\eta_2}{(1 - \gamma)^2}=\frac{D_{\theta}L_{\theta}}{2\sqrt{T}}.
\end{equation*}
Step $(b)$ follows from Cauchy–Schwarz inequality and boundedness of $\lambda$.
In $(c)$, we use Equation (\ref{eq:thetabound}) and Lemma \ref{Lemma:overall}.

Let
\begin{equation}
\label{eq:beta}
\begin{aligned}
 \beta^{(t)}=\frac{\mathbb{E}\left[\left(F_r + \lambda^{(t)} F_g\right)(\boldsymbol{q}^{{\boldsymbol{\theta}}^\star}) 
- \left(F_r + \lambda^{(t)} F_g\right)(\boldsymbol{q}^{{\boldsymbol{\theta}}^{(t)}})|\mathcal{F}_t\right]-\epsilon_1
}{2D_{\theta}L_{\theta}}.
\end{aligned}
\end{equation}

When $\beta^{(t)}<0$, in order to minimize Equation (\ref{eq:EF}), we should set $\alpha=0$, we got:
\begin{equation*}
\begin{aligned}
&\mathbb{E}\left[\left(F_r + \lambda^{(t+1)} F_g\right)(\boldsymbol{q}^{{\boldsymbol{\theta}}^\star}) - \left(F_r + \lambda^{(t+1)} F_g\right)(\boldsymbol{q}^{{\boldsymbol{\theta}}^{(t+1)}})|\mathcal{F}_t\right]\\
&\leq\frac{D_\theta L_\theta}{2\sqrt{T}}+\epsilon_1.
\end{aligned}
\end{equation*}

When $\beta^{(t)}>1$, in order to minimize Equation (\ref{eq:EF}) ,we should set $\alpha=1$
that leads to
\begin{equation*}
\begin{aligned}
&\mathbb{E}\left[\left(F_r + \lambda^{(t+1)} F_g\right)(\boldsymbol{q}^{{\boldsymbol{\theta}}^\star}) - \left(F_r + \lambda^{(t+1)} F_g\right)(\boldsymbol{q}^{{\boldsymbol{\theta}}^{(t+1)}})|\mathcal{F}_t\right]\\
&\leq D_{\theta}L_{\theta}+\frac{D_{\theta}L_{\theta}}{2\sqrt{T}}\leq\frac{3}{2}D_{\theta}L_{\theta}.
\end{aligned}
\end{equation*}
While from Equation (\ref{eq:beta}) we can get $\beta^{(t)}<\frac{3}{4}$, which contradict.

When $0<\beta^{(t)}<1$, we can express Equation (\ref{eq:EF}) as
\begin{equation}
\begin{aligned}
\beta^{(t+1)}\leq(1-\frac{\beta^{(t)}}{2})\beta(t)+\frac{1}{4\sqrt{T}}.
\end{aligned}
\end{equation}
Without loss of generality, we assume that
\[
  0 \;\le\; \beta^{(t)} \;\le\;\frac{1}{T^{1/4}}\;\le\;1.
\]
By choosing $\lambda^{(0)} = 0$ and $\boldsymbol{\theta}^{(0)}$ such that 
$V_r^{\boldsymbol{\theta}^{(0)}}(\rho) \ge V_r^{\boldsymbol{\theta}^\star}(\rho)$, we know that 
$\beta^{(0)} \le 0$. Thus, $\beta^{(1)} \le \frac{1}{4\sqrt{T}}$. 
The case $\beta^{(t)} \le 0$ is trivial. 

Therefore, by induction of  over $t$, we have 
$\beta^{(t+1)} \le \frac{1}{T^{1/4}}$.
Using tower law, we can get
\begin{equation*}
\begin{aligned}
&\mathbb{E}\left[\left(F_r + \lambda^{(t)} F_g\right)(\boldsymbol{q}^{{\boldsymbol{\theta}}^\star}) 
- \left(F_r + \lambda^{(t)} F_g\right)(\boldsymbol{q}^{{\boldsymbol{\theta}}^{(t)}})\right]\\
&=\mathbb{E}\left[\mathbb{E}\left[\left(F_r + \lambda^{(t)} F_g\right)(\boldsymbol{q}^{{\boldsymbol{\theta}}^\star}) 
- \left(F_r + \lambda^{(t)} F_g\right)(\boldsymbol{q}^{{\boldsymbol{\theta}}^{(t)}})|\mathcal{F}_t\right]\right]\\
&\leq\frac{2D_\theta L_\theta}{T^{1/4}}+\epsilon_1.
\end{aligned}
\end{equation*}
Averaging over 
$t = 0, 1, \cdots, T-1$, we obtain
\begin{equation*}
\frac{1}{T}\sum_{t=0}^{T-1}\mathbb{E}\left[\left(F_r + \lambda^{(t)} F_g\right)(\boldsymbol{q}^{{\boldsymbol{\theta}}^\star}) 
- \left(F_r + \lambda^{(t)} F_g\right)(\boldsymbol{q}^{{\boldsymbol{\theta}}^{(t)}})\right]\leq\frac{2D_\theta L_\theta}{T^{1/4}}+\epsilon_1.
\end{equation*}
\end{proof}

\textbf{We now prove optimality gap in Theorem \ref{thm2}:}
Recall that corresponding value functions under the initial distribution \(\rho\) can then be expressed as  
\[
V_r^\pi(\rho)=\langle \boldsymbol{q}^\pi,\boldsymbol{r}\rangle \coloneqq F_r(\boldsymbol{q}^\pi),
\quad
V_g^\pi(\rho)=\langle \boldsymbol{q}^\pi,\boldsymbol{g}\rangle \coloneqq F_g(\boldsymbol{q}^\pi).
\]
Directly add Lemma \ref{Lemma:FLambda} and Lemma \ref{Lemma:lambda}, we get:
\begin{equation}
\begin{aligned}
&\frac{1}{T}\sum_{t=0}^{T-1}\mathbb{E}\left[V_r^\star(\rho) - V_r^{\boldsymbol{\theta}^{(t)}}(\rho)\bigr)\right]\\
&\leq\frac{2D_\theta L_\theta}{T^{1/4}}+(1+\frac{2}{(1-\gamma)\xi})\sqrt{D_\theta}\epsilon_{rg}(H,M)+\frac{2}{(1-\gamma)\xi}\epsilon_c(H,M)+\frac{\sigma^2_2(H,M,N)\eta_2}{2}
\end{aligned}
\end{equation}
By substituting $\epsilon_{rg}(H,M)$ and $\epsilon_c(H,M)$ with definition from Lemma \ref{Lemma:overall} and Lemma \ref{Lemma:bias hc}, we obtain:
\begin{equation*}
\begin{aligned}
&\frac{1}{T}\sum_{t=0}^{T-1}\mathbb{E}\left[V_r^\star(\rho) - V_r^{\boldsymbol{\theta}^{(t)}}(\rho)\right]\\
&\leq\frac{2D_\theta L_\theta}{T^{1/4}}+(1+\frac{2}{(1-\gamma)\xi})\sqrt{D_\theta}\left(\frac{\mu L'd}{2}+ \frac{2d\gamma^H}{\mu(1-\gamma)}+\frac{d}{\mu}\left( L \sqrt{ \frac{2 \log M}{M}} 
+ \frac{2G}{M^2}\right)\right)\\
&\quad+\frac{2}{(1-\gamma)\xi}\left(\frac{ \gamma^H}{1 - \gamma}+\left( L \sqrt{ \frac{2 \log M}{M} } + \frac{2G(H)}{M^2}\right)\right)+\frac{\sigma^2_2(H,M,N)\eta_2}{2}.
\end{aligned}
\end{equation*}
Let $\mu^2=\max\left\{\frac{2 \gamma^H}{(1-\gamma)},  L \sqrt{ \frac{2 \log M}{M} } 
+ \frac{2G(H)}{M^2}\right\}.$

We can get
\(
\frac{\mu L'd}{2}\geq\frac{2d\gamma^H}{\mu(1-\gamma)}
\)
and
\(
\frac{\mu L'd}{2}\geq\frac{d}{\mu}\left( L \sqrt{ \frac{2 \log M}{M} } 
+ \frac{2G}{M^2}\right).
\)
Thus,
\begin{equation*}
\begin{aligned}
&\frac{1}{T}\sum_{t=0}^{T-1}\mathbb{E}\left[V_r^\star(\rho) - V_r^{\boldsymbol{\theta}^{(t)}}(\rho)\right]\\
&\leq\frac{2D_\theta L_\theta}{T^{1/4}}+(1+\frac{2}{(1-\gamma)\xi})\sqrt{D_\theta}\left(\frac{3\mu L'd}{2}\right)\\
&\quad+\frac{2}{(1-\gamma)\xi}\left(\frac{ \gamma^H}{1 - \gamma}+\left( L \sqrt{ \frac{2 \log M}{M} } + \frac{2G(H)}{M^2}\right)\right)+\frac{\sigma^2_2(H,M,N)\eta_2}{2}.
\end{aligned}
\end{equation*}
By using the facts that 
\[
\frac{\sigma^2_2(H,M,N)\eta_2}{2} \sim \mathcal{O}\!\left(\frac{1}{\sqrt{T}}\right), \quad
\mu \leq \sqrt{\frac{2\gamma^H}{1-\gamma}} 
+ \sqrt{L\sqrt{\frac{2\log M}{M}} + \frac{2G(H)}{M^2}},
\]
\[
\mathcal{O}\!\left(\frac{1}{M^2}\right) \leq \mathcal{O}\!\left(\sqrt{\frac{\log M}{M}}\right),\quad
\text{and} \quad 
\sqrt{D_{\theta}} \leq \mathcal{O}(\sqrt{d}),
\]
we can derive the final result:
\begin{equation}
\label{th2opti}
\begin{aligned}
&\frac{1}{T}\sum_{t=0}^{T-1}\mathbb{E}\left[V_r^\star(\rho) - V_r^{\boldsymbol{\theta}^{(t)}}(\rho)\right]\leq\mathcal{O}\left(\frac{d}{T^{1/4}}+d^{3/2}  \sqrt{\frac{ \log M}{M}}
+d^{3/2}\sqrt{\frac{\gamma^H}{1-\gamma}}\right).
\end{aligned}
\end{equation}
\textbf{We then bound the constraint violation.}
By line 17 of Algorithm \ref{alg:ZPGPD-HF}, for any $\lambda \in \left[0, \frac{2}{(1-\gamma)\xi}\right]$, 
\begin{align*}
\left|\lambda^{(t+1)} - \lambda\right|^2 
&\le 
\left| \lambda^{(t)} - \eta_2 \left(\hat{h}_c^{(\rho)}\right) - \lambda \right|^2 \\
&= 
\left|\lambda^{(t)} - \lambda\right|^2 
- 2\eta_2 \left(\hat{h}_c^{(t)}(\rho)\right)\left(\lambda^{(t)} - \lambda\right) 
+ \eta_2^2 \left(\hat{h}_c^{(t)}(\rho)\right)^2, \\
\end{align*}
where the first inequality is due to the non-expansiveness of projection operator $\mathcal{P}_\Lambda$ and the last inequality is due to 
$\left(V_g^{\boldsymbol{\theta}^{(t)}}(\rho) - b\right)^2 \le \frac{1}{(1-\gamma)^2}$.
Summing it up from $t = 0$ to $t = T-1$ and dividing it by $T$ yield
\[
0 \le \frac{1}{T}\left|\lambda^{(T)} - \lambda\right|^2 
\le \frac{1}{T}\left|\lambda^{(0)} - \lambda\right|^2 
- \frac{2\eta_2}{T} \sum_{t=0}^{T-1} 
\left(\hat{h}_c^{(t)}(\rho) \right)\left(\lambda^{(t)} - \lambda\right) 
+ \eta_2^2(\hat{h}_c^{(t)}(\rho))^2.
\]
Take expectation and use Lemma \ref{Lemma:var hc}, we can get
\[
\frac{1}{T} \sum_{t=0}^{T-1} 
\mathbb{E}\left(\hat{h}_c^{(t)}(\rho) \right)\left(\lambda^{(t)} - \lambda\right)
\le 
\frac{1}{2\eta_2 T}\left|\lambda^{(0)} - \lambda\right|^2 
+ \frac{\eta_2}{2}\sigma_2^2(H,M,N).
\]
Add it to Lemma \ref{Lemma:FLambda}, we get
\begin{align*}
&\frac{1}{T} \sum_{t=0}^{T-1} \mathbb{E}\Big( V_{r}^{\pi^\star}(\rho) - V_{r}^{\boldsymbol{\theta}^{(t)}}(\rho) \Big)
+ \frac{1}{T} \sum_{t=0}^{T-1} \lambda^{(t)} \mathbb{E}\left( V_{g}^{\pi^\star}(\rho) - V_{g}^{\boldsymbol{\theta}^{(t)}}(\rho)+\hat{h}_c^{(t)}(\rho) \right)\\
&-\frac{\lambda}{T}\sum_{t=0}^{T-1}\mathbb{E}\left(\hat{h}_c^{(t)}(\rho)\right)\\
&=\frac{1}{T} \sum_{t=0}^{T-1} \mathbb{E}\Big( V_{r}^{\pi^\star}(\rho) - V_{r}^{\boldsymbol{\theta}^{(t)}}(\rho) \Big)\\
&\quad+ \frac{1}{T} \sum_{t=0}^{T-1} \lambda^{(t)} \mathbb{E}\left( V_{g}^{\pi^\star}(\rho) - V_{g}^{\boldsymbol{\theta}^{(t)}}(\rho)+V_{g}^{\boldsymbol{\theta}^{(t)}}(\rho)-b+\hat{h}_c^{(t)}(\rho)-\left(V_{g}^{\boldsymbol{\theta}^{(t)}}(\rho)-b\right) \right)\\
&\quad-\frac{\lambda}{T}\sum_{t=0}^{T-1}\mathbb{E}\left(\hat{h}_c^{(t)}(\rho)-\left(V_{g}^{\boldsymbol{\theta}^{(t)}}(\rho)-b\right)+\left(V_{g}^{\boldsymbol{\theta}^{(t)}}(\rho)-b\right)\right)\\
&\leq\frac{2D_\theta L_\theta}{T^{1/4}}+(1+\frac{2}{(1-\gamma)\xi})\sqrt{D_\theta}\epsilon_{rg}(H,M)+\frac{1}{2\eta_2 T}\left|\lambda^{(0)} - \lambda\right|^2 
+ \frac{\eta_2}{2}\sigma_2^2(H,M,N).
\end{align*}
By using Lemma \ref{Lemma:bias hc}, boundedness of $\lambda^{(t)}$ and the fact that $V_{g}^{\pi^\star}(\rho)-b>0$, we can get 
\begin{align*}
&\quad\frac{1}{T} \sum_{t=0}^{T-1} \mathbb{E}\left( V_{r}^{\pi^\star}(\rho) - V_{r}^{\boldsymbol{\theta}^{(t)}}(\rho) \right)- 2\bar{\sigma}_{\lambda} \epsilon_c(H,M)+\frac{\lambda}{T}\sum_{t=0}^{T-1}\mathbb{E}\left(b-V_{g}^{\boldsymbol{\theta}^{(t)}}(\rho)\right)\\
&\leq\frac{2D_\theta L_\theta}{T^{1/4}}+(1+\frac{2}{(1-\gamma)\xi})\sqrt{D_\theta}\epsilon_{rg}(H,M)+\frac{1}{2\eta_2 T}\left|\lambda^{(0)} - \lambda\right|^2 
+ \frac{\eta_2}{2}\sigma_2^2(H,M,N).
\end{align*}
We take $\lambda=\bar{\sigma}_{\lambda}$ when $\sum_{t=0}^{T-1}\mathbb{E}\left(b-V_{g}^{\boldsymbol{\theta}^{(t)}}(\rho)\right)\geq0$; otherwise $\lambda=0$.
Thus,
\begin{align*}
&V_{r}^{\pi^\star}(\rho)- \mathbb{E}\left( \frac{1}{T} \sum_{t=0}^{T-1} V_{r}^{(t)}(\rho) \right)- 2\bar{\sigma}_{\lambda} \epsilon_c(H,M)+\bar\sigma_\lambda\left[b-\mathbb{E}\left(\frac{1}{T}\sum_{t=0}^{T-1}V_{g}^{(t)}(\rho)\right)\right]_+\\
&\leq\frac{2D_\theta L_\theta}{T^{1/4}}+(1+\frac{2}{(1-\gamma)\xi})\sqrt{D_\theta}\epsilon_{rg}(H,M)+\frac{1}{2\eta_2 T}\left|\lambda^{(0)} - \lambda\right|^2 
+ \frac{\eta_2}{2}\sigma_2^2(H,M,N).
\end{align*}
As in the beginning of this section, we know that $V^{\boldsymbol{\theta}^{(t)}}_r(\rho)$ and $V^{\boldsymbol{\theta}^{(t)}}_g(\rho)$ are linear functions of the occupancy measure $q^{\pi}$ induced by policy $\pi^{(t)}$.
By the convexity of the set of occupancy measures, the average of $T$ occupancy measures and take expectation is still an occupancy measure that induces a policy $\pi'$ with values $V^{\pi'}_r$ and $V^{\pi'}_g$.
Therefore, there exists a policy $\pi'$ such that 
\[
V^{\pi'}_r(\rho) =\mathbb{E} \left(\frac{1}{T} \sum_{t=0}^{T-1} V^{\boldsymbol{\theta}^{(t)}}_r(\rho)\right), 
\quad 
V^{\pi'}_g(\rho) = \mathbb{E} \left(\frac{1}{T} \sum_{t=0}^{T-1} V^{\boldsymbol{\theta}^{(t)}}_g(\rho)\right).
\]

\medskip

Therefore,
\begin{align*}
&V_r^\star(\rho) - V^{\pi'}_r(\rho) + \bar{\sigma}_{\lambda} \big[ b - V^{\pi'}_g(\rho) \big]_+\\
&\leq\frac{2D_\theta L_\theta}{T^{1/4}}+(1+\frac{2}{(1-\gamma)\xi})\sqrt{D_\theta}\epsilon_{rg}(H,M)+\frac{1}{2\eta_2 T}\left|\lambda^{(0)} - \lambda\right|^2 
+ \frac{\eta_2}{2}\sigma_2^2(H,M,N).
\end{align*}

We note that $\bar\sigma_{\lambda}=\frac{2}{(1-\gamma)\xi} \ge 2\lambda^\star$. 
According to Lemma~\ref{Lemma:constraint_violation} and Lemma \ref{Lemma:overall}, we obtain
\begin{align*}
&\big[ b - V^{\pi'}_g(\rho) \big]_+\\
& 
\leq\frac{2}{\bar{\sigma}_{\lambda}}\left(\frac{2D_\theta L_\theta}{T^{1/4}}+(1+\frac{2}{(1-\gamma)\xi})\sqrt{D_\theta}\epsilon_{rg}(H,M)+\frac{1}{2\eta_2 T}\left|\lambda^{(0)} - \lambda\right|^2 
+ \frac{\eta_2}{2}\sigma_2^2(H,M,N)\right)\\
&\leq\frac{2}{\bar{\sigma}_{\lambda}}\left(\frac{2D_\theta L_\theta}{T^{1/4}}+(1+\frac{2}{(1-\gamma)\xi})\sqrt{D_\theta}\left(\frac{\mu L'd}{2}+ \frac{2d\gamma^H}{\mu(1-\gamma)}+\frac{d}{\mu}\left( L \sqrt{ \frac{2 \log M}{M}} 
+ \frac{2G(H)}{M^2}\right)\right)\right.\\
&\quad\left.+\frac{1}{2\eta_2 T}\left|\lambda^{(0)} - \lambda\right|^2 
+ \frac{\eta_2}{2}\sigma_2^2(H,M,N)\right).
\end{align*}
Similarly to the derivation of Equation~\ref{th2opti}, let
\[
\mu^2 = \max\!\left\{
\frac{2\gamma^H}{1-\gamma}, \;
L\sqrt{\frac{2\log M}{M}} + \frac{2G(H)}{M^2}
\right\},
\]
and use the facts that
\[
\frac{1}{2\eta_2 T}|\lambda^{(0)} - \lambda|^2 
\leq \mathcal{O}\!\left(\frac{1}{T^{1/4}}\right), 
\quad 
\frac{\eta_2}{2}\sigma_2^2(H,M,N)
\leq \mathcal{O}\!\left(\frac{1}{T^{1/4}}\right),
\]
\[
\mu \leq 
\sqrt{\frac{2\gamma^H}{1-\gamma}} 
+ \sqrt{L\sqrt{\frac{2\log M}{M}} + \frac{2G(H)}{M^2}},
\]
\[
\mathcal{O}\!\left(\frac{1}{M^2}\right) 
\leq \mathcal{O}\!\left(\sqrt{\frac{\log M}{M}}\right),
\quad \text{and} \quad 
\sqrt{D_{\theta}} \leq \mathcal{O}(\sqrt{d}),
\]
we obtain the final result:
\begin{equation*}
\frac{1}{T} \sum_{t=0}^{T-1} 
\mathbb{E}\!\left[ b - V_{g}^{\boldsymbol{\theta}^{(t)}}(\rho) \right]_+
\;\leq\; 
\mathcal{O}\!\left(\frac{d}{T^{1/4}}+d^{3/2}  \left(\frac{ \log M}{M} \right)^{1/4}
+d^{3/2}\sqrt{\frac{\gamma^H}{1-\gamma}}\right).
\end{equation*}

\section{Proof of Theorem \ref{thm1}}
\label{proof:thm1}
The proof follows the steps of \citep{ding2020natural}, with an additional step to account for bias in the advantage function.

We first provide the performance difference Lemma from \citep{kakade2002approximately}.
\begin{Lemma}
\label{Lemma:performance diff}
The performance difference Lemma quantifies the difference between $V^{\pi}_{\diamond}(s_{0})$ and
$V^{\pi'}_{\diamond}(s_{0})$ for any two policies $\pi$ and $\pi'$
and any state $s_{0}$,
\[
V^{\pi}_{\diamond}(s_{0}) - V^{\pi'}_{\diamond}(s_{0})
= \frac{1}{1-\gamma} \; \mathbb{E}_{s \sim d^{\pi}_{s_{0}}, \, a \sim \pi(\cdot|s)}
\left[ A^{\pi'}_{\diamond}(s,a) \right].
\]
\end{Lemma}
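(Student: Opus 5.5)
The statement to prove is the performance difference lemma of \citet{kakade2002approximately}, cited here as Lemma~\ref{Lemma:performance diff}. I would prove it by a telescoping argument along a trajectory of $\pi$.

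Fix $s_0$ and generate a trajectory with $a_t\sim\pi(\cdot\mid s_t)$ and $s_{t+1}\sim P(\cdot\mid s_t,a_t)$. Write the first value as a discounted sum of per-step rewards,
\[
V^{\pi}_{\diamond}(s_0)=\mathbb{E}\Big[\sum_{t\ge0}\gamma^t \diamond(s_t,a_t)\Big].
\]
Add and subtract $\gamma^{t+1}V^{\pi'}_\diamond(s_{t+1})$ inside the sum. Since all terms are bounded and $\gamma<1$, the series $\sum_t \gamma^t\big(\gamma V^{\pi'}_\diamond(s_{t+1}) - V^{\pi'}_\diamond(s_t)\big)$ telescopes to $-V^{\pi'}_\diamond(s_0)$; this is justified by dominated convergence, using $|V^{\pi'}_\diamond|\le 1/(1-\gamma)$. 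This gives
\[
V^{\pi}_\diamond(s_0)-V^{\pi'}_\diamond(s_0)=\mathbb{E}\Big[\sum_{t\ge0}\gamma^t\big(\diamond(s_t,a_t)+\gamma V^{\pi'}_\diamond(s_{t+1})-V^{\pi'}_\diamond(s_t)\big)\Big].
\]

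Next, condition on $(s_t,a_t)$ and apply the tower property. The conditional expectation of $\diamond(s_t,a_t)+\gamma V^{\pi'}_\diamond(s_{t+1})$ is $Q^{\pi'}_\diamond(s_t,a_t)$ by the Bellman equation for $\pi'$. Each summand therefore becomes $\gamma^t A^{\pi'}_\diamond(s_t,a_t)$, where $A^{\pi'}_\diamond = Q^{\pi'}_\diamond - V^{\pi'}_\diamond$.

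Finally, exchange the sum and the expectation, which Fubini permits since the advantages are bounded. Then regroup by state:
\[
\sum_t\gamma^t\,\Pr^\pi(s_t=s\mid s_0)=\frac{d^\pi_{s_0}(s)}{1-\gamma},
\]
using definition \eqref{eq:discounted-visitation}. This yields
\[
\frac{1}{1-\gamma}\,\mathbb{E}_{s\sim d^\pi_{s_0},\,a\sim\pi(\cdot\mid s)}\big[A^{\pi'}_\diamond(s,a)\big].
\]

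The only delicate point is justifying the telescoping and the interchange of the infinite sum with the expectation. Boundedness of the per-step quantities in $[0,1]$ and $\gamma<1$ give absolute summability, which settles both. The same argument applies verbatim for $\diamond=r$ and $\diamond=g$.
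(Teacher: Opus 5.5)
Your proof is correct. It is the standard telescoping argument of Kakade and Langford (2002); the paper gives no proof of its own and simply cites that reference. One small ordering fix: exchange the infinite sum and the expectation before applying the tower property term by term, not afterwards; your absolute-summability remark (summands bounded by a constant times $\gamma^t$) already justifies this.
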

We then quantify the bias in the estimated advantage function induced by human feedback and finite horizon approximation.
\begin{Lemma}
\label{Lemma:bias advantage}
The bias between the advantage function estimated at line~19 of Algorithm~\ref{alg:NPGPD-HF} and the ground truth advantage function satisfies
\begin{align*}
\left\|
\mathbb{E}\!\left[\hat A_\diamond^{(t)}(s,a) - A_\diamond^{(t)}(s,a)\right]
\right\|
\le
\underbrace{
L\sqrt{\frac{2 \log M}{M}} + \frac{2G(H)}{M^2} + 2\epsilon_V(H)
}_{\triangleq\, \epsilon_A(M,H)}.
\end{align*}
\end{Lemma}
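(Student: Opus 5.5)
We use the same telescoping decomposition as in Lemma~\ref{Lemma:bias hc}, now applied to the per-sample quantity $\sigma_\diamond^{-1}(p^{(t)}_{\diamond,n}(s,a))$. Since $\hat A_\diamond^{(t)}(s,a)$ is an average of $N$ such terms with identical expectations, it is enough to bound $\mathbb{E}[\sigma_\diamond^{-1}(p^{(t)}_{\diamond,n}(s,a)) - A^{(t)}_\diamond(s,a)]$ for a single $n$. Recall $A^{(t)}_\diamond(s,a)=Q^{\boldsymbol{\theta}^{(t)}}_\diamond(s,a)-V^{\boldsymbol{\theta}^{(t)}}_\diamond(s)$. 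Add and subtract the trajectory-return difference $R^H_\diamond(\tau_n(s,a))-R^H_\diamond(\tau_n(s))$ of the pair compared by the evaluators:
\begin{align*}
\mathbb{E}\big[\sigma_\diamond^{-1}(p^{(t)}_{\diamond,n}(s,a)) - A^{(t)}_\diamond(s,a)\big]
&= \mathbb{E}\big[\sigma_\diamond^{-1}(p^{(t)}_{\diamond,n}(s,a)) - \big(R^H_\diamond(\tau_n(s,a))-R^H_\diamond(\tau_n(s))\big)\big]\\
&\quad + \mathbb{E}\big[R^H_\diamond(\tau_n(s,a)) - Q^{\boldsymbol{\theta}^{(t)}}_\diamond(s,a)\big] - \mathbb{E}\big[R^H_\diamond(\tau_n(s)) - V^{\boldsymbol{\theta}^{(t)}}_\diamond(s)\big].
\end{align*}

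The first term is the human-feedback error. We move the absolute value inside the expectation (Jensen) and apply the first bound of Lemma~\ref{Lemma:human_feedback_bias_var}, taking $\tau_{n,1}=\tau_n(s,a)$ and $\tau_{n,0}=\tau_n(s)$. This gives $L\sqrt{2\log M/M}+2G(H)/M^2$. The second and third terms are truncation biases. Each is bounded by $\epsilon_V(H)$ by the two inequalities of Lemma~\ref{Lemma:RVRQ}, so together they contribute $2\epsilon_V(H)$. The triangle inequality then yields $\epsilon_A(M,H)$. Averaging over $n$ and using linearity of expectation carries the bound over to $\hat A^{(t)}_\diamond(s,a)$ unchanged.

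The step that needs the most care is the appeal to Lemma~\ref{Lemma:human_feedback_bias_var}. That lemma is stated for generic trajectory pairs, while here the pair is $(\tau_n(s,a),\tau_n(s))$ from Algorithm~\ref{alg:NPGPD-HF}. First, we must check that the clipping range $[\sigma_\diamond(-G(H)),\sigma_\diamond(G(H))]$ contains the true preference probability; this holds because $|R^H_\diamond(\tau_n(s,a))-R^H_\diamond(\tau_n(s))|\le G(H)$. Second, we must check that the $M$ evaluator bits are conditionally independent given the pair. Both hold by construction, so the lemma applies conditionally on the trajectories. Taking the outer expectation over the trajectories, under $\pi_{\boldsymbol{\theta}^{(t)}}$ with $\mathcal{F}_t$ fixed, then completes the argument.
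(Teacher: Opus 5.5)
Your proposal is correct and follows essentially the same route as the paper's proof. Both add and subtract the return difference $R^H_\diamond(\tau_n(s,a))-R^H_\diamond(\tau_n(s))$, bound the human-feedback term by Lemma~\ref{Lemma:human_feedback_bias_var} and the two truncation terms by Lemma~\ref{Lemma:RVRQ}, and combine them with the triangle inequality; your two extra checks (the clipping interval contains the true preference probability because the return difference lies in $[-G(H),G(H)]$, and the evaluator bits are conditionally independent given the pair) make explicit what the paper leaves implicit when it invokes that lemma.
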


\begin{proof}
\begin{align*}
&\big\|
\mathbb{E}\!\left[\hat A_\diamond^{(t)}(s,a) - A_\diamond^{(t)}(s,a)\right]
\big\|\\
&\overset{(a)}{\le}
\frac{1}{N}\sum_{n=1}^{N}
\Big\|
\mathbb{E}\!\left[
\sigma_\diamond^{-1}(p^{(t)}_{\diamond,n}(s,a))
- \bigl(R_\diamond^H(\tau_{n}(s,a)) - R_\diamond^H(\tau_{n}(s))\bigr) \right.\\
&\quad\left.
+ \bigl(R_\diamond^H(\tau_{n}(s,a)) - R_\diamond^H(\tau_{n}(s))\bigr)
- \bigl(Q_\diamond^{\boldsymbol{\theta}^{(t)}}(s,a) - V_\diamond^{\boldsymbol{\theta}^{(t)}}(s)\bigr)
\right]
\Big\| \\[1ex]
&\overset{(b)}{\le}
\left\|
\mathbb{E}\!\left[
\sigma_\diamond^{-1}(p^{(t)}_{\diamond,n}(s,a))
- \bigl(R_\diamond^H(\tau_{n}(s,a)) - R_\diamond^H(\tau_{n}(s))\bigr)
\right]
\right\| \\[0.5ex]
&\quad
+ \left\|
\mathbb{E}\!\left[
R_\diamond^H(\tau_{n}(s,a))
- Q_\diamond^{\boldsymbol{\theta}^{(t)}}(s,a)
\right]
\right\|
+ \left\|
\mathbb{E}\!\left[
R_\diamond^H(\tau_{n}(s))
- V_\diamond^{\boldsymbol{\theta}^{(t)}}(s)
\right]
\right\| \\[1ex]
&\overset{(c)}{\le}
L\sqrt{\frac{2 \log M}{M}}
+ \frac{2G(H)}{M^2}
+ 2\epsilon_V(H).
\end{align*}

In (a), the triangle inequality is used.  
In (b), the triangle inequality is combined with the independence of sampled trajectories.  
Step (c) follows from Lemma~\ref{Lemma:human_feedback_bias_var}, Lemma~\ref{Lemma:RVRQ}, and the definitions of $Q_\diamond^{\boldsymbol{\theta}^{(t)}}(s,a)$ and $V_\diamond^{\boldsymbol{\theta}^{(t)}}(s)$.
\end{proof}
Taking into account the bias of the advantage function, we reformulate Lemma 6 of \citep{ding2020natural} as follows.
\begin{Lemma}
\label{Lemma:nonmonotone}
Suppose \[
\hat Z^{(t)}(s) := \sum_{a \in A} \pi^{(t)}(a \mid s) 
\exp\!\left(\tfrac{\eta_{1}}{1-\gamma} \hat A_{L}^{(t)}(s,a)\right),
\]
 the iterates $\pi_{\boldsymbol{\theta}^{(t)}}$ generated by line 22 of Algorithm \ref{alg:NPGPD-HF} satisfy
\begin{equation*}
\begin{aligned}
V_{r}^{\boldsymbol{\theta}^{(t+1)}}(\mu) - V_{r}^{\boldsymbol{\theta}^{(t)}}(\mu) + \lambda^{(t)} \left( V_{g}^{\boldsymbol{\theta}^{(t+1)}}(\mu) - V_{g}^{\boldsymbol{\theta}^{(t)}}(\mu) \right)+\frac{1+\bar{\sigma}_\lambda}{1-\gamma}\epsilon_A(M,H)\geq\frac{1-\gamma}{\eta_1} \; \mathbb{E}_{s \sim \mu} \log \hat Z^{(t)}(s).
\end{aligned}
\end{equation*}
\end{Lemma}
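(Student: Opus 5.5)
We adapt the proof of Lemma 6 in \citet{ding2020natural}, carrying the advantage-estimation error as an additive perturbation. Throughout, $\mu$ is an arbitrary initial state distribution; we will later take $\mu=\rho$ or $\mu=d_\rho^{\pi^\star}$. We apply the performance difference lemma (Lemma~\ref{Lemma:performance diff}) to the pair $(\pi^{(t+1)},\pi^{(t)})$ for each of $r$ and $g$, combine with weight $\lambda^{(t)}$, and average over $s_0\sim\mu$. This gives
\[
V_{L}^{(t+1)}(\mu)-V_{L}^{(t)}(\mu)=\frac{1}{1-\gamma}\mathbb{E}_{s\sim d_\mu^{\pi^{(t+1)}}}\mathbb{E}_{a\sim\pi^{(t+1)}(\cdot|s)}\bigl[A_L^{(t)}(s,a)\bigr],
\]
where $V_L^{(t)}:=V_r^{(t)}+\lambda^{(t)}V_g^{(t)}$. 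The update in line 22 of Algorithm~\ref{alg:NPGPD-HF} is the softmax multiplicative update with the \emph{estimated} advantage, so we can invert it:
\[
\hat A_L^{(t)}(s,a)=\frac{1-\gamma}{\eta_1}\log\frac{\pi^{(t+1)}(a|s)\hat Z^{(t)}(s)}{\pi^{(t)}(a|s)}.
\]

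Next we split $A_L^{(t)}=\hat A_L^{(t)}+(A_L^{(t)}-\hat A_L^{(t)})$. For the $\hat A_L^{(t)}$ part, substituting the logarithmic expression yields
\[
\frac{1}{\eta_1}\mathbb{E}_{s\sim d_\mu^{(t+1)}}\Bigl[\mathrm{KL}\bigl(\pi^{(t+1)}(\cdot|s)\,\|\,\pi^{(t)}(\cdot|s)\bigr)+\log\hat Z^{(t)}(s)\Bigr]\ \ge\ \frac{1}{\eta_1}\mathbb{E}_{s\sim d_\mu^{(t+1)}}\log\hat Z^{(t)}(s),
\]
because the KL term is nonnegative. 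To replace $d_\mu^{(t+1)}$ by $\mu$, we need $\log\hat Z^{(t)}(s)\ge 0$ for every $s$. This follows as in \citet{ding2020natural}: by Jensen's inequality, $\log\hat Z^{(t)}(s)\ge\frac{\eta_1}{1-\gamma}\sum_a\pi^{(t)}(a|s)\hat A_L^{(t)}(s,a)$. We then use $d_\mu^{(t+1)}\ge(1-\gamma)\mu$ componentwise, which gives the factor $(1-\gamma)/\eta_1$ in the statement.

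The error part is $\frac{1}{1-\gamma}\mathbb{E}[A_L^{(t)}-\hat A_L^{(t)}]$, and we bound it in absolute value by $\frac{1+\bar\sigma_\lambda}{1-\gamma}\epsilon_A(M,H)$. This uses Lemma~\ref{Lemma:bias advantage} for both $r$ and $g$, together with $\lambda^{(t)}\le\bar\sigma_\lambda$. Moving this term to the left-hand side gives the stated inequality.

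\textbf{Main obstacle.} The delicate step is $\log\hat Z^{(t)}(s)\ge 0$. In the exact case it holds because $\sum_a\pi^{(t)}(a|s)A_L^{(t)}(s,a)=0$, but the estimated advantage does not average to zero under $\pi^{(t)}$. Our first attempt is to write $\sum_a\pi^{(t)}\hat A_L^{(t)}=\sum_a\pi^{(t)}(\hat A_L^{(t)}-A_L^{(t)})$. Its expectation is at least $-(1+\bar\sigma_\lambda)\epsilon_A$ in magnitude, so $\log\hat Z^{(t)}(s)\ge -\frac{\eta_1}{1-\gamma}(1+\bar\sigma_\lambda)\epsilon_A$. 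That extra slack is absorbed by the same $\epsilon_A$ term, possibly at the cost of a constant factor.

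A second difficulty is that the bias lemma controls $\mathbb{E}[\hat A-A]$ only in expectation, while the distribution $d_\mu^{(t+1)}$ depends on the same noise through $\pi^{(t+1)}$. We therefore interpret the inequality in conditional expectation given $\mathcal{F}_t$. Where the randomness cannot be decoupled, we bound the error uniformly over $(s,a)$, accepting a looser constant.
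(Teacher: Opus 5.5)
Your route is the paper's: the performance difference lemma (with $\lambda^{(t)}$ frozen on both sides---your shorthand $V_L^{(t+1)}$ would otherwise pick up $\lambda^{(t+1)}$), inverting the softmax step with $\hat A^{(t)}_L$, dropping the KL term, a Jensen lower bound on $\log\hat Z^{(t)}$ shifted by $c:=\frac{\eta_1(1+\bar\sigma_\lambda)}{1-\gamma}\epsilon_A(M,H)$ before invoking $d_\mu^{(t+1)}\ge(1-\gamma)\mu$, and Lemma~\ref{Lemma:bias advantage} with $\lambda^{(t)}\le\bar\sigma_\lambda$ for the error term. Your hedge on the constant is warranted, and the cost can be computed exactly. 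The shift gives $\frac{1}{\eta_1}\mathbb{E}_{s\sim d_\mu^{(t+1)}}\log\hat Z^{(t)}(s)\ge\frac{1-\gamma}{\eta_1}\mathbb{E}_{s\sim\mu}\log\hat Z^{(t)}(s)-\frac{\gamma c}{\eta_1}$. So the argument proves the inequality with $\frac{(1+\gamma)(1+\bar\sigma_\lambda)}{1-\gamma}\epsilon_A(M,H)$ instead of $\frac{1+\bar\sigma_\lambda}{1-\gamma}\epsilon_A(M,H)$. The paper's own steps (g)--(h) leave exactly the same extra $\frac{\gamma(1+\bar\sigma_\lambda)}{1-\gamma}\epsilon_A(M,H)$, so neither argument delivers the stated constant; this only changes constants downstream.

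The genuine gap is your second difficulty, and the fallback you propose does not close it. Invoking $d_\mu^{(t+1)}\ge(1-\gamma)\mu$ needs $\log\hat Z^{(t)}(s)+c\ge 0$ for every $s$ on the sample path. The error term $\frac{1}{1-\gamma}\sum_{s,a}d_\mu^{(t+1)}(s)\pi^{(t+1)}(a\mid s)\bigl(A^{(t)}_L-\hat A^{(t)}_L\bigr)(s,a)$ has weights that are functions of $\hat A^{(t)}$. Lemma~\ref{Lemma:bias advantage} controls only $\mathbb{E}[\hat A^{(t)}_\diamond-A^{(t)}_\diamond]$. Conditioning on $\mathcal{F}_t$ decouples nothing, because the exponential tilt moves mass precisely to actions whose advantage was overestimated. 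Nor is a uniform bound of order $\epsilon_A(M,H)$ available: clipping only gives $|\hat A^{(t)}_\diamond-A^{(t)}_\diamond|\le G(H)+\frac{1}{1-\gamma}$. In fact no bias-only argument can work. Suppose $A^{(t)}_r\equiv A^{(t)}_g\equiv 0$ but returns are random. Then the left side is just $\frac{1+\bar\sigma_\lambda}{1-\gamma}\epsilon_A(M,H)$. Meanwhile, convexity of log-sum-exp gives $\mathbb{E}\log\hat Z^{(t)}(s)\ge\Delta-c$, where the Jensen gap $\Delta>0$ is set by the variance of the $N$-sample averages and does not vanish as $M,H\to\infty$ for fixed $N$. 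So the inequality fails, even in expectation, once $\epsilon_A(M,H)$ is small. The paper's proof has the same hole: it uses Lemma~\ref{Lemma:bias advantage} as a pointwise bound in its Jensen display and in steps (d) and (f). Your proposal is therefore as complete as the paper's proof, but no more. Making either rigorous requires either assuming $|\hat A^{(t)}_\diamond-A^{(t)}_\diamond|\le\epsilon_A(M,H)$ surely, or adding an $N$-dependent term that controls $\mathbb{E}\max_{s,a}|\hat A^{(t)}_\diamond-A^{(t)}_\diamond|$.
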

\begin{proof}
We begin by prove the following equality.
\begin{align}
\label{boundz}
\log \hat Z^{(t)}(s) 
&= \log \Bigg( \sum_{a \in \mathcal{A}} \pi^{(t)}(a \mid s) 
   \exp \Bigg( \frac{\eta_1}{1-\gamma} \Big(\hat A_r^{(t)}(s,a) + \lambda^{(t)} \hat A_g^{(t)}(s,a) \Big) \Bigg) \Bigg) \notag\\
&\overset{(a)}{\geq} \sum_{a \in \mathcal{A}} \pi^{(t)}(a \mid s) 
   \log \Bigg( \exp \Bigg( \frac{\eta_1}{1-\gamma} \Big( \hat A_r^{(t)}(s,a) + \lambda^{(t)} \hat A_g^{(t)}(s,a) \Big) \Bigg) \Bigg) \notag\\
&= \frac{\eta_1}{1-\gamma} \sum_{a \in \mathcal{A}} \pi^{(t)}(a \mid s) 
   \left( A_r^{(t)}(s,a) + \lambda^{(t)} A_g^{(t)}(s,a) \right)\notag\\
   &+ \frac{\eta_1}{1-\gamma} \sum_{a \in \mathcal{A}} \pi^{(t)}(a \mid s) \left( \hat A_r^{(t)}(s,a)-A_r^{(t)}(s,a) + \lambda^{(t)}(\hat A_g^{(t)}(s,a) - A_g^{(t)}(s,a)) \right) \notag\\
&\overset{(b)}{=} \frac{\eta_1}{1-\gamma} \sum_{a \in \mathcal{A}} \pi^{(t)}(a \mid s) A_r^{(t)}(s,a) 
   + \frac{\eta_1}{1-\gamma} \lambda^{(t)} \sum_{a \in \mathcal{A}} \pi^{(t)}(a \mid s) A_g^{(t)}(s,a)\\
   &-\frac{\eta_1(1+\lambda^{(t)} )}{1-\gamma}\epsilon_A(M,H) \notag\\
&\overset{(c)}{=} -\frac{\eta_1(1+\bar{\sigma}_\lambda )}{1-\gamma}\epsilon_A(M,H),
\end{align}
where $(a)$ follows from Jensen's inequality, $(b)$ follows from Lemma~\ref{Lemma:bias advantage} and boundedness of $\lambda$ and $(c)$ is true due to the definition of the Advantage function.
We can then have 
\begin{align*}
&V_{r}^{\boldsymbol{\theta}^{(t+1)}}(\mu) - V_{r}^{\boldsymbol{\theta}^{(t)}}(\mu)\\ 
&\overset{(a)}{=} \frac{1}{1-\gamma} \; \mathbb{E}_{s \sim d_{\mu}^{(t+1)},\, a \sim \pi^{(t+1)}(\cdot \mid s)}
   \left[ A_{r}^{(t)}(s,a) \right] \\
&\overset{(b)}{=} \frac{1}{1-\gamma} \; \mathbb{E}_{s \sim d_{\mu}^{(t+1)}}
   \left[ \sum_{a \in A} \pi^{(t+1)}(a \mid s) A_{r}^{(t)}(s,a) \right] \\
&\overset{(c)}{=} \frac{1}{1-\gamma} \; \mathbb{E}_{s \sim d_{\mu}^{(t+1)}}
   \left[ \sum_{a \in A} \pi^{(t+1)}(a \mid s) \hat A_{r}^{(t)}(s,a) \right]\\
&\quad+\frac{1}{1-\gamma} \; \mathbb{E}_{s \sim d_{\mu}^{(t+1)}}
   \left[ \sum_{a \in A} \pi^{(t+1)}(a \mid s) \left( A_{r}^{(t)}(s,a)-\hat A_{r}^{(t)}(s,a)\right) \right]\\
&\overset{(d)}{\geq} \frac{1}{\eta_1} \; \mathbb{E}_{s \sim d_{\mu}^{(t+1)}}
   \left[ \sum_{a \in A} \pi^{(t+1)}(a \mid s) 
   \log \Bigg( \frac{\pi^{(t+1)}(a \mid s)}{\pi^{(t)}(a \mid s)} \hat Z^{(t)}(s) \Bigg) \right] \\
\quad &- \frac{\lambda^{(t)}}{1-\gamma} \; \mathbb{E}_{s \sim d_{\mu}^{(t+1)}}
   \left[ \sum_{a \in A} \pi^{(t+1)}(a \mid s) \hat A_{g}^{(t)}(s,a) \right] -\frac{1}{1-\gamma}\epsilon_A(M,H)\\
&\overset{(e)}{=}\frac{1}{\eta_1} \; \mathbb{E}_{s \sim d_{\mu}^{(t+1)}}
   \left[ D_{\mathrm{KL}}\!\left( \pi^{(t+1)}(\cdot \mid s) \,\|\, \pi^{(t)}(\cdot \mid s) \right) \right]  + \frac{1}{\eta_1} \; \mathbb{E}_{s \sim d_{\mu}^{(t+1)}} \log \hat Z^{(t)}(s) \\
&\quad - \frac{\lambda^{(t)}}{1-\gamma} \; \mathbb{E}_{s \sim d_{\mu}^{(t+1)}}
   \left[ \sum_{a \in A} \pi^{(t+1)}(a \mid s) A_{g}^{(t)}(s,a) \right]\\
&\quad + \frac{\lambda^{(t)}}{1-\gamma} \; \mathbb{E}_{s \sim d_{\mu}^{(t+1)}}
   \left[ \sum_{a \in A} \pi^{(t+1)}(a \mid s) \left(A_{g}^{(t)}(s,a) -\hat A_{g}^{(t)}(s,a)\right)\right]-\frac{1}{1-\gamma}\epsilon_A(M,H) \\
&\overset{(f)}{\geq} \frac{1}{\eta_1} \; \mathbb{E}_{s \sim d_{\mu}^{(t+1)}} \log \hat Z^{(t)}(s)- \frac{\lambda^{(t)}}{1-\gamma} \; \mathbb{E}_{s \sim d_{\mu}^{(t+1)}}
   \left[ \sum_{a \in A} \pi^{(t+1)}(a \mid s) A_{g}^{(t)}(s,a) \right] \\
   &\quad-\frac{1+\bar{\sigma}_{\lambda}}{1-\gamma}\epsilon_A(M,H)\\
&\overset{(g)}{=} \frac{1}{\eta_1} \; \mathbb{E}_{s \sim d_{\mu}^{(t+1)}} \left(\log \hat Z^{(t)}(s)+\frac{\eta_1(1+\bar{\sigma}_\lambda )}{1-\gamma}\epsilon_A(M,H)\right)
  \\
&\quad -\frac{1}{\eta_1} \; \mathbb{E}_{s \sim d_{\mu}^{(t+1)}}\left(\frac{\eta_1(1+\bar{\sigma}_\lambda )}{1-\gamma}\epsilon_A(M,H)\right)\\
&\quad- \lambda^{(t)} \left( V_{g}^{(t+1)}(\mu) - V_{g}^{(t)}(\mu) \right)-\frac{1+\bar{\sigma}_{\lambda}}{1-\gamma}\epsilon_A(M,H)\\
&\overset{(h)}{\geq} \frac{1-\gamma}{\eta_1} \; \mathbb{E}_{s \sim \mu} \left(\log \hat Z^{(t)}(s)+\frac{\eta_1(1+\bar{\sigma}_\lambda )}{1-\gamma}\epsilon_A(M,H)\right)\\
&\quad-\frac{1}{\eta_1} \; \mathbb{E}_{s \sim d_{\mu}^{(t+1)}}\left(\frac{\eta_1(1+\bar{\sigma}_\lambda )}{1-\gamma}\epsilon_A(M,H)\right)\\
 &\quad  - \lambda^{(t)} \left( V_{g}^{(t+1)}(\mu) - V_{g}^{(t)}(\mu) \right)-\frac{1+\bar{\sigma}_{\lambda}}{1-\gamma}\epsilon_A(M,H).
\end{align*}
In (a), we use Lemma~\ref{Lemma:performance diff}.
In (b), the result follows directly from the definition.
In (d), Lemma~\ref{Lemma:derive advantage} and Lemma~\ref{Lemma:bias advantage} are used.
Step (e) follows form the definition of the KL divergence.
In (f), the non-negativity of the K–L divergence, Lemma~\ref{Lemma:bias advantage}, and the boundedness of $\lambda$ are used.
In (g), Lemma~\ref{Lemma:performance diff} and Equation~(\ref{boundz}) are employed.
Finally, in (h), we use the inequality $d^{(t+1)}_{\mu} \ge (1 - \gamma)\mu$, which completes the proof.
\end{proof}

We then reformulate Lemma~7 of \citet{ding2020natural} on the average difference from the optimal policy by incorporating the bias and variance introduced by finite horizon approximation and human feedback.

\begin{Lemma}
\label{Lemma:bounded_avg_perf}
Let Assumptions hold. 
Fix $T > 0$, $\rho \in \Delta_{\mathcal{S}}$, $\theta^{(0)} = 0$, and $\lambda^{(0)} = 0$. 
Then the iterates $\pi_{\boldsymbol{\theta}^{(t)}}$ and $\lambda^{(t)}$ generated by Algorithm \ref{alg:NPGPD-HF} satisfy
\begin{align*}
&\frac{1}{T} \sum_{t=0}^{T-1} \Big( V_{r}^{\pi^\star}(\rho) - V_{r}^{(t)}(\rho) \Big)
+ \frac{1}{T} \sum_{t=0}^{T-1} \lambda^{(t)} \Big( V_{g}^{\pi^\star}(\rho) - V_{g}^{(t)}(\rho) \Big)
\\
&\leq\frac{\log |A|}{\eta_1 T}
+ \frac{1}{(1-\gamma)^2 T}
+ \frac{2 \eta_2}{(1-\gamma)^3}+2\frac{1+\bar{\sigma}_\lambda}{1-\gamma}\epsilon_A(M,H)
\end{align*}
\end{Lemma}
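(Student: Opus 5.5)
The plan is to follow the mirror-descent style argument behind Lemma~7 of \citet{ding2020natural}, adding the advantage-estimation bias wherever the exact advantage appears. Throughout, $\epsilon_A(M,H)$ is controlled through Lemma~\ref{Lemma:bias advantage}. We will also use the closed-form softmax update of Lemma~\ref{Lemma:derive advantage}, now driven by $\hat A_L^{(t)}$:
\[
\pi^{(t+1)}(a\mid s)=\pi^{(t)}(a\mid s)\exp\!\big(\tfrac{\eta_1}{1-\gamma}\hat A_L^{(t)}(s,a)\big)/\hat Z^{(t)}(s).
\]

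\textbf{Step 1: per-iteration gap via performance difference.} Write $d^\star:=d^{\pi^\star}_\rho$ and apply Lemma~\ref{Lemma:performance diff} with $\pi=\pi^\star$, $\pi'=\pi^{(t)}$ to both $r$ and $g$:
\[
V_r^{\star}(\rho)-V_r^{(t)}(\rho)+\lambda^{(t)}\big(V_g^{\star}(\rho)-V_g^{(t)}(\rho)\big)=\tfrac{1}{1-\gamma}\mathbb{E}_{s\sim d^\star,a\sim\pi^\star}\big[A_L^{(t)}(s,a)\big].
\]
Replace $A_L^{(t)}$ by $\hat A_L^{(t)}$; since $\lambda^{(t)}\le\bar\sigma_\lambda$, this costs at most $\frac{1+\bar\sigma_\lambda}{1-\gamma}\epsilon_A(M,H)$. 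Then use the update rule to substitute
\[
\tfrac{1}{1-\gamma}\hat A_L^{(t)}(s,a)=\tfrac{1}{\eta_1}\Big(\log\tfrac{\pi^{(t+1)}(a\mid s)}{\pi^{(t)}(a\mid s)}+\log\hat Z^{(t)}(s)\Big).
\]
Taking the expectation over $a\sim\pi^\star(\cdot\mid s)$ turns the first term into the KL telescoping difference
\[
\tfrac{1}{\eta_1}\mathbb{E}_{s\sim d^\star}\Big[\mathrm{KL}\big(\pi^\star(\cdot\mid s)\,\|\,\pi^{(t)}(\cdot\mid s)\big)-\mathrm{KL}\big(\pi^\star(\cdot\mid s)\,\|\,\pi^{(t+1)}(\cdot\mid s)\big)\Big],
\]
and leaves the remainder $\frac{1}{\eta_1}\mathbb{E}_{s\sim d^\star}\log\hat Z^{(t)}(s)$.

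\textbf{Step 2: bound the $\log\hat Z^{(t)}$ term.} Apply Lemma~\ref{Lemma:nonmonotone} with $\mu=d^\star$:
\[
\tfrac{1}{\eta_1}\mathbb{E}_{s\sim d^\star}\log\hat Z^{(t)}(s)\le\tfrac{1}{1-\gamma}\Big(V_L^{(t+1)}(d^\star)-V_L^{(t)}(d^\star)\Big)_{\lambda^{(t)}}+\tfrac{1+\bar\sigma_\lambda}{(1-\gamma)^2}\epsilon_A(M,H),
\]
where $(\cdot)_{\lambda^{(t)}}$ means the Lagrangian difference is taken with multiplier $\lambda^{(t)}$. The $\epsilon_A$ coefficient here is larger than the stated $2\frac{1+\bar\sigma_\lambda}{1-\gamma}$. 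I would try to match the stated constant by using Lemma~\ref{Lemma:nonmonotone} with the right normalization, or accept an extra factor $1/(1-\gamma)$ that is absorbed in the $\mathcal{O}(\cdot)$ of Theorem~\ref{thm1}.

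\textbf{Step 3: sum over $t$ and handle the drifting multiplier.} Sum over $t$ and rewrite
\[
\lambda^{(t)}\big(V_g^{(t+1)}-V_g^{(t)}\big)=\lambda^{(t+1)}V_g^{(t+1)}-\lambda^{(t)}V_g^{(t)}-(\lambda^{(t+1)}-\lambda^{(t)})V_g^{(t+1)}.
\]
The first two pieces telescope, and $V_r$, $\lambda V_g$ are bounded, which gives the $\frac{1}{(1-\gamma)^2T}$ term. The last piece is bounded using $|\lambda^{(t+1)}-\lambda^{(t)}|\le\eta_2|\hat h_c^{(t)}|$ and $V_g\le 1/(1-\gamma)$. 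The KL telescope is bounded by $\mathrm{KL}(\pi^\star\|\pi^{(0)})\le\log|A|$ because $\pi^{(0)}$ is uniform, which gives $\frac{\log|A|}{\eta_1T}$. Dividing by $T$ assembles the claimed bound.

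\textbf{Main obstacle.} The hardest step is the multiplier-drift term. In the exact setting $|\hat h_c|\le 1/(1-\gamma)$, which yields $\frac{2\eta_2}{(1-\gamma)^3}$. Here $\hat h_c^{(t)}$ is an inverse-link estimate and is only bounded in second moment (Lemma~\ref{Lemma:var hc}). The first thing I would try is a deterministic bound: clipping keeps $p_{c,n}^{(t)}\in[\sigma_c(-G(H)),\sigma_c(G(H))]$, so $|\hat h_c^{(t)}|\le G(H)\le 1/(1-\gamma)$ almost surely. That keeps the clean constant, and the statement then holds pathwise, with expectations taken afterward.
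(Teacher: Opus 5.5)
Your route is the paper's: performance difference against $\pi^\star$, the closed-form softmax step turning $\hat A_L^{(t)}$ into a KL telescope plus $\frac{1}{\eta_1}\mathbb{E}_{s\sim d^\star}\log\hat Z^{(t)}(s)$, Lemma~\ref{Lemma:nonmonotone} at $\mu=d^\star$, summation by parts for the moving multiplier, and $\mathrm{KL}(\pi^\star\|\pi^{(0)})\le\log|A|$.

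Your Step~2 worry is justified, and no ``right normalization'' removes it. Dividing Lemma~\ref{Lemma:nonmonotone} by $1-\gamma$ forces the coefficient $\frac{1+\bar{\sigma}_\lambda}{(1-\gamma)^2}$ on $\epsilon_A(M,H)$. The paper gets $2\frac{1+\bar{\sigma}_\lambda}{1-\gamma}\epsilon_A(M,H)$ only because, in step (a) of its telescoping, it divides the value differences by $1-\gamma$ but adds that lemma's error term undivided. The loss is intrinsic to this route, since it comes from $d^{(t+1)}_\mu\ge(1-\gamma)\mu$. If you redo step (h) in the proof of Lemma~\ref{Lemma:nonmonotone}, you in fact get $\frac{(1+\gamma)(1+\bar{\sigma}_\lambda)}{1-\gamma}$ there, so there is no slack to recover. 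Combined with your Step~1 cost, the honest total is $\frac{2(1+\bar{\sigma}_\lambda)}{(1-\gamma)^2}\epsilon_A(M,H)$, i.e.\ the stated coefficient divided by $1-\gamma$. Even taking Lemma~\ref{Lemma:nonmonotone} at face value you get $\frac{(2-\gamma)(1+\bar{\sigma}_\lambda)}{(1-\gamma)^2}\epsilon_A(M,H)$. So commit to your second option and state the lemma that way; it only changes $\gamma$-dependent constants in Theorem~\ref{thm1}.

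Step~3 needs a bookkeeping fix. The telescoped boundary term is $\frac{1}{(1-\gamma)T}\lambda^{(T)}V_g^{(T)}(d^\star)$. Bounding $\lambda^{(T)}\le\bar{\sigma}_\lambda$ gives $\frac{\bar{\sigma}_\lambda}{(1-\gamma)^2T}$, which is not among the stated terms. Meanwhile the drift sum alone gives only $\frac{\eta_2}{(1-\gamma)^3}$, not $\frac{2\eta_2}{(1-\gamma)^3}$. To land on the stated constants, let only $V_r^{(T)}(d^\star)\le\frac{1}{1-\gamma}$ produce the $\frac{1}{(1-\gamma)^2T}$ term. Then use $\lambda^{(T)}\le\eta_2 T\,G(H)\le\frac{\eta_2T}{1-\gamma}$, which follows from $\lambda^{(0)}=0$ and the per-step increment bound. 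With that, the boundary term supplies the second $\frac{\eta_2}{(1-\gamma)^3}$.

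Your clipping observation, $|\sigma_c^{-1}(p^{(t)}_{c,n})|\le G(H)$ because $\sigma_c^{-1}$ is increasing, justifies both increment bounds deterministically. It is more careful than the paper, which cites $V_g\le\frac{1}{1-\gamma}$ even though the update is driven by $\hat h_c^{(t)}$.

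One caveat on ``pathwise'': Lemma~\ref{Lemma:bias advantage} controls only $|\mathbb{E}[\hat A^{(t)}_\diamond-A^{(t)}_\diamond]|$. Your Step~1 substitution therefore holds only after conditioning on $\mathcal{F}_t$, where $d^\star$, $\pi^\star$ and $\lambda^{(t)}$ act as fixed weights. The inequality should be read with expectations on the left-hand side, which is how Theorem~\ref{thm1} uses it.
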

\begin{proof}
We use $d^\star$ to denote $d_\rho^{\pi^\star}$, we can then have
\begin{align*}
&V_{r}^{\pi^\star}(\rho) - V_{r}^{(t)}(\rho)\\ 
&\overset{(a)}{=} \frac{1}{1-\gamma} \; \mathbb{E}_{s \sim d^\star} 
   \left[ \sum_{a \in A} \pi^\star(a \mid s) A_{r}^{(t)}(s,a) \right] \\
&\overset{(b)}{=} \frac{1}{\eta_1} \; \mathbb{E}_{s \sim d^\star} 
   \left[ \sum_{a \in A} \pi^\star(a \mid s) 
   \log \left( \frac{\pi^{(t+1)}(a \mid s)}{\pi^{(t)}(a \mid s)} \hat Z^{(t)}(s) \right) \right]\\
&\quad +\frac{1}{1-\gamma} \; \mathbb{E}_{s \sim d_{}^{\star}}
   \left[ \sum_{a \in A} \pi^{(t+1)}(a \mid s) \left( A_{r}^{(t)}(s,a)-\hat A_{r}^{(t)}(s,a)\right) \right] \\
&\quad - \frac{\lambda^{(t)}}{1-\gamma} \; \mathbb{E}_{s \sim d^\star} 
   \left[ \sum_{a \in A} \pi^\star(a \mid s) A_{g}^{(t)}(s,a) \right] \\
&\quad+\frac{\lambda^{(t)}}{1-\gamma} \; \mathbb{E}_{s \sim d_{\mu}^{(t+1)}}
   \left[ \sum_{a \in A} \pi^{(t+1)}(a \mid s) \left( A_{g}^{(t)}(s,a)-\hat A_{g}^{(t)}(s,a)\right) \right]\\
&\overset{(c)}{\leq} \frac{1}{\eta_1} \; \mathbb{E}_{s \sim d^\star} 
   \left[ D_{\mathrm{KL}}\!\left( \pi^\star(\cdot \mid s) \,\|\, \pi^{(t)}(\cdot \mid s) \right)
        - D_{\mathrm{KL}}\!\left( \pi^\star(\cdot \mid s) \,\|\, \pi^{(t+1)}(\cdot \mid s) \right) \right] \\
&\quad + \frac{1}{\eta_1} \; \mathbb{E}_{s \sim d^\star} \log \hat Z^{(t)}(s)
   - \frac{\lambda^{(t)}}{1-\gamma} \; \mathbb{E}_{s \sim d^\star} 
   \left[ \sum_{a \in A} \pi^\star(a \mid s) A_{g}^{(t)}(s,a) \right]+\frac{1+\lambda^{(t)}}{1-\gamma}\epsilon_A(M,H) \\
&\overset{(d)}{=} \frac{1}{\eta_1} \; \mathbb{E}_{s \sim d^\star} 
   \left[ D_{\mathrm{KL}}\!\left( \pi^\star(\cdot \mid s) \,\|\, \pi^{(t)}(\cdot \mid s) \right)
        - D_{\mathrm{KL}}\!\left( \pi^\star(\cdot \mid s) \,\|\, \pi^{(t+1)}(\cdot \mid s) \right) \right] \\
&\quad + \frac{1}{\eta_1} \; \mathbb{E}_{s \sim d^\star} \log \hat Z^{(t)}(s)
   - \lambda^{(t)} \left( V_{g}^{\pi^\star}(\rho) - V_{g}^{(t)}(\rho) \right)+\frac{1+\lambda^{(t)}}{1-\gamma}\epsilon_A(M,H).
\end{align*}
In $(a)$, we use Lemma~\ref{Lemma:performance diff}.
For $(b)$, the result follows immediately from the definition.
In $(c)$, the inequality relies on the definition of the KL divergence together with Lemma~\ref{Lemma:bias advantage}.
Finally, step (d) again follows from Lemma~\ref{Lemma:performance diff}.

Taking the telescoping sum on both sides and dividing by $T$, we obtain
\begin{align*}
&\frac{1}{T} \sum_{t=0}^{T-1} \Big( V_{r}^{\pi^\star}(\rho) - V_{r}^{(t)}(\rho) \Big)\\
&= \frac{1}{\eta_1 T} \sum_{t=0}^{T-1} \mathbb{E}_{s \sim d^\star} 
   \Big[ D_{\mathrm{KL}} \!\left( \pi^\star(a \mid s) \,\|\, \pi^{(t)}(a \mid s) \right) 
        - D_{\mathrm{KL}} \!\left( \pi^\star(a \mid s) \,\|\, \pi^{(t+1)}(a \mid s) \right) \Big] \\
&\quad + \frac{1}{\eta_1 T} \sum_{t=0}^{T-1} \mathbb{E}_{s \sim d^\star} \log \hat Z^{(t)}(s) 
   - \frac{1}{T} \sum_{t=0}^{T-1} \lambda^{(t)} \Big( V_{g}^{\pi^\star}(\rho) - V_{g}^{(t)}(\rho) \Big)+\frac{1+\lambda^{(t)}}{1-\gamma}\epsilon_A(M,H) \\
&\overset{(a)}{\leq} \frac{1}{\eta_1 T} \sum_{t=0}^{T-1} \mathbb{E}_{s \sim d^\star} 
   \Big[ D_{\mathrm{KL}} \!\left( \pi^\star(a \mid s) \,\|\, \pi^{(t)}(a \mid s) \right) 
        - D_{\mathrm{KL}} \!\left( \pi^\star(a \mid s) \,\|\, \pi^{(t+1)}(a \mid s) \right) \Big] \\
&\quad + \frac{1}{(1-\gamma)T} \sum_{t=0}^{T-1} \Big( V_{r}^{(t+1)}(d^\star) - V_{r}^{(t)}(d^\star) \Big) + \frac{1}{(1-\gamma)T} \sum_{t=0}^{T-1} \lambda^{(t)} \Big( V_{g}^{(t+1)}(d^\star) - V_{g}^{(t)}(d^\star) \Big)\\
&\quad  
   - \frac{1}{T} \sum_{t=0}^{T-1} \lambda^{(t)} \Big( V_{g}^{\pi^\star}(\rho) - V_{g}^{(t)}(\rho) \Big) +2\frac{1+\lambda^{(t)}}{1-\gamma}\epsilon_A(M,H)\\
&\overset{(b)}{\leq} \frac{1}{\eta_1 T} \mathbb{E}_{s \sim d^\star} D_{\mathrm{KL}} \!\left( \pi^\star(a \mid s) \,\|\, \pi^{(0)}(a \mid s) \right) 
   + \frac{1}{(1-\gamma)T} V_{r}^{(T)}(d^\star)  \\
&\quad +  \frac{1}{T} \lambda^{(T)} V_g^{(T)}(\mu)
+ \frac{1}{T} \sum_{t=0}^{T-1} 
\left| \lambda^{(t)} - \lambda^{(t+1)} \right| V_g^{(t+1)}(\mu)\\
&\quad - \frac{1}{T} \sum_{t=0}^{T-1} \lambda^{(t)} \Big( V_{g}^{\pi^\star}(\rho) - V_{g}^{(t)}(\rho) \Big)+2\frac{1+\bar{\sigma}_\lambda}{1-\gamma}\epsilon_A(M,H)\\
&\overset{(c)}{\leq} \frac{1}{\eta_1 T} \mathbb{E}_{s \sim d^\star} D_{\mathrm{KL}} \!\left( \pi^\star(a \mid s) \,\|\, \pi^{(0)}(a \mid s) \right) 
   + \frac{1}{(1-\gamma)T} V_{r}^{(T)}(d^\star) + \frac{2 \eta_2}{(1-\gamma)^3} \\
&\quad - \frac{1}{T} \sum_{t=0}^{T-1} \lambda^{(t)} \Big( V_{g}^{\pi^\star}(\rho) - V_{g}^{(t)}(\rho) \Big)+2\frac{1+\bar{\sigma}_\lambda}{1-\gamma}\epsilon_A(M,H).
\end{align*}
In step~$(a)$, Lemma~\ref{Lemma:nonmonotone} is applied with $\mu = d^{\star}$. 
For step~$(b)$, non-positive terms are removed. 
Step~$(c)$ relies on the relations
\[
\bigl|\lambda^{(T)}\bigr| \le \frac{\eta_2 T}{1 - \gamma}
\quad \text{and} \quad
\bigl|\lambda^{(t)} - \lambda^{(t+1)}\bigr|
\le \frac{\eta_2}{1 - \gamma},
\]
which follow from the dual update in line~23 of Algorithm~\ref{alg:NPGPD-HF} 
and the non-expansiveness of projection, together with the bound 
\(V_g^{(t)}(\mu) \le \tfrac{1}{1 - \gamma}\). 

Finally, using the facts that 
\( D_{\mathrm{KL}}(p \,\|\, q) \le \log |A| \) 
for \( p, q \in \Delta_A \) with \( q \) being the uniform distribution, 
\( V_r^{(T)}(d^\star) \le \tfrac{1}{1 - \gamma} \), 
and \( V_g^\star(\rho) \ge b \),
the proof is complete.
\end{proof}

\textbf{We now bound the optimality gap in Theorem \ref{thm1}.}
Directly add Lemma \ref{Lemma:lambda} and \ref{Lemma:bounded_avg_perf}, we can get
\begin{align*}
&\frac{1}{T} \sum_{t=0}^{T-1} \Big( V_{r}^{\pi^\star}(\rho) - V_{r}^{(t)}(\rho) \Big)
\\
&\leq\frac{\log |A|}{\eta_1 T}
+ \frac{1}{(1-\gamma)^2 T}
+ \frac{2 \eta_2}{(1-\gamma)^3}+2\frac{1+
\bar{\sigma}_\lambda}{1-\gamma}\epsilon_A+\bar{\sigma}_\lambda\epsilon_3(H,M)+\frac{\sigma^2_3(H,M,N)\eta_2}{2}\\
&\leq \frac{4}{(1-\gamma)^2 \sqrt{T}}+\frac{\sigma^2_3(H,M,N)(1-\gamma)}{2\sqrt{T}}+2\frac{1+\bar{\sigma}_\lambda}{1-\gamma}\epsilon_A(M,H)+\bar{\sigma}_\lambda\epsilon_3(H,M)+\frac{\sigma^2_3(H,M,N)\eta_2}{2}\\
&= \left(\frac{4}{(1-\gamma)^3 }+\frac{3L^2 \log M}{M} + \frac{6G(H)^2}{M^2}+\frac{3G(H)^2}{N}+ \frac{3(\gamma^H)^2+1.5}{(1-\gamma)^2}\right)\frac{1-\gamma}{\sqrt{T}}\\
&+\frac{\sigma^2_3(H,M,N)(1-\gamma)}{2\sqrt{T}}+\frac{2+3\bar{\sigma}_\lambda-\gamma\bar{\sigma}_\lambda}{1-\gamma}\left(L\sqrt{ \frac{2 \log M}{M} } + \frac{2G(H)}{M^2}\right)+\frac{4+5\bar{\sigma}_\lambda-\gamma\bar{\sigma}_\lambda}{1-\gamma}\frac{\gamma^H}{1-\gamma},
\end{align*}
which completes the proof.

\textbf{We then bound the constraint violation in Theorem \ref{thm1}.}
By line 23 of Algorithm \ref{alg:NPGPD-HF}, for any $\lambda \in \left[0, \frac{2}{(1-\gamma)\xi}\right]$, 
\begin{align*}
\left|\lambda^{(t+1)} - \lambda\right|^2 
&\le 
\left| \lambda^{(t)} - \eta_2 \left(\hat{h}_c^{(\rho)}\right) - \lambda \right|^2 \\
&= 
\left|\lambda^{(t)} - \lambda\right|^2 
- 2\eta_2 \left(\hat{h}_c^{(t)}(\rho)\right)\left(\lambda^{(t)} - \lambda\right) 
+ \eta_2^2 \left(\hat{h}_c^{(t)}(\rho)\right)^2, \\
\end{align*}
where the first inequality is due to the non-expansiveness of projection operator $\mathcal{P}_\Lambda$ and the last inequality is due to 
$\left(V_g^{\boldsymbol{\theta}^{(t)}}(\rho) - b\right)^2 \le \frac{1}{(1-\gamma)^2}$.
Summing it up from $t = 0$ to $t = T-1$ and dividing it by $T$ yield
\[
0 \le \frac{1}{T}\left|\lambda^{(T)} - \lambda\right|^2 
\le \frac{1}{T}\left|\lambda^{(0)} - \lambda\right|^2 
- \frac{2\eta_2}{T} \sum_{t=0}^{T-1} 
\left(\hat{h}_c^{(t)}(\rho) \right)\left(\lambda^{(t)} - \lambda\right) 
+ \eta_2^2(\hat{h}_c^{(t)}(\rho))^2.
\]
Take expectation and use Lemma \ref{Lemma:var hc}, we can get
\[
\frac{1}{T} \sum_{t=0}^{T-1} 
\mathbb{E}\left(\hat{h}_c^{(t)}(\rho) \right)\left(\lambda^{(t)} - \lambda\right)
\le 
\frac{1}{2\eta_2 T}\left|\lambda^{(0)} - \lambda\right|^2 
+ \frac{\eta_2}{2}\sigma_2^2(H,M,N).
\]
Add it to Lemma \ref{Lemma:bounded_avg_perf},
\begin{align*}
&\frac{1}{T} \sum_{t=0}^{T-1} \Big( V_{r}^{\pi^\star}(\rho) - V_{r}^{(t)}(\rho) \Big)
+ \frac{1}{T} \sum_{t=0}^{T-1} \lambda^{(t)} \left( V_{g}^{\pi^\star}(\rho) - V_{g}^{(t)}(\rho)+h_c^{(t)}(\rho) \right)-\frac{\lambda}{T}\sum_{t=0}^{T-1}h_c^{(t)}(\rho)\\
&=\frac{1}{T} \sum_{t=0}^{T-1} \Big( V_{r}^{\pi^\star}(\rho) - V_{r}^{(t)}(\rho) \Big)\\
&\quad+ \frac{1}{T} \sum_{t=0}^{T-1} \lambda^{(t)} \left( V_{g}^{\pi^\star}(\rho) - V_{g}^{(t)}(\rho)+V_{g}^{(t)}(\rho)-b+h_c^{(t)}(\rho)-\left(V_{g}^{(t)}(\rho)-b\right) \right)\\
&\quad-\frac{\lambda}{T}\sum_{t=0}^{T-1}\left(h_c^{(t)}(\rho)-\left(V_{g}^{(t)}(\rho)-b\right)+\left(V_{g}^{(t)}(\rho)-b\right)\right).
\end{align*}
By taking expectations on both sides and applying Lemmas \ref{Lemma:bias hc} and \ref{Lemma:bias advantage}, together with the facts that $\lambda^{(t)} \leq \bar{\sigma}_{\lambda}$ and $V_{g}^{\pi^\star}(\rho) - b > 0$, we obtain
\begin{align*}
&\quad\frac{1}{T} \sum_{t=0}^{T-1} \mathbb{E}\left( V_{r}^{\pi^\star}(\rho) - V_{r}^{(t)}(\rho) \right)- 2\bar{\sigma}_{\lambda} \epsilon_c(H,M)+\frac{\lambda}{T}\sum_{t=0}^{T-1}\mathbb{E}\left(b-V_{g}^{(t)}(\rho)\right)\\
&\leq\frac{\log |A|}{\eta_1 T}
+ \frac{1}{(1-\gamma)^2 T}
+ \frac{2 \eta_2}{(1-\gamma)^3}+2\frac{1+\bar{\sigma}_\lambda}{1-\gamma}\epsilon_A(M,H)\\
&\quad+\frac{1}{2\eta_2 T}\left|\lambda^{(0)} - \lambda\right|^2 
+ \frac{\eta_2}{2}\sigma_2^2(H,M,N).
\end{align*}
When $\sum_{t=0}^{T-1} \mathbb{E}\!\left[b - V_{g}^{(t)}(\rho)\right] \ge 0$, we set $\lambda = \bar{\sigma}_{\lambda}$; otherwise, we set $\lambda = 0$.
Thus,
\begin{align*}
&V_{r}^{\pi^\star}(\rho)- \mathbb{E}\left( \frac{1}{T} \sum_{t=0}^{T-1} V_{r}^{(t)}(\rho) \right)- 2\bar{\sigma}_{\lambda} \epsilon_c(H,M)+\bar\sigma_\lambda\left[b-\mathbb{E}\left(\frac{1}{T}\sum_{t=0}^{T-1}V_{g}^{(t)}(\rho)\right)\right]_+\\
&\leq\frac{\log |A|}{\eta_1 T}
+ \frac{1}{(1-\gamma)^2 T}
+ \frac{2 \eta_2}{(1-\gamma)^3}+2\frac{1+\bar{\sigma}_\lambda}{1-\gamma}\epsilon_A(M,H)\\
&\quad+\frac{1}{2\eta_2 T}\left|\lambda^{(0)} - \lambda\right|^2 
+ \frac{\eta_2}{2}\sigma_2^2(H,M,N)
\end{align*}
As in section \ref{proof:thm2},we know that $V^{(t)}_r(\rho)$ and $V^{(t)}_g(\rho)$ are linear functions of the occupancy measure $q^{\pi}$ induced by policy $\pi^{(t)}$.
By the convexity of the set of occupancy measures, the average of $T$ occupancy measures and take expectation is still an occupancy measure that induces a policy $\pi'$ with values $V^{\pi'}_r$ and $V^{\pi'}_g$.
Therefore, there exists a policy $\pi'$ such that 
\[
V^{\pi'}_r(\rho) =\mathbb{E} \left(\frac{1}{T} \sum_{t=0}^{T-1} V^{(t)}_r(\rho)\right), 
\quad 
V^{\pi'}_g(\rho) = \mathbb{E} \left(\frac{1}{T} \sum_{t=0}^{T-1} V^{(t)}_g(\rho)\right).
\]

\medskip

Therefore,
\begin{align*}
&V_r^\star(\rho) - V^{\pi'}_r(\rho) + \bar{\sigma}_{\lambda} \big[ b - V^{\pi'}_g(\rho) \big]_+\\
& 
\leq\frac{\log |A|}{\eta_1 T}
+ \frac{1}{(1-\gamma)^2 T}
+ \frac{2 \eta_2}{(1-\gamma)^3}+2\frac{1+\bar{\sigma}_\lambda}{1-\gamma}\epsilon_A(M,H)\\
&\quad+\frac{1}{2\eta_2 T}\left|\lambda^{(0)} - \lambda\right|^2 
+ \frac{\eta_2}{2}\sigma_2^2(H,M,N).
\end{align*}

We note that $\frac{2}{(1-\gamma)\xi} \ge 2\lambda^\star$. 
According to Lemma~\ref{Lemma:constraint_violation}, we obtain
\begin{align*}
&\big[ b - V^{\pi'}_g(\rho) \big]_+\\
& 
\leq\frac{2}{\bar{\sigma}_{\lambda}}\left(\frac{\log |A|}{\eta_1 T}
+ \frac{1}{(1-\gamma)^2 T}
+ \frac{2 \eta_2}{(1-\gamma)^3}+2\frac{1+\bar{\sigma}_\lambda}{1-\gamma}\epsilon_A(M,H)\right.\\
&\left.\quad+\frac{1}{2\eta_2 T}\left|\lambda^{(0)} - \lambda\right|^2 
+ \frac{\eta_2}{2}\sigma_2^2(H,M,N)\right)\\
& 
\leq\frac{2}{\bar{\sigma}_{\lambda}}\left(\frac{\log |A|}{\eta_1 T}
+ \frac{1}{(1-\gamma)^2 T}
+ \frac{2 \eta_2}{(1-\gamma)^3}+2\frac{1+\bar{\sigma}_\lambda}{1-\gamma}\left(L\sqrt{\frac{2 \log M}{M}} + \frac{2G}{M^2} + 2\frac{\gamma^H}{1-\gamma}\right)\right.\\
&\left.\quad+\frac{1}{2\eta_2 T}\left|\lambda^{(0)} - \lambda\right|^2 
+ \frac{\eta_2}{2}\sigma_2^2(H,M,N)\right).
\end{align*}
By taking $\eta_1 = 2\log|A|$ and $\eta_2 = \frac{1-\gamma}{\sqrt{T}}$ and noticing the fact that $\frac{1}{2\eta_2 T}\left|\lambda^{(0)} - \lambda\right|^2\leq\mathcal{O}(\frac{1}{\sqrt{T}})$, we can get the similar result as the optimality gap:
\begin{equation*}
\frac{1}{T} \sum_{t=0}^{T-1} 
\mathbb{E}\!\left[ b - V_{g}^{(t)}(\rho) \right]_+
\;\leq\; 
\mathcal{O}\!\left( \frac{1}{\sqrt{T}} \;+\; L \sqrt{\frac{ \log M}{M}} \;+\;  \frac{\gamma^H}{1-\gamma} \right).
\end{equation*}

\end{document}